\newcommand{\vbf}[1]{\ensuremath{\boldsymbol{\mathrm{#1}}}}
\newcommand{\norm}[1]{\ensuremath{\lVert{#1}\rVert}_{F}^{2}}
\newcommand{\eg}[0]{\emph{e.g.},\xspace}
\newcommand{\ie}[0]{\emph{i.e.},\xspace}
\newcommand{\TopicResponse}{\textsc{TopicResponse}\xspace}
\newcommand{\GGNMF}{\textsc{GG-NMF}\xspace}
\newcommand{\reals}{\ensuremath{\mathbb{R}}\xspace}
\newcommand{\Var}{\ensuremath{\mathrm{Var}}\xspace}
\newcommand{\Exp}{\ensuremath{\mathbb{E}}\xspace}
\newlength\matfield
\newlength\tmplength
\def\matscale{1.}
\newcommand\dimbox[3]{%
    \setlength\matfield{\matscale\baselineskip}%
    \setbox0=\hbox{\vphantom{X}\smash{#3}}%
    \setlength{\tmplength}{#1\matfield-\ht0-\dp0}%
    \fboxrule=1pt\fboxsep=-\fboxrule\relax%
    \fbox{\makebox[#2\matfield]{\addstackgap[.5\tmplength]{\box0}}}%
}
\newcommand\matbox[5]{
    \stackunder{\dimbox{#1}{#2}{$\mathbf{#5}$}}{\scriptstyle(#3\times #4)}%}
}    
\DeclareMathOperator*{\argmin}{argmin}
\DeclarePairedDelimiter{\floor}{\lfloor}{\rfloor}
\newtheorem{thm}{Theorem} % reset theorem numbering for each chapter
\newtheorem{defn}[thm]{Definition} % definition numbers are dependent on theorem numbers
\newcolumntype{P}[1]{>{\centering\arraybackslash}p{#1}}
\begin{document}

\title{TopicResponse: A Marriage of Topic Modelling and Rasch Modelling for Automatic Measurement in MOOCs%\thanks{Grants or other notes
}
%\subtitle{Do you have a subtitle?\\ If so, write it here}

\titlerunning{TopicResponse: Automatic Measurement in MOOCs}        % if too long for running head

\author{Jiazhen He$^{1,3}$        \and
	Benjamin~I.~P.~Rubinstein$^{1}$ \and
	James Bailey$^{1,3}$ \and
        Rui Zhang$^{1}$ \and \\
        Sandra Milligan$^{2}$
}

\authorrunning{Jiazhen He et al.} % if too long for running head

\institute{Jiazhen He \at
              \email{jiazhenh@student.unimelb.edu.au}           
           \and
            Benjamin I. P. Rubinstein, James Bailey, Rui Zhang and Sandra Milligan\at
            \email{\{brubinstein, baileyj, rui.zhang, s.milligan\}@unimelb.edu.au}
                  \\  
                  \\        
		  ${}^1$School of Computing \& Information Systems, The University of Melbourne, Australia              \\
                            ${}^2$Melbourne Graduate School of Education, The University of Melbourne, Australia \\
                            ${}^3$Data61/CSIRO, Australia
}

\date{Received: date / Accepted: date}
% The correct dates will be entered by the editor

\maketitle

\begin{abstract}
%This paper proposes adapting topic models to the psychometric testing of MOOC students based on their online forum postings. We explore the suitability of using automatically discovered topics from MOOC forums to measure students' academic abilities in a subject domain, under the Rasch model, which is the most common Item Response Theory (IRT) model. The challenge is to discover topics that can fit the Rasch model as evidence of measuring educationally meaningful ability. To solve this challenge, we combine the Rasch model with non-negative matrix factorisation (NMF)-based topic modelling. We demonstrate the suitability of our approach with both quantitative experiments on three Coursera MOOCs, and with qualitative results of topic interpretability on a Discrete Optimisation MOOC.
This paper explores the suitability of using automatically discovered topics
from MOOC discussion forums for modelling students' academic
abilities. The Rasch model from psychometrics is a popular generative probabilistic
model that relates latent student skill, latent item difficulty, and observed student-item
responses within a principled, unified framework.
According to scholarly educational theory, discovered topics can be 
regarded as appropriate measurement items if (1) students' participation
across the discovered topics is well fit by the Rasch model, and if (2) 
the topics are interpretable to subject-matter experts as being educationally
meaningful. Such Rasch-scaled topics, with associated difficulty levels, could be of
potential benefit to curriculum refinement, student assessment and personalised feedback.
The technical challenge that remains, is to discover meaningful topics that simultaneously achieve good
statistical fit with the
Rasch model. To address this challenge, we combine the Rasch model with non-negative
matrix factorisation based topic modelling, jointly fitting both models. We demonstrate
the suitability of our approach with quantitative experiments on data from three Coursera
MOOCs, and with qualitative survey results on topic interpretability on a Discrete Optimisation MOOC.

\keywords{MOOCs \and Topic Modelling \and Matrix Factorisation \and Psychometrics \and Item Response Theory \and Rasch Model}
% \PACS{PACS code1 \and PACS code2 \and more}
% \subclass{MSC code1 \and MSC code2 \and more}
\end{abstract}

\section{Introduction}
\label{intro}
Massive Open Online Courses (MOOCs) have attracted wide attention due to the promise of delivering education at scale. 
This new learning environment produces a variety of data (\eg demographic data, student engagement, and forum activities), which offer new opportunities to understanding student learning. 
%The lack of instructor-student interactivity has made assessment tools serve not only for grading, but also as an alternative and main way to provide feedback to students~\citep{chauhan2014massive}.
While quizzes and assignments have dominated summative assessment, the many sources of rich student engagement data generated in MOOC platforms present new views on student learning and avenues for formative feedback. 
%This paper adapts topic modelling to the psychometric testing of MOOC students based on their online forum postings. We automatically generate a set of topics to measure students' academic abilities. 
%This paper examines whether students' participation across automatically discovered forum topics is suitable for inferring the positions of students and topics (items) on a latent ability scale using the Rasch model.
This paper explores whether students' participation across automatically discovered MOOC forum topics is suitable for modelling academic ability. %Topics that explain student abilities may themselves lead to curriculum refinement.
%It is important to understand how MOOC students with different skill levels use forums. This helps with monitoring the health of a subject and for curriculum design.

%\hl{t1}
%Recent research on measurement of student learning in education communities
%explore the suitability of using latent complex patterns of engagement instead of traditional visible assessment tools (e.g. quizzes and assignments), to measure a hypothesised distinctive and complex learning skill of promoting learning
%in MOOCs. This process is often human-intensive and time-consuming.
%
%Inspired by this research, together with the importance of MOOC discussion
%forums for understanding student learning and providing feedback, we investigate
%whether students' participation across forum discussion topics can indicate
%their academic ability. 

Our work is inspired by the importance of forum discussions as an active learning activity, and recent research on quantitative measurement of student learning in the education community. 
In particular, \romannum{1}) 
MOOC discussion forums, as the main platform for student-instructor and student-student interactions, %seeking helps and providing helps, 
is of importance in gaining insights into student learning.
\romannum{2}) Recent research in education~\citep{milligan2015crowd} suggests that a distinctive and complex learning skill is required to promote learning in MOOCs. Educators are interested in whether and how the possession of this complex learning skill may be evidenced by latent complex patterns of engagement, instead of traditional assessment tools such as quizzes and assignments.
\romannum{3}) 
%in addition to using well-known explanatory variables (\eg visible quizzes and assignments).
%and not just explanatory variables (\eg visible quizzes and assignments) by themselves. 
In order to validate such a hypothesis, measurement theory can be used \citep{rasch1993probabilistic,wright1982rating}. A set of items is handcrafted from forum activities (\eg ``\textit{contributed a post attracting votes from others}'' and ``\textit{made repeated thread visits in more than half the weeks}''), and calibrated (\eg deleted or changed) to fit a measurement model as evidence as to whether the set of items is appropriate for measuring the complex learning skill~\citep{milligan2015crowd}.
%as evidence as to whether automatically discovered topics are appropriate for measuring student ability
% as evidence of reliable measures~\citep{milligan2015crowd}. 
This process is human-intensive and time-consuming as reflected by Figure~\ref{fig:flow}.
%Measurement theory can be used to build a reliable measure of this learning skill from student engagement data.
%demonstrating a reliability and validity of measurement

\begin{figure}[!htb]
    \centering
    \includegraphics[scale=0.65]{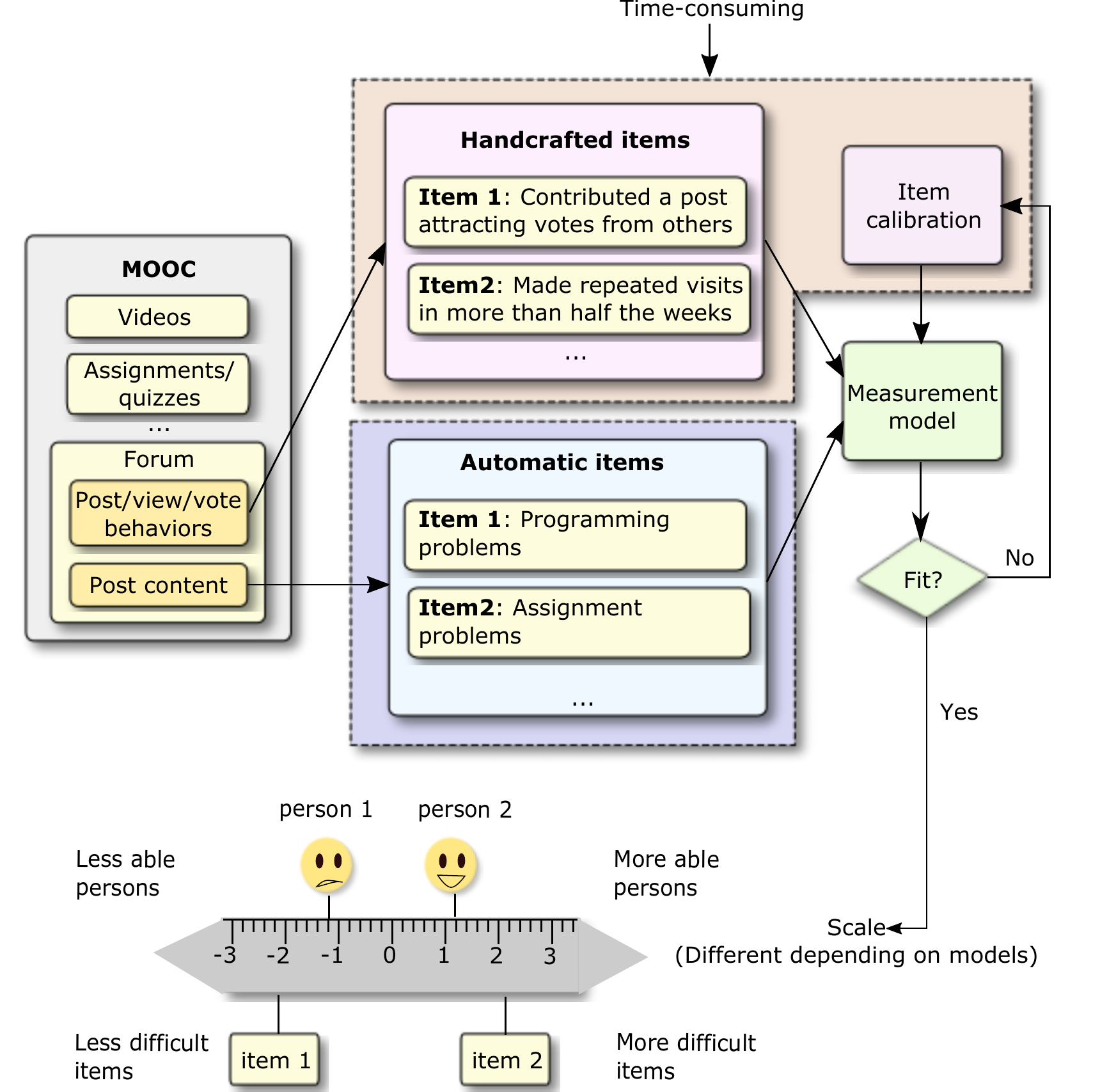}
    \caption{Workflow for devising items manually versus automatically discovering topics as items for measurement. Traditionally, a set of items are handcrafted from MOOC forum behaviours, and then the students' dichotomous responses on the items are examined using the Rasch model. If the model fits well, then the students and items can be compared on an inferred scale (the ruler). Otherwise the items are refined (changed, added or deleted) manually until model fit. The process of handcrafting and calibration is time-consuming. Instead, we aim to automatically generate topics from discussion posts as items that fit the Rasch model by design.}
    \label{fig:flow}
\end{figure}

Driven by these observations, we investigate whether students' participation in automatically discovered forum topics can be used as an instrument to model students' ability.
If students' participation across the discovered topics fit a measurement model (in this paper, we use the Rasch model) in terms of statistical effectiveness, and the topics are interpretable to subject-matter experts by way of qualitative effectiveness, then the discovered topics can be regarded as useful items for measurement. 
The resulting scaled topics, endowed with estimated difficulty levels, can assist in subsequent curriculum refinement, student assessment, and personalised feedback. 

The technical challenge, then, is to automatically discover topics such that students' participation across them fit the Rasch model.
 \citet{he2016moocs} have adapted topic modelling of students' online forum postings, such that students' participation across these topics conforms to the Guttman scale. %, as statistical evidence of reliability. 
 %we have adapted NMF-based topic modelling to the psychometric testing of MOOC students based on their online forum postings under the measurement model of Guttman scale. 
However, the Guttman scale is widely regarded 
%to be the most restrictive IRT model and 
as overly-idealised and impractical in the real world. 
%, and a deterministic IRT model.
In contrast the Rasch model, one of the simplest item response theory (IRT) models and the basis for many extensions, has been widely used in education and psychology.
It is a generative probabilistic model that represents student responses as noisy observations of latent student abilities related to item difficulties. It can be viewed as a stochastic counterpart to the Guttman scale, permitting measurement error.
If a person's ability level is higher than an item's difficulty, the person will answer the item correctly in the Guttman scale, while in the Rasch model there is a certain probability of incorrect response.
%Thus, Rasch model can be viewed as Guttman scale with allowance for measurement error
%The probability is modelled as a logistic function of the difference between the 
%Such errors allow more powerful measurement, interval measurement
%%Such an error model  leads to a higher measurement level --- interval measurement, like stochastic resonance where faint and hard to detect signals can be amplified  background noise~\citep{fisher1992resonance}.
While the Guttman scale only permits ordering of persons and items, Rasch models the locations on the scale and hence also meaningful differences~\citep{scholten2011admissible}. 
%As such, Rasch could be potentially used for adaptive assessment and providing personalised feedback. 
%The Rasch model brings several advantages over the Guttman scale, enabling adaptive and personalised feedback. 
The algorithm proposed for the Guttman scale~\citep{he2016moocs} does not adapt readily for Rasch modelling. %, as the Rasch model is more statistically and computationally complex. 
Instead we propose the \TopicResponse algorithm, which simultaneously performs non-negative matrix factorisation and Rasch model fitting. 
The main contributions of this paper include:

%In order to discover topics such that students' participation across them fits the Rasch model, we propose the \TopicResponse algorithm, which combines topic modelling with Rasch modelling. In particular, we opt to focus on non-negative matrix factorisation (NMF) approaches to topic modelling, as they admit natural integration with Rasch modelling. 

%To be statistically effective,
%we automatically discover topics
%in such a way that students' participation across topics fits the Rasch model
%by combining the NMF-based topic modelling and Rasch modelling
%, to automatically discover topics such that students' participation across them fits the Rasch model. In particular, we are interested in measuring students' academic abilities in the subject at hand. Given the students' online postings, our goal is to discover topics on which dichotomous (posting on a topic or not) student responses fit a Rasch model.
%Such well-scaled Rasch items can be useful for curriculum refinement, by providing information about topics students find unexpectedly easy or challenging. 
%Focusing on such topics could help with intervention with student at risk of dropping out~\citep{he2015identifying}.

%The Rasch model brings several advantages over the Guttman scale, enabling adaptive and personalised feedback. 
%The algorithm we previously proposed for the Guttman scale in Chapter~\ref{ch:guttman} cannot be used or adapted readily for Rasch modelling, as the Rasch model is more statistically and computationally complex. 

\begin{itemize}
    \item The first study that combines topic modelling with Rasch modelling in psychometric testing: generating topics that measure students' academic abilities based on online forum postings;
    \item An algorithm \TopicResponse fitting NMF and Rasch models simultaneously, for which we provide a proof of convergence; and % (as evidence of educationally meaningful ability); and 
    \item Quantitative experiments on three Coursera MOOCs covering a broad swath of disciplines, establishing statistical effectiveness of our algorithm, and qualitative results on a Discrete Optimisation MOOC, supporting interpretability.
\end{itemize}

%a measurement model can be used as evidence as to whether automatically discovered topics are appropriate for measuring student ability.

%\hl{ttt}
%To establish the statistical effectiveness, students' participation in topics are required to fit a measurement model as evidence of reliability as shown in Figure~\ref{fig:flow}. The resulting scaled topics can show difficulty levels produced from measurement models, and could be helpful for curriculum refinement, student assessment, and personalised adaptive feedback. The challenge is how to automatically discover topics such that students' participation in them fit a measurement model.

%The remainder of the paper is structured as follows. 
We review related work in Section~\ref{sec:relwork}. In Section~\ref{sec:preps}, we present preliminaries and formalise our problem. Our algorithm is introduced in Section~\ref{sec:nmfraschg}, and evaluated in Section~\ref{sec:exp}. Section~\ref{sec:con} concludes the paper.

\section{Related Work}
\label{sec:relwork}
Many studies have focused on item response theory (IRT) or MOOC data analysis, but research on automatic discovery of items for measurement in MOOCs has received little attention. The main relevant work to this paper is \citep{he2016moocs}, where NMF-based topic modelling is adapted and used for Guttman scaling~\citep{guttman1950basis} in order to measure students' latent abilities based on their MOOC forum posts. A major drawback of that work is that the Guttman scale is regarded to be the most restrictive IRT model and is overly idealised: it neither serves as the basis of more sophisticated (probabilistic) models, nor is it practical in the real world as a deterministic model.
While the Guttman scale only models ordering of persons and items, the (probabilistic) Rasch model permits the interpretation of the differences between items and people~\citep{scholten2011admissible}. 
The Rasch model is a generative model that models student responses as noisy observations of latent student abilities in relation to item difficulties.
The algorithm for Guttman scaling~\citep{he2016moocs} does not naturally extend to incorporating Rasch modelling.

\subsection{Item Response Theory (IRT)}
The field of IRT studies statistical models for measurement in education and psychology. Such models specify the probability of a person's response on an item as a mathematical function of the person's and item's latent attributes. 
A principal goal of IRT is to create a scale on which persons and items can be placed and compared meaningfully. 
IRT has been used for computerised adaptive testing (CAT), which aims to accurately and efficiently assess individuals' trait levels, and is used in the Scholastic Aptitude Test (SAT), Graduate Record Examination (GRE), while \cite{chen2005personalized} proposed a personalised e-learning system based on IRT considering course material difficulty and learner ability.

As a statistical model, IRT has attracted attention in machine learning recently.
\citet{bergner2012model} applied model-based collaborative filtering to estimate the parameters for IRT models, considering IRT as a type of collaborative filtering task, where the user-item interactions are factorised into user and item parameters.
\citet{bachrach2012grade} proposed a probabilistic graphical model that jointly models the difficulties of questions, the abilities of participants and the
correct answers to questions in aptitude testing and crowdsourcing settings.
While in MOOCs, \citet{champaign2014correlating} investigated the correlations between resource use and students' skill and relative skill improvement measured by IRT.
\cite{colvin2014comparing} analysed pre-post test questions using IRT, to compare the learning in MOOCs and a blended on-campus course.
Past work has tended to focus on using already-devised items to measure student ability under IRT models, while we are interested in automatically discovering content-based items that are characteristic of measurement in MOOCs~\citep{milligan2015crowd}.

\subsection{MOOC Forums}
MOOC forums have been of great interest recently, due to the availability of rich textual data and social behaviour. 
Various studies have been conducted such as sentiment analysis, community finding, question recommendation, answers \& intervention prediction. 
\citet{wen2014sentiment} use sentiment analysis to monitor students' trending opinions towards the course and to correlate sentiment with dropouts over time using survival analysis.
\citet{yang2015exploring} predict students' confusion during learning activities as expressed in discussion forums, using discussion behaviour and clickstream data; they further explore the impact of confusion on student dropout. 
\citet{ramesh2015weakly} predict sentiment in MOOC forums using hinge-loss Markov random fields. %\citet{yang2014question} study question recommendation in discussion forums based on matrix factorisation. 
\citet{gillani2014communication} find communities using Bayesian Non-Negative Matrix Factorisation.
\citet{yang2014question} recommend questions of interest to students by designing a context-aware matrix factorisation model considering constraints on students and questions.
MOOC forum data has also been leveraged in the task of predicting accepted answers to forum questions~\citep{jendersanswer} and predicting instructor intervention~\citep{chaturvedi2014predicting}.
Despite the variety of studies, little machine learning research has explored forum discussions for the purpose of measurement in MOOCs.

\section{Preliminaries and Problem Formulation}
\label{sec:preps}
We choose NMF as the basic approach to discover forum topics due to the interpretability of the topics produced, and the extensibility of its optimisation formulation. 
For the IRT model for measurement, we focus on the Rasch model for dichotomous data due to its popularity, and due to being the basis for many extensions in education and psychology.
We next overview the Rasch model for dichotomous data and NMF, and then define our problem.

\subsection{Rasch Model}\label{sec:rasch}
The Rasch model~\citep{wright1982rating, bond2001applying} for dichotomous data (correct/incorrect, agree/disagree responses) specifies the probability of a person's positive response (correct, agree) on an item as a logistic function of the difference between the person's ability and item difficulty, 
\begin{equation}\label{equ:nmfguttmanobj}
\begin{aligned}
p_{ij}=P(X_{ij}=1|\beta_{i},\theta_{j})
=\frac{1}{1+\exp\left(-\left(\theta_{j}-\beta_{i}\right)\right)}\enspace,
\end{aligned}\end{equation}
where latent $\theta_j\in\reals$ denotes person $j$'s ability, latent $\beta_i\in\reals$ denotes item $i$'s difficulty, $X_{ij}\in\{0,1\}$ denotes person $j$'s observed random response on item $i$, and $p_{ij}$ is the probability of this response being positive.
This probability is best illustrated with the Item Characteristic Curve (ICC) as depicted in Figure~\ref{fig:icc} and commonly used in the field of IRT. It can be seen that the higher a person's ability is, relative to the difficulty of an item, the higher the probability of a positive response on that item. When a person's ability is equal to an item's difficulty on the latent scale, positive responses are observed with 0.5 probability.

\begin{figure}[!htb]
    \centering
    \includegraphics[scale=0.50]{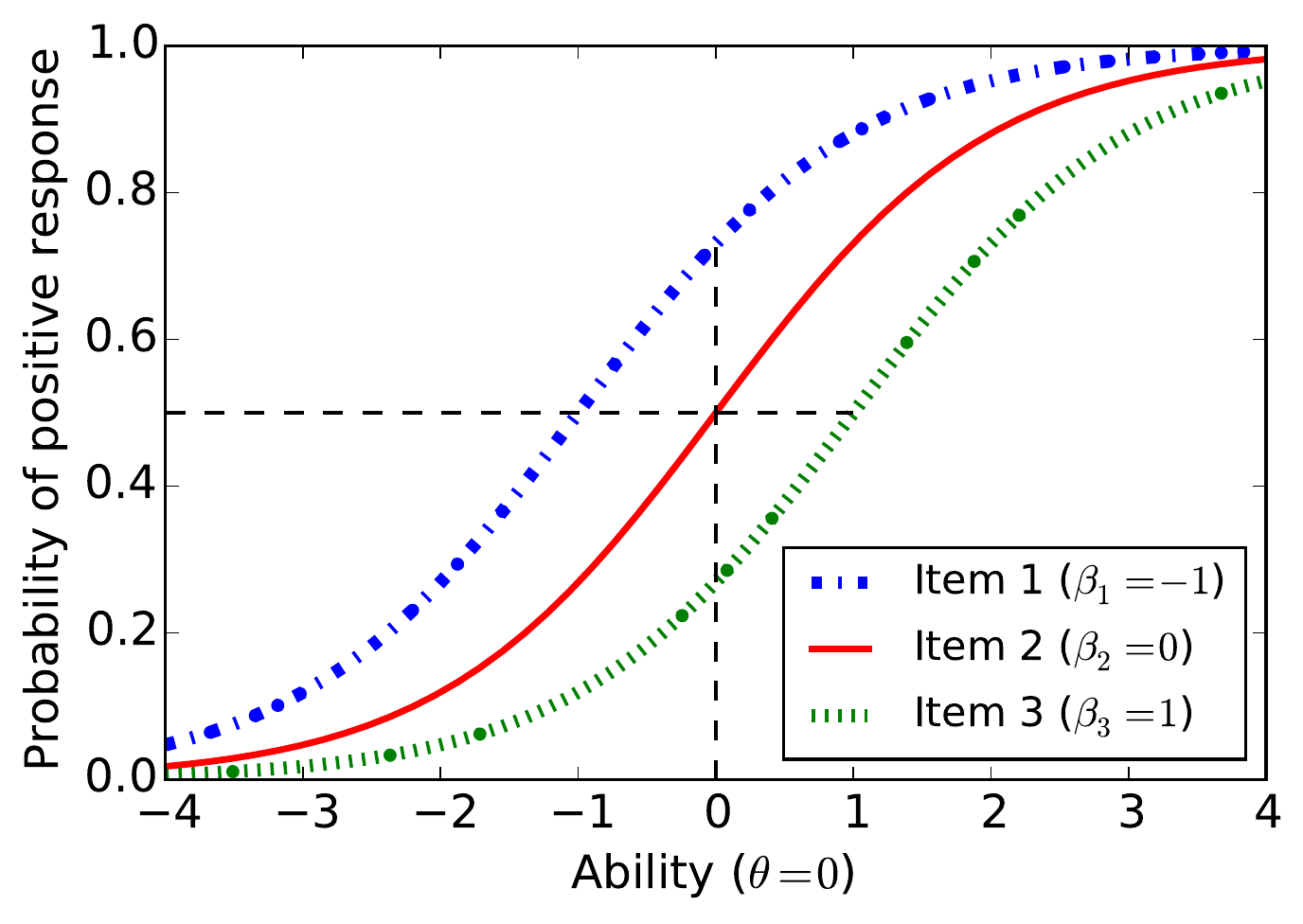}
    \caption{The Item Characteristic Curves for three items (item 1--the easiest, 3--the most difficult). A person with ability $\theta=0$ has 0.5 probability of responding positively on item 2 with difficulty $\beta=0$, and higher (and lower) probability on the easiest item 1 (most difficult item 3, respectively).}
    \label{fig:icc}
\end{figure}

The latent measurement scale is analogous to the ruler shown in Figure~\ref{fig:flow}, where persons and items are placed together and can be compared meaningfully.
The Rasch model provides a way to construct the ruler using persons' responses on items.
% with several properties.
Persons and items are located along the scale according to their abilities $\theta_j$ and difficulties $\beta_i$ respectively.

The Rasch model can be viewed as a stochastic counterpart to the Guttman scale.
For example, in Figure~\ref{fig:flow}, person 1 and person 2 will have positive response on item 1 in a Guttman scale.  While in a Rasch scale, there are certain probabilities that person 1 and person 2 will enjoy positive responses on item 1, with person 1's probability being higher.
%
%if a person's ability level is higher than an item's difficulty, the person will answer the item correctly in the Guttman scale, while in the Rasch model, there is a certain probability that the person will answer item incorrectly.
This error model leads to a higher level of measurement scale: the interval scale, where we can tell how much more able person 2 is compared to person 1. From the Guttman scale, by comparison, we can tell that person 2 is better than person 1 but not by how much.

%\begin{figure}[b!]
%    \centering
%    \includegraphics[scale=0.70]{}
%    \caption[Representation of a latent ability scale.]{Representation of a latent ability scale. There is an increasing difficulty and ability from left to right.}
%%        Persons with low ability are located on the left
%%         the ability continuum. Thick lines indicate, from left to right, the location of items of increasing difficulty. Arrows indicate the location of subjects A, B, and C on the ability continuum.
%    \label{fig:ruler}
%\end{figure}

\begin{table}[!htb]
    \centering
    \scriptsize
    \renewcommand{\arraystretch}{1.5}
    \caption{An example of items for measuring basic mathematical ability, students' responses, initial item difficulty estimates and student ability estimates.}
    \begin{tabular}{|P{1.7cm}|p{0.9cm}p{0.89cm}p{0.89cm}p{0.89cm}p{0.89cm}|P{1.30cm}|P{1.35cm}|}
        \hline
        & \textbf{Item 1}& \textbf{Item 2} & \textbf{Item 3} & \textbf{Item 4} & \textbf{Item 5} & \textbf{Proportion}  & \textbf{Ability $\theta_j^0$}\\
        & \textbf{(Count)}  & \textbf{($+$)} & \textbf{($-$)} & \textbf{($\times$)} & \textbf{($\div$)} & \textbf{correct $p_{\theta_j}$} & \textbf{$\log\left(\frac{p_{\theta_j}}{1-p_{\theta_j}}\right)$}\\ 
        \hline
        
        \textbf{Person 1} & 1  & 0 & 0 & 0 & 0 & 0.20 & -1.39\\
        \textbf{Person 2} & 1 & 1 & 0 & 0 & 0 & 0.60 & 0.41\\
        \textbf{Person 3} & 0 & 1 & 1 & 0 & 0 & 0.60 & 0.41\\
        \textbf{Person 4} & 1 & 0 & 1 & 1 & 0 & 0.67 & 0.71\\
        \textbf{Person 5} & 1 & 1 & 1 & 0 & 1 & 0.80 & 1.39\\
        \hline
        \textbf{Proportion correct $p_{\beta_i}$} & \multirow{2}*{0.80} & \multirow{2}*{0.33} & \multirow{2}*{0.33} & \multirow{2}*{0.20} & \multirow{2}*{0.20} & & \\
        \hline
        \textbf{Difficulty $\beta_i^0$ $\log\left(\frac{1-p_{\beta_i}}{p_{\beta_i}}\right)$} & \multirow{2}*{-1.39} & \multirow{2}*{0.71} & \multirow{2}*{0.71} & \multirow{2}*{1.39} & \multirow{2}*{1.39} & &\\
        \hline
    \end{tabular}%	
    \label{tab:rasch}%
\end{table}%

Table~\ref{tab:rasch} further illustrates our setup, with an example of items for measuring basic mathematical ability, alongside hypothetical students' responses. 
The initial estimates (see Equations~\ref{equ:thetaini},\ref{equ:betaini} below) for item difficulties and person abilities are produced on a logit scale.
For example, if person 1 responds to the items positively 20\% of the time and negatively 80\% of the time, then the person's initial ability estimate is approximately $-1.39$ by taking the natural logarithm of the odds ratio for positive response $\frac{0.2}{0.8}$.

\subsubsection{Rasch Estimation}\label{sec:raschesti}
Given an observed response matrix \vbf{x}=$[x_{ij}]$ (\eg Table~\ref{tab:rasch}), a basic goal is to estimate the person and item parameters $\theta_j$ and $\beta_i$. The most common estimation methods are based on maximum-likelihood estimation, including: jointly maximum-likelihood (JML) estimation, conditional maximum-likelihood (CML) estimation and marginal maximum-likelihood (MML) estimation~\citep{baker2004item}. In this paper, we focus on JML.

Under the assumption that a sample of $n$ persons is drawn independently at random from a population of persons possessing a latent skill attribute, and the assumption of local independence that a person's responses to different items are statistically independent, the probability of an observed data matrix $\vbf{x}=[x_{ij}]$ with $k$ items and $n$ persons is the product of the probabilities of the individual responses, and can be given by the joint likelihood function 
\begin{equation}
\begin{aligned}
\mathcal{L}(\vbf{\beta},\vbf{\theta} | \vbf{x}) = &
\prod_{i=1}^{k} \prod_{j=1}^{n}
P(X_{ij}=1|\beta_i,\theta_j)^{x_{ij}}
\left(1-P\left(X_{ij}=1|\beta_i,\theta_j\right)\right)^{(1-x_{ij})}\\
=&\prod_{i=1}^{k} \prod_{j=1}^{n}
\frac{\exp\left(x_{ij}\left(\theta_j-\beta_i\right)\right)}
{1+\exp(\theta_j-\beta_i)}\enspace.
\end{aligned}
\end{equation}

The log-likelihood function is then
\begin{equation}
\begin{aligned}
\log\mathcal{L}(\vbf{\beta},\vbf{\theta} | \vbf{x}) &= 
\sum_{i=1}^{k}\sum_{j=1}^{n}x_{ij}(\theta_j-\beta_i)-\sum_{i=1}^{k}\sum_{j=1}^{n}\log(1+\exp\left(\theta_j-\beta_i)\right)\enspace.
\end{aligned}
\end{equation}

The parameters of the Rasch model can be estimated by joint maximum likelihood---maximisation of this expression---using Newton-Raphson~\citep{bertsekas1999nonlinear}, which yields the following iterative solution for $\beta{i}$ and $\theta{j}$,
\begin{equation}
\beta_{i}^{t+1}=\beta_{i}^{t}-\frac{\sum_{j=1}^{n}(p_{ij}-x_{ij})}
{-\sum_{j=1}^{n}p_{ij}(1-p_{ij})} \quad \text{for} \quad t\geq 0 \enspace,
\end{equation}
\begin{equation}
\theta_{j}^{t+1}=\theta_{j}^{t}-\frac{\sum_{i=1}^{k}(x_{ij}-p_{ij})}
{-\sum_{i=1}^{k}p_{ij}(1-p_{ij})} \quad \text{for} \quad t\geq 0\enspace.
\end{equation}

The convergence to a local optimum (with suitable step sizes) is guaranteed.
The initial estimates of $\theta_j$, $\theta_j^0$ can be obtained by firstly calculating the proportion of items that a person $j$ responded correctly $p_{\theta_j}$, and then taking the natural logarithm of the odds of person $j$'s correct response as shown in Table~\ref{tab:rasch}, which can be formalised as follows:
\begin{equation}\label{equ:thetaini}
    \theta_j^0=\log\left(\frac{p_{\theta_j}}{1-p_{\theta_j}}\right),  \enspace p_{\theta_j}=\frac{r_j}{k}, \enspace r_j=\sum_{i=1}^{k}{x_{ij}}\enspace,
\end{equation}
where $r_j$ denotes the number of items that person $j$ responded to positively.
Similarly, the initial estimates of $\beta_i$, $\beta_i^0$ can be obtained by
\begin{equation}\label{equ:betaini}
    \beta_i^0=\log\left(\frac{1-p_{\beta_i}}{p_{\beta_i}}\right),  \enspace p_{\beta_i}=\frac{s_i}{n}, \enspace s_i=\sum_{j=1}^{n}{x_{ij}}\enspace,
\end{equation}
where $s_i$ denotes the number of persons who responded correctly on item $i$, and $p_{\beta_i}$ denotes the proportion of persons who responded correctly on item $i$. 

For those items receiving no correct responses ($s_i=0$), or no incorrect responses ($s_i=n$), some implementations of the Rasch model will delete the item, while other models handle the situation as follows~\citep{baker2004item}, where $\epsilon$ is a small number (\eg 1.0 is used in our experiments), %Similar processing can be done for $\theta_j$,
\begin{equation*}
s_i=\begin{cases}
\epsilon,  & \text{if}\,\, s_i=0\\
n-\epsilon,  & \text{if}\,\, s_i=n 
\end{cases}\enspace,\enspace\;\;\;\;
r_j=\begin{cases}
\epsilon,  & \text{if}\,\, s_i=0\\
k-\epsilon,  & \text{if}\,\, s_i=k 
\end{cases}\enspace.
\end{equation*} 
These pseudo counts are similar to frequentist Laplace corrections, or (weak) uniform Bayesian priors.

\subsubsection{Evaluating Model Fit}\label{sec:infit}
A set of items is said to measure a latent attribute on an interval scale when there is a close fit between data and model. 
The model-data fit is typically examined using infit and outfit statistics---two types of mean square error statistics----conveying information about the error in the estimates for each individual item and person.

Outfit and infit test statistics are defined for each item and person to test the fit of items and persons under the Rasch model, by carefully summarising the Rasch residuals.
The Rasch residuals are the differences between the observed responses and the expected responses according to the Rasch model. Formally, the expected response of person $j$ on item $i$ under the Rasch model $\Exp[x_{ij}]$ (abbreviated to $E_{ij}$) is $\Exp[X_{ij}]=p_{ij}$.
The residual between the observation $x_{ij}$ and the expected response $E_{ij}$ is then $R_{ij}=x_{ij}-E_{ij}$.
Standardised residuals are often used to assess the fit of a single person-item response
\begin{equation}
Z_{ij}=\frac{X_{ij}-E_{ij}}{\sqrt{\Var(X_{ij}-E_{ij})}}=\frac{R_{ij}}{\sqrt{\Var(X_{ij})}}\enspace,
\end{equation}
where $\Var(X_{ij})=p_{ij}(1-p_{ij})$
denotes the variance of $X_{ij}$ (abbreviated to $S_{ij}$).

The outfit of item $i$ summarises the squared standardised residuals, averaged over $n$ persons,
\begin{equation}
\text{Outfit}_{i}=\frac{1}{n}\sum_{1}^{n}Z_{ij}^2=
\frac{1}{n}\sum_{1}^{n}\frac{R_{ij}^2}{S_{ij}}\enspace.
\end{equation}
Typical treatments assume standardised residuals $Z_{ij}$ approximately following a unit normal distribution. Their sum of squares therefore approximately follows a $\chi^2$ distribution. Dividing this sum by its degrees of freedom yields a mean-square value, with an expectation of 1.0 and taking values in the range of 0 to infinity. 

Outfit is sensitive to unexpected responses to items, \eg
lucky guesses (\eg a person responds 111001) or careless sequences of mistakes (\eg a person responds 010100)~\citep{linacre2002infit}. Since outfit is sensitive to the very unexpected observations (outliers), infit was devised to be more sensitive to the overall pattern of responses~\citep{linacre2006misfit}. 
Infit is an information-weighted form of outfit: it weights the observations by their statistical information (model variance) which is larger for targeted observations, and smaller for extreme observations~\citep{bond2001applying}. In this paper, we focus on infit. Formally, the infit of item $i$ is given by
\begin{equation}
\text{Infit}_{i}=\frac{\sum_{j=1}^{n}S_{ij}Z_{i j}^2}{\sum_{j=1}^{n}S_{ij}}
=\frac{\sum_{j=1}^{n}R_{ij}^2}{\sum_{j=1}^{n}S_{ij}}\enspace.
\end{equation}

Both outfit and infit have the expected value of 1.0. Values larger than 1.0 indicate model underfitting, \ie data is less predictable than the model expects, while values less than 1.0 indicate overfitting, \ie observations are highly predictable~\citep{wright1994reasonable}. Conventionally, the acceptable range is usually taken to be [0.7,1.3] or [0.8,1.2] depending on application.

\subsection{Non-Negative Matrix Factorisation (NMF)}
Given a non-negative matrix $\vbf{V}\in \mathbb{R}^{m\times n}$  and a positive integer $k$, NMF factorises $\vbf{V}$ into the product of a non-negative matrix $\vbf{W}\in \mathbb{R}^{m\times k}$ and a non-negative matrix $\vbf{H}\in \mathbb{R}^{k\times n}$ such that
\begin{eqnarray*}
    \vbf{V}\approx \vbf{WH}
\end{eqnarray*}

        \begin{center}
            %\small
            \centering
            $%\renewcommand\textsc{\matscale}{.6}
            \matbox{7}{7}mn{V} \approx 
            \matbox{7}{4}mk{W} \times
            \matbox{4}{7}kn{H}$
            \enspace.
        \end{center}
%{\text{k}}{\text{n}}

A commonly-used measure for quantifying the quality of this approximation is the Frobenius norm between $\vbf{V}$ and $\vbf{WH}$. Thus, NMF involves solving
\begin{eqnarray}
\argmin_{\vbf{W},\vbf{H}} 
\norm{\vbf{V}-\vbf{WH}}  
\quad \ensuremath{\mbox{s.t.\xspace}} \quad \vbf{W}\geq \vbf{0},\ \ \vbf{H}\geq \vbf{0} \enspace. \label{equ:nmf}
\end{eqnarray}
This objective function is convex in $\vbf{W}$ and $\vbf{H}$ separately, but not together. Therefore standard convex solvers are not expected to find a global optimum in general. The multiplicative update algorithm~\citep{{lee2001algorithms}} is commonly used to find a local optimum,
where $\vbf{W}$ and $\vbf{H}$ are updated by a multiplicative factor that depends on the quality of the approximation.

    %\begin{minipage}[t]{.4\textwidth}
       % \centering
%        \includegraphics[scale=0.38]{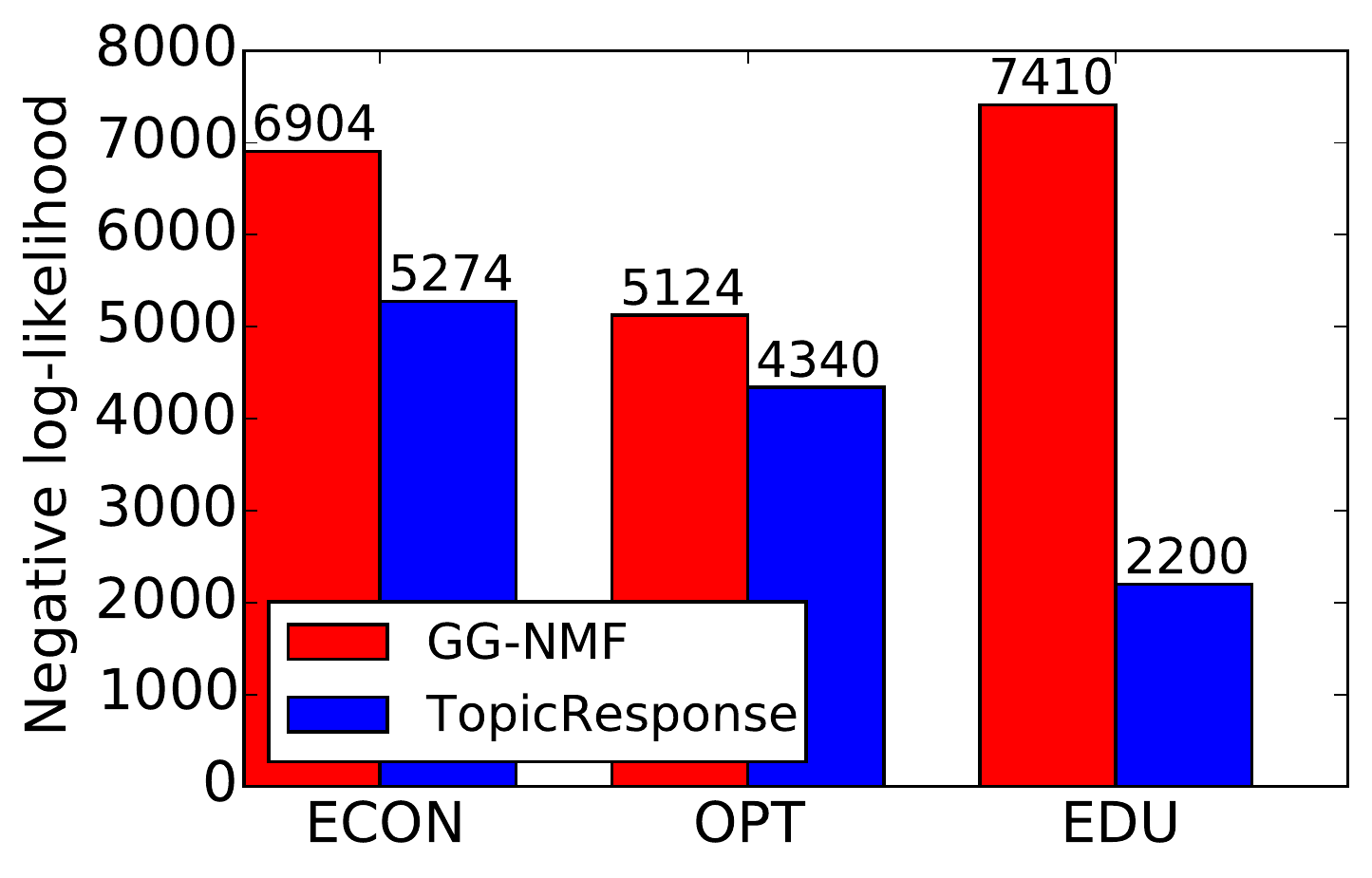}
        %\captionof{figure}{Negative log-likelihood; Smaller is better.}
%        \caption{Negative log-likelihood as goodness of fit; Smaller is better.}
%        \label{fig:c12likelihood}
    %\end{minipage}%
%    \hfill

        \begin{figure}[t!]
		\begin{minipage}[t]{1.0\textwidth}
               %\begin{center}
               \centering
                   \[
                   \renewcommand{\arraystretch}{0.8}
                   \underset{\vbf{V}}{\bordermatrix{
                      \hspace{0.2cm} ~ & \text{stud}1 & \text{stud}2 & \cdots &  \text{stud}n      \cr
                       \text{solver} & 0.26 & 0.11 & \cdots & 0.52      \cr
                       \text{optim} & 0.32 & 0.18 & \cdots & 0.06      \cr
                       \text{code} & 0.68 & 0.01 & \cdots & 0.83      \cr
                       \text{algorithm} & 0.89 & 0.61 & \cdots & 0.44      \cr
                       \vdots & \vdots & \vdots &  \ddots &  \vdots     \cr
                       \text{word}m & 0.22 & 0.54 & \cdots & 0.98    
                    }\vspace{0.5cm}}
                    \]
	    \end{minipage} \\
	    \begin{minipage}[t]{0.48\textwidth}
		    \centering
                   %\end{center}
                   \[
                      \underset{\vbf{W}}{\bordermatrix{
                      \hspace{0.2cm} ~ & \text{topic}1 & \text{topic}2 & \cdots &  \text{topic}k      \cr
                       \text{solver} & 0.22 & 0.01 & \cdots & 0.12      \cr
                       \text{optim} & 0.38 & 0.15 & \cdots & 0.06      \cr
                       \text{code} & 0.18 & 0.05 & \cdots & 0.03      \cr
                       \text{algorithm} & 0.09 & 0.21 & \cdots & 0.01      \cr
                       \vdots & \vdots & \vdots &  \ddots &  \vdots     \cr
                       \text{word}m & 0.02 & 0.04 & \cdots & 0.12      
                    }\vspace{0.5cm}}
                    \]
	    \end{minipage}\hfill
	    \begin{minipage}[t]{0.48\textwidth}
		    \vspace{1em}
                    \centering
                    \[
                     \underset{\vbf{H}}{\bordermatrix{
                      \hspace{0.2cm} ~ & \text{stud}1 & \text{stud}2 & \cdots &  \text{stud}k      \cr
                       \text{topic}1 & 0.83 & 0.17 & \cdots & 0.04      \cr
                       \text{topic}2 & 0.21 & 0.75 & \cdots & 0.16      \cr
                       \vdots & \vdots & \vdots & \ddots & \vdots      \cr
                       \text{topic}k & 0.09 & 0.64 &  \cdots & 0.62      
                    }\vspace{0.5cm}}
                    \]
	    \end{minipage}
                   \caption{Example matrices: word-student $\vbf{V}$, word-topic $\vbf{W}$, topic-student $\vbf{H}$.}
                   \label{fig:examplev}
            \end{figure}

In the present MOOC setting, 
we focus on the students who contributed posts or comments
in forums. For each student, we aggregate all posts or comments that
they contributed. Each student is represented by a bag of words as shown in the example word-student matrix $\vbf{V}$ in Figure~\ref{fig:examplev}, where $m$ represents
the number of words, and $n$ represents the number of students. 
% and k represents the number of topics. 
Using NMF, a word-student matrix $\vbf{V}$ can be factorised
into two non-negative matrices: word-topic matrix $\vbf{W}$ and topic-student matrix $\vbf{H}$.
For each student, the column vector of $\vbf{V}$ is approximated by a linear combination of the columns of $\vbf{W}$, weighted by the components of $\vbf{H}$. Therefore, each column vector of $\vbf{W}$ can be regarded as a topic, and the memberships of students in these topics are encoded by $\vbf{H}$ as shown in Figure~\ref{fig:examplev}.
%, and a student is represented with a linear combination of the $k$ topics (the columns of $\vbf{W}$). 

%problem', u'use', u'solut', u'get', u'time', u'one', u'tri',
%        u'python', u'work', u'optim', u'would', u'color', u'valu', u'node',
%        u'like', u'algorithm', u'run', u'solver', u'think', u'code'
        
%detail 001 optimization submit org coursera assignment class chromeo view
%problem solv knapsack program solver use optim video constraint dynam
%color node graph order cliqu number use iter degre greedi
%python matlab use solver instal command pyc run window java
%file pi line submit urllib2 lib data solver python27 error
%use dp memori column bb solut tabl store run algorithm
%valu solut submit object feedback optim grade problem output warn
%item valu weight capac take estim node knapsack tree first
%opt solut search move custom get time sa temperatur point
%cours thank realli learn assign would great like video lectur
%0.158557 0.158044 0.154547 0.149714 0.142019 0.141877 0.132951 0.127863 0.126816 0.123305
%0.215639 0.069985 0.064875 0.054635 0.049448 0.048719 0.044597 0.042768 0.039213 0.035587
%0.385476 0.230554 0.090683 0.053248 0.048847 0.047992 0.047261 0.041914 0.038071 0.036265
%0.269557 0.078401 0.072312 0.070179 0.068910 0.066662 0.064189 0.061002 0.052931 0.050625
%0.260907 0.252222 0.194577 0.141310 0.137539 0.119641 0.097508 0.089312 0.088092 0.063463
%0.113652 0.111994 0.110897 0.087344 0.070321 0.055304 0.052771 0.046508 0.046241 0.043684

\subsection{Problem Statement}\label{sec:ps}
%We explore the automatic discovery of forum discussion topics to measure students' academic abilities in MOOCs. Our central tenet is that topics can be regarded as useful for measurement, if student responses to these items conform to the Rasch model.  As dichotomous responses are commonly used, we consider item responses to be whether a student posts on the topic or not. Given the posts and comments that students post in MOOC forums, our goal is to generate a set of meaningful topics that yields students' responses that fit the Rasch model, and can be used to measure students' academic abilities.

We seek to explore the feasibility of automatic discovery of forum discussion
topics for measuring students' academic abilities in MOOCs, as quantified by the Rasch
model. Our central tenet is that topics can be regarded as useful items for measuring a latent skill, if student responses to these topics are well fit by the Rasch model, and if the topics are interpretable to domain experts for educational relevance. 
Therefore, we need to discover topics from students' posts and comments in MOOC forums, in
such a way that students' participation across these topics fits the Rasch model.
Student item response records whether a student posts on the corresponding
topic or not. After discovery, topics must then be further assessed for interpretability
to domain experts. Our goal is decision support.

In particular, under the NMF framework, a word-student matrix $\vbf{V}$ can be factorised into two non-negative matrices: word-topic matrix $\vbf{W}$ and topic-student matrix $\vbf{H}$.
Our application requires that the topic-student matrix $\vbf{H}$ be \textbf{a)} binary ensuring the response of a student to a topic is dichotomous; \textbf{b)} useful for measuring students' academic abilities; 
and \textbf{c)} well-fit by the Rasch model.
NMF provides an elegant framework for incorporating these constraints via adding novel regularisation, as detailed in the next section.
A glossary of the symbols most used in this paper is given in Table~\ref{tab:symbols}.
\begin{table}[t!]
    \small
    \centering
    \caption{Glossary of symbols}
    \label{tab:symbols}
    \begin{tabular}{lll}
        \toprule
        Symbol		& 	Description \\
        \toprule
        $m$							    & the number of words \\ 
        $n$							    & the number of students \\
        $k$							    & the number of topics \\
        $\vbf{V}=(v_{ij})_{m\times n}$	&	word-student matrix \\
        $\vbf{W}=(w_{ij})_{m\times k}$	&	word-topic matrix \\
        $\vbf{H}=(h_{ij})_{k\times n}$	&	topic-student matrix \\
        $\vbf{H}_{ideal}=\left(\left(h_{ideal}\right)_{ij}\right)_{1\times n}$	&	matrix for students with ideal number of distinct topics posted\\
        $\vbf{1}_r$ & all-ones matrix with size $1\times n$\\        
        $g_j$ & student $j$'s grade\\
        $\vbf{\beta}=(\beta_{i})_{k}$	& item difficulty vector\\
        $\vbf{\theta}=(\theta_{j})_{n}$	& student ability vector\\
        $X_{ij}$	&	binary response (0 or 1) of person $j$ to item $i$\\
       	$x_{ij}$	&	observed response of person $j$ to item $i$\\
        $p_{ij}$    &   the probability of positive response of person $j$ to item $i$\\
	%$E_{ij}$	&	expected response of person $j$ to item $i$ \\
        %$R_{ij}$	&	residual between $x_{ij}$ and $E_{ij}$\\
        $S_{ij}$   &   variance of $X_{ij}$\\
        $Z_{ij}$ & standardised residual\\
        $\lambda_0, \lambda_1, \lambda_2, \lambda_3$							    & regularisation coefficients \\
        \bottomrule	
    \end{tabular}
\end{table}

\section{The \TopicResponse Algorithm: Joint NMF-Rasch Estimation}\label{sec:nmfraschg}

To favour topics that fit the Rasch model, we jointly optimise wwwboth NMF and Rasch models, which yields the objective function

    \begin{equation*}
    \begin{aligned}
    g(\vbf{W},\vbf{H}, \vbf{\theta}, \vbf{\beta}) =& \norm{\vbf{V}-\vbf{WH}} 
     - 
    \lambda_0f_R(\vbf{H, \theta, \beta})   
    \enspace,
    \end{aligned}
    \end{equation*}
\begin{equation*}\begin{aligned}
f_R(\vbf{H, \theta, \beta})= \sum_{i=1}^{k}\sum_{j=1}^{n}h_{ij}(\theta_j-\beta_i)-\sum_{i=1}^{k}\sum_{j=1}^{n}\log\left(1+\exp\left(\theta_j-\beta_i\right)\right)\enspace,
\end{aligned}\end{equation*}    
where $f_R(\vbf{H, \theta, \beta})$ is the log-likelihood function maximised in Rasch estimation, and $\lambda_0>0$ is a user-specified parameter controlling the trade-off between the quality of factorisation and Rasch estimation.

\paragraph{Weak supervision of item responses.} The fit between student topic responses $\vbf{H}$ and the Rasch model will provide statistical evidence of measuring skill attainment. However, it is difficult to conclude what the topics are measuring without domain knowledge. To favour the topics that can be used to measure students' academic abilities, we impose a constraint on $\vbf{H}$ based on some student grade, which provides an indicator of student's abilities (we discuss sources of auxiliary grade information below). In particular, we assume that there is the following relationship between the ideal number of distinct topics that each student $j$ contributes and their grade $g_j\in[0,100]$,
\begin{eqnarray*}
    ({h}_{ideal})_{1j}=\min\left\{\floor*{\frac{g_j+width}{width}}, k-1\right\}\enspace,\enspace
    width=\frac{100}{k-1}\enspace,
\end{eqnarray*}
where $\vbf{H}_{ideal}$ is a $1\times n$ matrix, denoting the ideal number of distinct topics posted by students. For example under $k=10$ items, student $j$ scoring $g_j=45$ should post on a number of topics $(h_{ideal})_{1j}=5$. 
The minimum and maximum number of different topics that a student $j$ posted is 1 and $k-1$ respectively. This is motivated by the initialisation of $\vbf{\theta}$ and $\vbf{\beta}$ as illustrated in Section~\ref{sec:raschesti}, where positive responses on 0 or $k$ topics is undesirable.

This supervision constraint is markedly weaker than a similar constraint found in \citep{he2016moocs}, as demonstrated in
Figure~\ref{fig:hidealcomp}. 
\cite{he2016moocs} leverage the student grade to exactly determine the item responses for the Guttman scale. The Guttman scale, as a deterministic model, requires that if a student can get a difficult item correct, they can also achieve correct responses on all easier items. This assumption is very restrictive, and rarely makes sense in practice. The Rasch model allows errors in the responses; and only constrains the number of distinct topics posted by a student, rather than the exact response pattern.

Most (MOOC) courses conduct multiple forms of assessment throughout the duration of teaching. For example, weekly quizzes, take-home assignments, mid-term tests, projects, presentations, etc. In the large-scale MOOC context, such evaluations may be peer-assessed. Students often enter courses with some cumulative grade-point average that may be (loosely) predictive of future performance. Any of these readily-available sources of student information could be reasonably used to seed $\vbf{H}_{ideal}$. Even final course grades could be used, particularly when the ultimate application of \TopicResponse is not measuring students, but refining curriculum.

\begin{figure}[b!]
               \begin{center}
                   \[
                   \vbf{H}_{ideal}~\text{for the Guttman scale}
                  % (\text{example})
                   \renewcommand{\arraystretch}{0.8}
                   \bordermatrix{
                      \hspace{0.2cm} \text{grade} & 8 & 25 & 46 & 67 & 89 &   98     &    78    &    35    &  55      \cr
                       ~ & 1 & 1 & 1 & 1 & 1 & 1 & 1 & 1 & 1      \cr
                       ~ & 0 & 1 & 1 & 1 & 1 & 1 & 1 & 1 & 1      \cr
                       ~ & 0 & 1 & 1 & 1 & 1 & 1 & 1 & 1 & 1      \cr
                       ~ & 0 & 0 & 1 & 1 & 1 & 1 & 1 & 1 & 1      \cr
                       ~ & 0 & 0 & 1 & 1 & 1 & 1 & 1 & 0 & 1      \cr
                       ~ & 0 & 0 & 0 & 1 & 1 & 1 & 1 & 0 & 1      \cr
                       ~ & 0 & 0 & 0 & 1 & 1 & 1 & 1 & 0 & 0      \cr
                       ~ & 0 & 0 & 0 & 0 & 1 & 1 & 1 & 0 & 0      \cr
                       ~ & 0 & 0 & 0 & 0 & 1 & 1 & 0 & 0 & 0      \cr
                       ~ & 0 & 0 & 0 & 0 & 0 & 1 & 0 & 0 & 0 
                    }
                    \]
                \end{center}
                
                \begin{center}
                    \[
                    \vbf{H}_{ideal}~\text{for the Rasch model}
                   % (\text{example})
                    \renewcommand{\arraystretch}{0.8}
                    \bordermatrix{
                        \hspace{0.5cm} \text{grade}  & 8 & 25 & 46 & 67 & 89 &   98     &    78    &    35    &  55      \cr
                        ~ & 1 & 3 & 5 & 7 & 9 & 9 & 8 & 4 & 5      
                    }
                    \]%1.72,  3.25,  5.14,  7.03,  9.01,  9.82,  8.02,  4.15,  5.95
                \end{center}
                    \caption{An example of $\vbf{H}_{ideal}$ in the Guttman scale and the Rasch model.}
                    \label{fig:hidealcomp}
\end{figure}

In order to encourage satisfaction of the $\vbf{H}_{ideal}$ soft constraint on topic responses, we introduce a regularisation term on $\vbf{H}$, namely $\norm{\vbf{1}_r\vbf{H}-\vbf{H}_{ideal}}$.

\paragraph{Quantising \& Regularising the Response Matrix.}  We introduce regularisation term $\norm{\vbf{W}}$, commonly used to prevent overfitting in NMF. To encourage binary solutions, we impose an additional regularisation term 
$\norm{\vbf{H}\circ\vbf{H}-\vbf{H}}$, where operator $\circ$ denotes the Hadamard product. 
Binary matrix factorisation (BMF) is a variation of NMF, where the input matrix and the two factorised matrices are all binary. 
Our approach is inspired by those of \cite{zhang2007binary} and \cite{zhang2010binary}. Our added term equals $\norm{\vbf{H}\circ\left(\vbf{H}-\vbf{1}\right)}$, which is minimised by (only) binary $\vbf{H}$.

\paragraph{TopicResponse Model.} We have the following regularisations:
\begin{itemize}
    \item $\norm{\vbf{1}_r\vbf{H}-\vbf{H}_{ideal}}$ to encourage a grade-guided $\vbf{H}$; %, where $\vbf{1}_r$ is all-ones matrix with dimensions $1\times n$, and $\vbf{H}_{ideal}$ is a constant vector with ideal total number of topics posted by each student. $\norm{\vbf{1}_r\vbf{H}}$ sums over topics posted by each student;
    \item $\norm{\vbf{W}}$ to prevent overfitting; and
    \item $\norm{\vbf{H}\circ\vbf{H}-\vbf{H}}$ to encourage a binary item-response solution. %, where operator $\circ$ denotes the Hadamard product.
\end{itemize}
These terms together with joint NMF-Rasch estimation yield final objective
    \begin{equation}\label{equ:fun2}
    \begin{aligned}
    f(\vbf{W},\vbf{H}, \vbf{\theta}, \vbf{\beta}) =& \norm{\vbf{V}-\vbf{WH}} 
    -
    \lambda_0f_R(\vbf{H, \theta, \beta})
    + 
    \lambda_1\norm{\vbf{W}} \\
    &+\lambda_2\norm{\vbf{1}_r\vbf{H}-\vbf{H}_{ideal}} 
    +
    \lambda_3\norm{\vbf{H}\circ\vbf{H}-\vbf{H}}
    \enspace,
    \end{aligned}
    \end{equation}
where $\lambda_1, \lambda_2, \lambda_3>0$ are user-specified regularisation parameters, with primal program
\begin{equation}\label{equ:nmfraschprog}
	\argmin_{\vbf{W},\vbf{H},\vbf{\theta},\vbf{\beta}} f(\vbf{W},\vbf{H}, \vbf{\theta}, \vbf{\beta}) 
\ \ \  \mbox{s.t.} \ \ \   \vbf{W}\geq \vbf{0},\ \vbf{H}\geq \vbf{0}\enspace.
\end{equation}

\paragraph{TopicResponse Fitting Procedure.}
A local optimum of program~\eqref{equ:nmfraschprog} is achieved via iteration
\begin{eqnarray}
w_{ij} &\leftarrow&
w_{ij}
\frac{(\vbf{V}\vbf{H}^T)_{ij}}
{(\vbf{WH}\vbf{H}^T+\lambda_0\vbf{W})_{ij}} \label{equ:w1} \\
%H
h_{ij} &\leftarrow&
h_{ij}
\frac{2(\vbf{W}^T\vbf{V})_{ij}+8\lambda_2h_{ij}^3+6\lambda_2h_{ij}^2+2\lambda_1(\vbf{1}_r^T\vbf{H}_{ideal})_{ij}+\lambda_3(\vbf{\theta}-\vbf{\beta})_{ij}^+}
{2(\vbf{W}^T\vbf{WH})_{ij}+12\lambda_2h_{ij}^3+2\lambda_1(\vbf{1}_r\vbf{H})_{ij} + 2\lambda_2h_{ij} +\lambda_3 (\vbf{\theta}-\vbf{\beta})_{ij}^-} 
\label{equ:h1}
\\
\beta_{i} &\leftarrow&\beta_{i}-\frac{\sum_{j=1}^{n}(p_{ij}-h_{ij})}
{-\sum_{j=1}^{n}p_{ij}(1-p_{ij})}
\label{equ:beta1}
\\
\theta_{j} &\leftarrow&\theta_{j}-\frac{\sum_{i=1}^{k}(h_{ij}-p_{ij})}
{-\sum_{i=1}^{k}p_{ij}(1-p_{ij})}
\label{equ:theta1}
\end{eqnarray}
where 
\begin{eqnarray*}
	(\vbf{\theta}-\vbf{\beta})&=&(\vbf{\theta}-\vbf{\beta})^+ - (\vbf{\theta}-\vbf{\beta})^-\\
	(\vbf{\theta}-\vbf{\beta})_{ij}^+ &=& 
\begin{cases}(\vbf{\theta}-\vbf{\beta})_{ij} & \phantom{\infty}\text{if}\,\, (\vbf{\theta}-\vbf{\beta})_{ij} > 0 \\
0 & \phantom{\infty}\text{if}\,\, \text{otherwise} \\
\end{cases} \\
(\vbf{\theta}-\vbf{\beta})_{ij}^- &=& 
\begin{cases}
-(\vbf{\theta}-\vbf{\beta})_{ij} & \phantom{\infty}\text{if}\,\, (\vbf{\theta}-\vbf{\beta})_{ij} < 0 \\
0 & \phantom{\infty}\text{if}\,\, \text{otherwise} \\
\end{cases}
\end{eqnarray*}
$(\vbf{\theta}-\vbf{\beta})^+$ and $(\vbf{\theta}-\vbf{\beta})^-$ denote the positive part and negative part of matrix $(\vbf{\theta}-\vbf{\beta})$ respectively.  We next describe how these update rules are derived.

The update rules~\eqref{equ:beta1} and~\eqref{equ:theta1} can be obtained using Newton's method. The update rules~\eqref{equ:w1} and~\eqref{equ:h1} can be derived via the Karush-Kuhn-Tucker conditions necessary for local optimality. First we construct the unconstrained Lagrangian 
\begin{equation*}
\mathcal{L}(\vbf{W},\vbf{H},\vbf{\theta}, \vbf{\beta},\vbf{\alpha},\vbf{\gamma})=f(\vbf{W},\vbf{H},\vbf{\theta}, \vbf{\beta})+\text{tr}(\vbf{\alpha}\vbf{W})+\text{tr}(\vbf{\gamma}\vbf{H})\enspace,
\end{equation*}
where $\alpha_{ij}, \gamma_{ij}\leq 0$ are the Lagrangian dual variables for inequality constraints $w_{ij}\geq 0$ and $h_{ij}\geq 0$ respectively, and $\vbf{\alpha}=[\alpha_{ij}]$, $\vbf{\gamma}=[\gamma_{ij}]$ denote their corresponding matrices.
The KKT condition of stationarity requires  that the derivative of $\mathcal{L}$ with respect to $\vbf{H}$,  vanishes at a local optimum $\vbf{H}^\star,\vbf{W}^\star,\vbf{\alpha}^\star,\vbf{\gamma}^\star$:
\begin{equation*}
\begin{split}
\frac{\partial \mathcal{L}}{\partial \vbf{W}}=&2\left(\vbf{W}^\star\vbf{H}^\star{\vbf{H}}^{\star T}- \vbf{V}\vbf{H}^{\star T} + \lambda_0\vbf{W}^\star\right)  + \vbf{\alpha}^\star = \vbf{0} \enspace, \\
\frac{\partial \mathcal{L}}{\partial \vbf{H}} = & 2\Big(
\vbf{W}^{\star T}\vbf{W}^\star\vbf{H}^\star 
- \vbf{W}^{\star T}\vbf{V}
+ \lambda_1 \vbf{1}_r^T\vbf{1}_r\vbf{H}+\lambda_2 \vbf{H}^\star 
- \lambda_1 \vbf{1}_r^T\vbf{H}_{ideal} \Big)
\\&+ 4\lambda_{2}\vbf{H}^\star\circ\vbf{H}^\star\circ\vbf{H}^\star
- 6\lambda_2\vbf{H}^\star\circ\vbf{H}^\star 
-\lambda_3 \left(
(\vbf{\theta - \vbf{\beta}})^+ - (\vbf{\theta - \vbf{\beta}})^-
\right)
+ \vbf{\gamma}^\star \\
=& \vbf{0}\enspace.
\end{split}\end{equation*}
Complementary slackness $\gamma^\star_{ij}h^\star_{ij}=0$, implies:
\begin{eqnarray*}
        0 &=& \left(\vbf{V}\vbf{H}^{\star T} - \vbf{W}^\star\vbf{H}^\star\vbf{H}^{\star T} - \lambda_0\vbf{W}^\star\right)_{ij}w^\star_{ij} \enspace, \label{equ:W} \\
    0 &= &  \Big(2\vbf{W}^{\star T}\vbf{V} +
    6\lambda_2\vbf{H}^\star\circ\vbf{H}^\star + 2\lambda_1 \vbf{1}_r^T\vbf{H}_{ideal} -2\vbf{W}^{\star T}\vbf{W}^\star\vbf{H}^\star %\right.  
    %\left.   -
    - 2\lambda_1\vbf{1}_r^T\vbf{1}_r\vbf{H}^\star
    \nonumber \\ &&
    -4\lambda_{2}\vbf{H}^\star\circ\vbf{H}^\star\circ\vbf{H}^\star 
    -2\lambda_2 \vbf{H}^\star
    + \lambda_3 (\vbf{\theta}-\vbf{\beta})^+ - \lambda_3 (\vbf{\theta}-\vbf{\beta})^-
     %\right.
    %\left.
     \nonumber \\ &&
    + 8\lambda_{2}\vbf{H}^\star\circ\vbf{H}^\star\circ\vbf{H}^\star
    -
    8\lambda_{2}\vbf{H}^\star\circ\vbf{H}^\star\circ\vbf{H}^\star
    \Big)_{ij}h^\star_{ij} \enspace. \label{equ:H}
\end{eqnarray*}
These two equations lead to the updating rules~\eqref{equ:w1} and~\eqref{equ:h1}. 
Regarding the update rules~\eqref{equ:w1}, \eqref{equ:h1}, \eqref{equ:beta1} and~\eqref{equ:theta1} we have the following theorem:
\begin{thm}\label{thm:thm2}
    The objective function $f(\vbf{W},\vbf{H},\vbf{\theta}, \vbf{\beta})$ of \TopicResponse program~\eqref{equ:nmfraschprog} is non-increasing under update rules~\eqref{equ:w1}, \eqref{equ:h1}, \eqref{equ:beta1} and~\eqref{equ:theta1}.
\end{thm}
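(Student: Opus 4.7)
The plan is to prove monotone non-increase of $f$ by treating the update scheme as block coordinate descent across four blocks $\vbf{W}$, $\vbf{H}$, $\vbf{\beta}$, $\vbf{\theta}$, and showing that each individual update leaves $f$ non-increasing when the other three blocks are held fixed. Chaining the four inequalities then yields the theorem. Since $\vbf{\beta}$ and $\vbf{\theta}$ appear in $f$ only through $-\lambda_0 f_R$, the updates~\eqref{equ:beta1} and~\eqref{equ:theta1} amount to Newton-Raphson on the Rasch log-likelihood with $\vbf{H}$ playing the role of the observed response matrix; for fixed $\vbf{H}$ this log-likelihood is strictly concave in each $\beta_i$ (and each $\theta_j$) with second derivative $-\sum_j p_{ij}(1-p_{ij})<0$, so the standard concave-function argument for Newton-Raphson (with suitable step size, as already invoked in Section~\ref{sec:raschesti}) gives non-increase of $-\lambda_0 f_R$ and hence of $f$.

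The substantive work is showing that updates~\eqref{equ:w1} and~\eqref{equ:h1} do not increase $f$ with $\vbf{\beta},\vbf{\theta}$ fixed. I would follow Lee and Seung's auxiliary-function technique: construct $G(\vbf{X},\vbf{X}^t)$ such that $G(\vbf{X},\vbf{X}^t)\geq f$ pointwise, $G(\vbf{X}^t,\vbf{X}^t)=f$, and $\vbf{X}^{t+1}=\argmin_{\vbf{X}}G(\vbf{X},\vbf{X}^t)$ coincides with the prescribed multiplicative rule. Then $f(\vbf{X}^{t+1})\leq G(\vbf{X}^{t+1},\vbf{X}^t)\leq G(\vbf{X}^t,\vbf{X}^t)=f(\vbf{X}^t)$. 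For $\vbf{W}$ the restricted objective is a regularised NMF reconstruction loss whose standard Lee-Seung diagonal quadratic majoriser, with coefficients $K_{ij}(\vbf{W}^t)=(\vbf{W}^t\vbf{H}\vbf{H}^T+\lambda_0\vbf{W}^t)_{ij}/w^t_{ij}$ chosen to dominate the true Hessian diagonal, yields exactly~\eqref{equ:w1} at its unique stationary point; the verification that $K_{ij}(\vbf{W}^t)(w_{ij}-w^t_{ij})^2/2$ upper-bounds the corresponding second-order term is the usual entry-wise application of Jensen to the quadratic $\vbf{W}\vbf{H}\vbf{H}^T\vbf{W}^T$ piece, unchanged by the additive Tikhonov term.

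The main obstacle is the majoriser for $\vbf{H}$, because the BMF regulariser contributes the non-convex quartic $\lambda_3\|\vbf{H}\circ\vbf{H}-\vbf{H}\|_F^2 = \lambda_3\sum_{ij}(h_{ij}^4-2h_{ij}^3+h_{ij}^2)$ on top of the convex reconstruction and $\vbf{H}_{ideal}$ quadratics and the purely linear Rasch contribution $-\lambda_0\sum_{ij}h_{ij}(\theta_j-\beta_i)$. I would adapt the majorisation strategy of Zhang et al.\ (2007, 2010) cited in the paper: split the quartic into convex monomials $h_{ij}^4,h_{ij}^2$ and concave monomials $-h_{ij}^3$, upper-bound the convex pieces by the tangent-power inequality $h^p \leq (p/q)(h^t)^{p-q}h^q+(1-p/q)(h^t)^p$ at $h^t$ for suitable $p>q$, and upper-bound the concave $-h^3$ by its tangent line at $h^t$. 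The linear Rasch piece is handled by splitting it via the positive/negative parts $(\vbf{\theta}-\vbf{\beta})^\pm$ so that both terms after majorisation have the correct sign for a multiplicative rule. The delicate step is matching coefficients: checking that the stationary condition of the resulting separable majoriser reproduces exactly the numerator $8\lambda_3 h^3+6\lambda_3 h^2$ and denominator $12\lambda_3 h^3+2\lambda_3 h$ appearing in~\eqref{equ:h1}. This explains, and in turn relies on, the authors' trick of adding and subtracting $8\lambda_2\vbf{H}\circ\vbf{H}\circ\vbf{H}$ in the KKT derivation, which is precisely the bookkeeping needed to partition the quartic derivatives between a convex-majorised numerator and denominator.
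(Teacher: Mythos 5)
Your overall architecture coincides with the paper's: blockwise descent over $(\vbf{W},\vbf{H},\vbf{\theta},\vbf{\beta})$, Newton--Raphson for the (concave, in each coordinate) Rasch block with the same ``suitable step sizes'' caveat the paper itself invokes, and the Lee--Seung auxiliary-function lemma for the multiplicative updates~\eqref{equ:w1} and~\eqref{equ:h1}, with the $\vbf{W}$ case treated as analogous. The one place you genuinely diverge is the construction of the majoriser for the $\vbf{H}$ block. The paper does not split the quartic into monomials: it takes a single separable quadratic auxiliary $G(h_{ij},h_{ij}^t)=F_{h_{ij}}(h_{ij}^t)+F_{h_{ij}}'(h_{ij}^t)(h_{ij}-h_{ij}^t)+\varphi_{ij}(h_{ij}-h_{ij}^t)^2$, with $\varphi_{ij}$ essentially the denominator of~\eqref{equ:h1} divided by $2h_{ij}^t$, and verifies $\varphi_{ij}\geq\tfrac12 F_{h_{ij}}''(h_{ij}^t)$; your reading of the added-and-subtracted $8\lambda\vbf{H}\circ\vbf{H}\circ\vbf{H}$ term as the bookkeeping that inflates the denominator enough for the majorisation is exactly right. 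Two cautions. First, the tangent-power inequality you quote for the convex pieces, $h^p\le(p/q)(h^t)^{p-q}h^q+(1-p/q)(h^t)^p$ for $p>q$, runs the wrong way: a convex monomial lies \emph{above} its tangent-type bound in a lower power, so this minorises rather than majorises $h^4$; only the concave piece $-h^3$ is correctly handled by its tangent line, and you would need a genuine upper bound on $h^4$ (e.g.\ one that keeps the degree and only rescales coefficients) to complete your route. Second, the paper's route has a complementary soft spot you could legitimately exploit: since $F_{h_{ij}}$ is quartic, its second-order Taylor expansion does not terminate, so the paper's claim that $G\ge F_{h_{ij}}$ is \emph{equivalent} to $\varphi_{ij}\ge\tfrac12F_{h_{ij}}''(h_{ij}^t)$ is not literally correct; a fully rigorous argument needs either a bound on the higher-order remainder or precisely the kind of monomial-by-monomial majorisation you sketch, with the inequalities oriented correctly.
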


This result guarantees that the update rules of $\vbf{W}$, $\vbf{H}$, $\vbf{\theta}$ and $\vbf{\beta}$ eventually converge, and that the obtained solution will be a local optimum. 
%Please see the Appendix for a detailed proof.
The proof of Theorem~\ref{thm:thm2} is given in the Appendix.

\begin{algorithm}[b]
    \caption{\TopicResponse} \label{alg:nmfrasch}
    \renewcommand{\arraystretch}{2}
    \begin{algorithmic}[1]
        \REQUIRE ~~\\
        $\vbf{V}$, $\vbf{H}_{ideal}$, $\lambda_0$, $\lambda_1$, $\lambda_2$, $\lambda_3$,$k$;
        \ENSURE ~~\\
        A topic-student matrix, $\vbf{H}$, item difficulties $\vbf{\beta}$, person abilities $\vbf{\theta}$;
        
        \STATE Initialise $\vbf{W}$, $\vbf{H}$ using NMF;
        \STATE Normalise $\vbf{W}$, $\vbf{H}$ following ~\citep{zhang2007binary,zhang2010binary};
        \STATE Initialise $\vbf{\theta}$, $\vbf{\beta}$ based on Eq.~\eqref{equ:thetaini} and Eq.~\eqref{equ:betaini};
        \REPEAT
        \STATE Update $\vbf{W},\vbf{H},\vbf{\beta},\vbf{\theta}$ iteratively based on Eq.~(\ref{equ:w1}) to Eq.~(\ref{equ:theta1});
        \UNTIL{converged}
        \RETURN $\vbf{H}$;
    \end{algorithmic}
\end{algorithm}

\paragraph{Algorithm.} 
Our overall approach \TopicResponse is described as Algorithm~\ref{alg:nmfrasch}.
$\vbf{W}$ and $\vbf{H}$ are initialised using plain NMF~\citep{lee1999learning,lee2001algorithms}, then normalised~\citep{zhang2007binary,zhang2010binary}. $\vbf{\theta}$ and $\vbf{\beta}$ are initialised based on Eq.~\eqref{equ:thetaini} and Eq.~\eqref{equ:betaini}, where $x_{ij}$ is replaced by $h_{ij}$. At optimisation completion, estimates for topics, item difficulties and person abilities can be obtained together. Code for \TopicResponse is available from the authors' websites.

\section{Experiments}
\label{sec:exp}
We report on extensive experiments evaluating the effectiveness of \TopicResponse on real MOOCs.
In our experiments, we use the first offerings of three Coursera MOOCs from education, economics and computer science offered by The University of Melbourne: \textit{Assessment and Teaching of 21st Century Skills} delivered in 2014, \textit{Principles of Macroeconomics} delivered in 2013, and \textit{Discrete Optimisation} delivered in 2013. We denote these three courses by EDU, ECON and OPT respectively. 

\subsection{Dataset Preparation}
We focus on the students who contributed posts or comments in forums.
For each student, we aggregate all the posts and comments that they contributed.
After stemming and removing stop words, a word-student matrix with normalised tf-idf in [0,1] is produced.
The statistics of words and students before and after preprocessing, the dominated words, and the sparsity of word-student matrix (the percentage of non-zeros values) for three MOOCs are displayed in Table~\ref{tab:stat}.

\subsection{Baseline and Evaluation Metrics}
We compare our algorithm \TopicResponse with the baseline algorithm Grade-Guided NMF (\GGNMF), which minimises the following objection function

\begin{table}[b]
        \centering
        \caption{Statistics of our three Coursera MOOC datasets.}
        \label{tab:stat}
	\begin{tabular}{lccp{1.5cm}p{3.5cm}p{1.0cm}} 
            \toprule
            MOOC  & \#Students & \#Words & \#Words after preprocessing & Dominated words & Word-student matrix sparsity\\
            \toprule
            EDU & 1,749 & 28,931 & 18,391 & student, learn, skill, work, teacher, use, assess, teach, problem, collabor & 0.59\%\\ 
            ECON & 1,551 & 26,370  & 21,412 & gdp, would, econom, think, product, good, one, economi, increas, invest & 0.50\%\\ 
            OPT & 1,092 & 19,284 & 16,128 & use, solut, get, time, one, tri, python, work, optim, would & 0.85\%\\ 
            \bottomrule\end{tabular}
\end{table}

\begin{equation*}\label{equ:fun1}
\begin{aligned}
f_G(\vbf{W},\vbf{H}) =& \norm{\vbf{V}-\vbf{WH}} + 
\lambda_1\norm{\vbf{W}} +
\lambda_2\norm{\vbf{1}_r\vbf{H}-\vbf{H}_{ideal}} +
\lambda_3\norm{\vbf{H}\circ\vbf{H}-\vbf{H}}
\end{aligned}
\end{equation*}
A local optimum can be obtained using the Karush-Kuhn-Tucker conditions. Like \TopicResponse, \GGNMF regularises $\vbf{H}$ by considering the students' grades as an indicator of academic ability. 
The difference is that \TopicResponse optimises the Rasch estimation and NMF simultaneously, 
while in \GGNMF, the students' topic responses $\vbf{H}$ are first obtained, and then are passed through the Rasch model.
We evaluate the two algorithms in terms of the following metrics. \\[-0.5em]

\noindent\textit{Quality of factorisation.} We measure $\norm{\vbf{V}-\vbf{WH}}$ so as to record how well the factorisation approximates the student-word matrix.\\[-0.5em]

\noindent\textit{Measuring student academic ability.} Quality of constraint on students' topic participation, based on grades:  $\norm{\vbf{1}_r\vbf{H}-\vbf{H}_{ideal}}$.\\[-0.5em]

\noindent\textit{Negative log-likelihood.} Log-likelihood measures fit of the Rasch model to the entire dataset. For convenience, we look at the negative log-likelihood, which should be minimised: smaller is better. This measure is our main focus for Rasch, as it is important to examine the model-level fit before looking at item-level fit.\\[-0.5em]
    
\noindent\textit{Item infit.} As illustrated in Section~\ref{sec:infit}, item infit examines the fit of a particular item, with non-fitting items suitable for further refinement. We use the conventional acceptance range of [0.7, 1.3].

%problem', u'use', u'solut', u'get', u'time', u'one', u'tri',
%        u'python', u'work', u'optim', u'would', u'color', u'valu', u'node',
%        u'like', u'algorithm', u'run', u'solver', u'think', u'code'

%[u'gdp', u'would', u'econom', u'think', u'product', u'good', u'one',
%        u'economi', u'increas', u'invest', u'rate', u'time', u'use',
%        u'question', u'cours', u'countri', u'bank', u'money', u'govern',
%        u'price'], 
%       dtype='<U96'))

%[u'student', u'learn', u'skill', u'work', u'teacher', u'use',
%        u'assess', u'teach', u'problem', u'collabor', u'cours', u'think',
%        u'cp', u'would', u'need', u'school', u'one', u'group', u'also',
%        u'like'], 

\begin{table}[b!]
\begin{minipage}[t!]{0.48\textwidth}
    \centering
    \caption{Hyperparameter settings. \label{tab:paras}}
    \begin{tabular}{ll}
        \toprule
        Param.		& 	Values Explored (\textbf{Default}) \\
        \toprule
        $\lambda_0$	&	$[0.01,\boldsymbol{0.1}, 0.2, 0.3, 0.4, 0.5]$ \\
        $\lambda_1$	&	$[10^{-3},10^{-2},\boldsymbol{10^{-1}},10^{0},10^{1},10^{2}]$ \\
        $\lambda_2$	&	$[10^{-3},10^{-2},10^{-1},\boldsymbol{10^{0}},10^{1},10^{2}]$ \\
        $\lambda_3$	&	$[10^{-3},10^{-2},10^{-1},\boldsymbol{10^{0}},10^{1},10^{2}]$ \\
        $k$	&	$[5,\boldsymbol{10},15,20,25,30]$ \\
        \bottomrule
    \end{tabular}
\end{minipage}\hfill
\begin{minipage}[t!]{0.48\textwidth}
    \centering
        \includegraphics[scale=0.38]{}
	\captionof{figure}{Negative log-likelihood as goodness of fit; Smaller is better. \label{fig:c12likelihood}}
    \end{minipage}
\end{table}

\subsection{Hyperparameter Settings}
Table~\ref{tab:paras} presents the parameter values used for our parameter sensitivity experiments, where the default values shown in boldface are used in experiments unless noted otherwise.

\subsection{Main Results for GG-NMF and TopicResponse}
In the first group of experiments, we examine the performance of \GGNMF (baseline) and \TopicResponse in terms of negative log-likelihood, the quality of factorisation $\vbf{WH}$ in approximation $\vbf{V}$ given by $\norm{\vbf{V}-\vbf{WH}}$, and the supervision soft constraint $\norm{\vbf{1}_r\vbf{H}-\vbf{H}_{ideal}}$. 
For \GGNMF, the factorisation and Rasch estimation are separated, where topic-student response matrix $\vbf{H}$ is first obtained using \GGNMF, and then is taken as input to Rasch estimation. For \TopicResponse, the negative log-likelihood is optimised together with factorisation. 
The parameters are set using the boldface default values in Table~\ref{tab:paras}. Figure~\ref{fig:c12likelihood} displays the negative log-likelihood of \GGNMF and \TopicResponse.
 
It can be seen that \TopicResponse can yield superior negative log-likelihood, implying better fit between the topic-student response matrix $\vbf{H}$ and the Rasch model. \TopicResponse therefore provides greater confidence that other item-level fit statistics such as infit, will be acceptable. \emph{Jointly optimising the matrix factorisation and Rasch estimation can bring us closer to global optima.}

\begin{figure}[t]
    \begin{subfigure}[t]{0.5\textwidth}
        \includegraphics[scale=0.38, left]{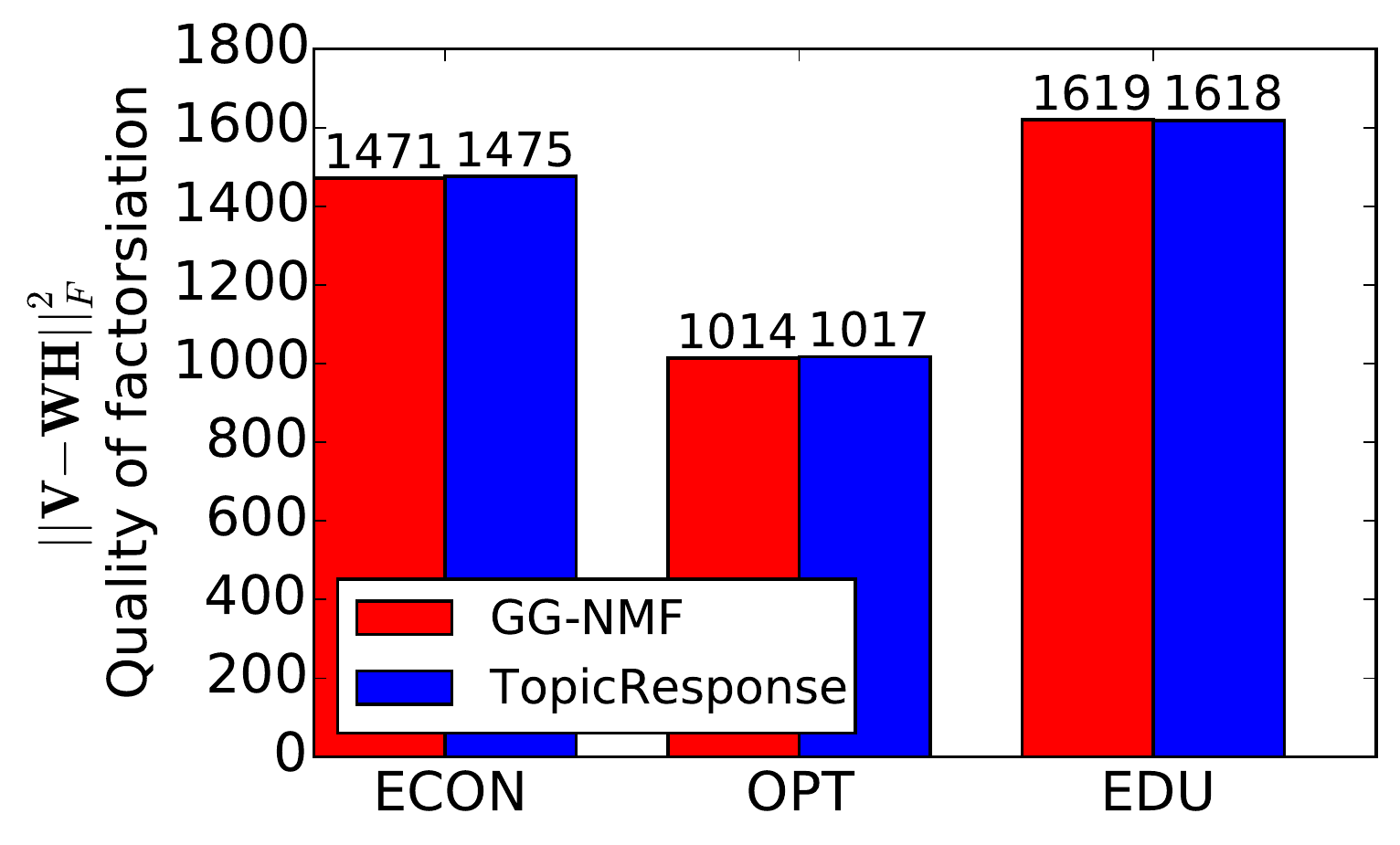}
        \caption{Quality of factorisation, $\norm{\vbf{V}-\vbf{WH}}$}
    \end{subfigure}
    ~
    \begin{subfigure}[t]{0.5\textwidth}
        \includegraphics[scale=0.38, right]{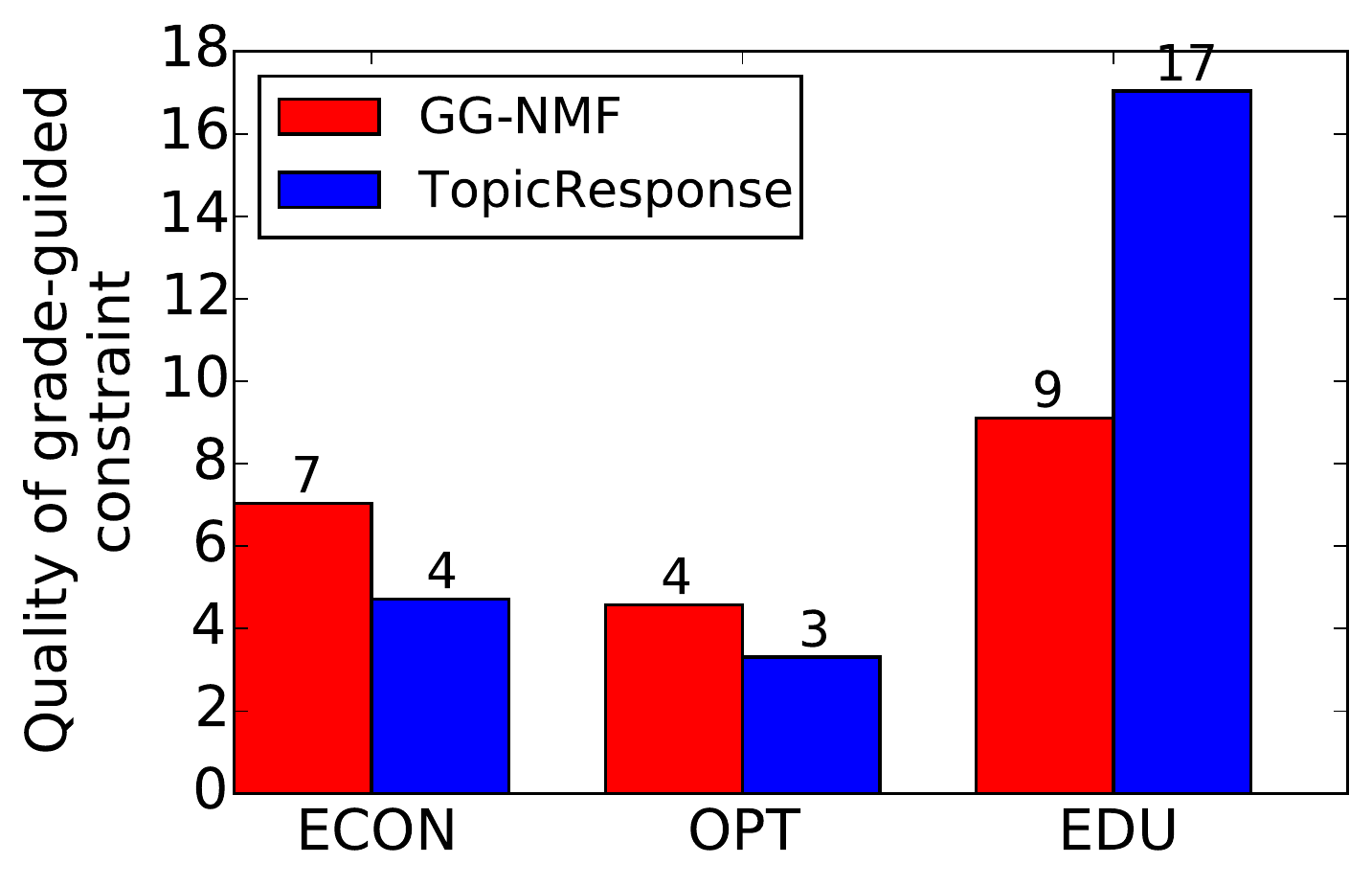}
        \caption{Quality of graded-guided constraint, $\norm{\vbf{1}_r\vbf{H}-\vbf{H}_{ideal}}$}
    \end{subfigure}
    \caption{Performance of \GGNMF and \TopicResponse in terms of $\norm{\vbf{V-WH}}$ and  $\norm{\vbf{1}_r\vbf{H}-\vbf{H}_{ideal}}$; Smaller is better.}
    \label{fig:c12}
\end{figure}

We present the results on quality of approximation $\norm{\vbf{V-WH}}$ and supervision term $\norm{\vbf{1}_r\vbf{H}-\vbf{H}_{ideal}}$, in Figure~\ref{fig:c12}. From these plots, we can see that without sacrificing approximation performance in terms of $\norm{\vbf{V-WH}}$, \TopicResponse obtains superior $\norm{\vbf{1}_r\vbf{H}-\vbf{H}_{ideal}}$ (while obtaining excellent negative log-likelihoods as above). This performance again demonstrates that optimising the factorisation and Rasch estimation globally can be superior to optimising them separately. 
We therefore conclude that \TopicResponse is preferable to \GGNMF; we focus on results for \TopicResponse in the remainder of our experiments.

\subsection{Item Infit, Item Difficulty and Student Ability}
We further examine the infit of each item, which indicates if the set of topics conform to the Rasch model, and is appropriate for measurement.
As illustrated in Section~\ref{sec:infit}, a conventional acceptable range of infit is 0.7 to 1.3. 
As an example, we show the item infit in Figure~\ref{fig:infit} on OPT MOOC.
We can see that the infit of each item is in the acceptable range, with most very close to the (ideal) expected value of 1.0, indicating that the set of topics conform to the Rasch model and is appropriate for measuring student ability. 

\begin{figure}[t]
	\begin{minipage}[t!]{0.47\textwidth}
        \centering
        \includegraphics[scale=0.38]{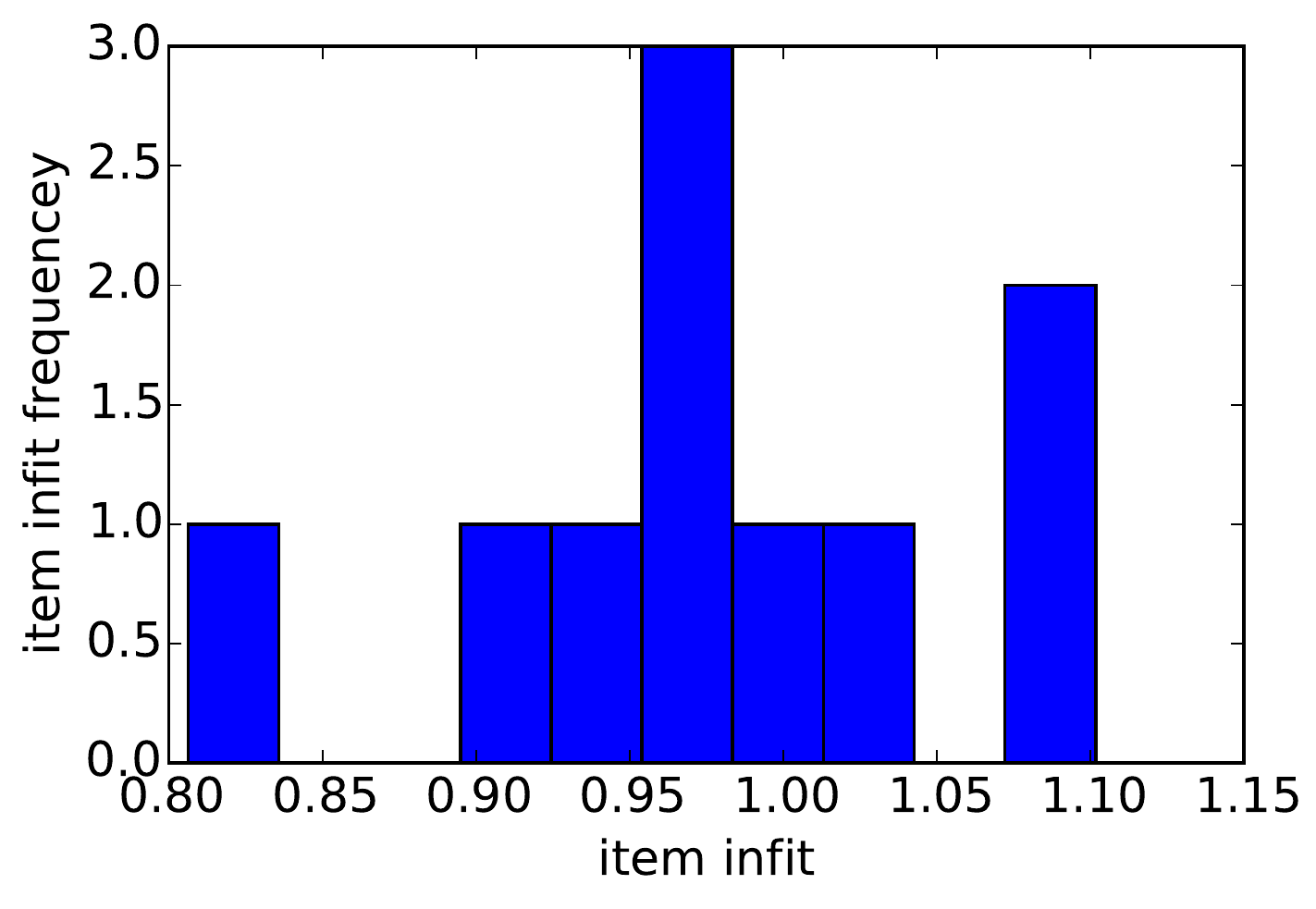}
        \captionsetup{justification=raggedright,
            singlelinecheck=false}
        \captionof{figure}{Item infit histogram for OPT MOOC; infit closer to 1 is better.}
        \label{fig:infit}
\end{minipage}\hfill
\begin{minipage}[t!]{0.51\textwidth}
    \centering
    \includegraphics[scale=0.63]{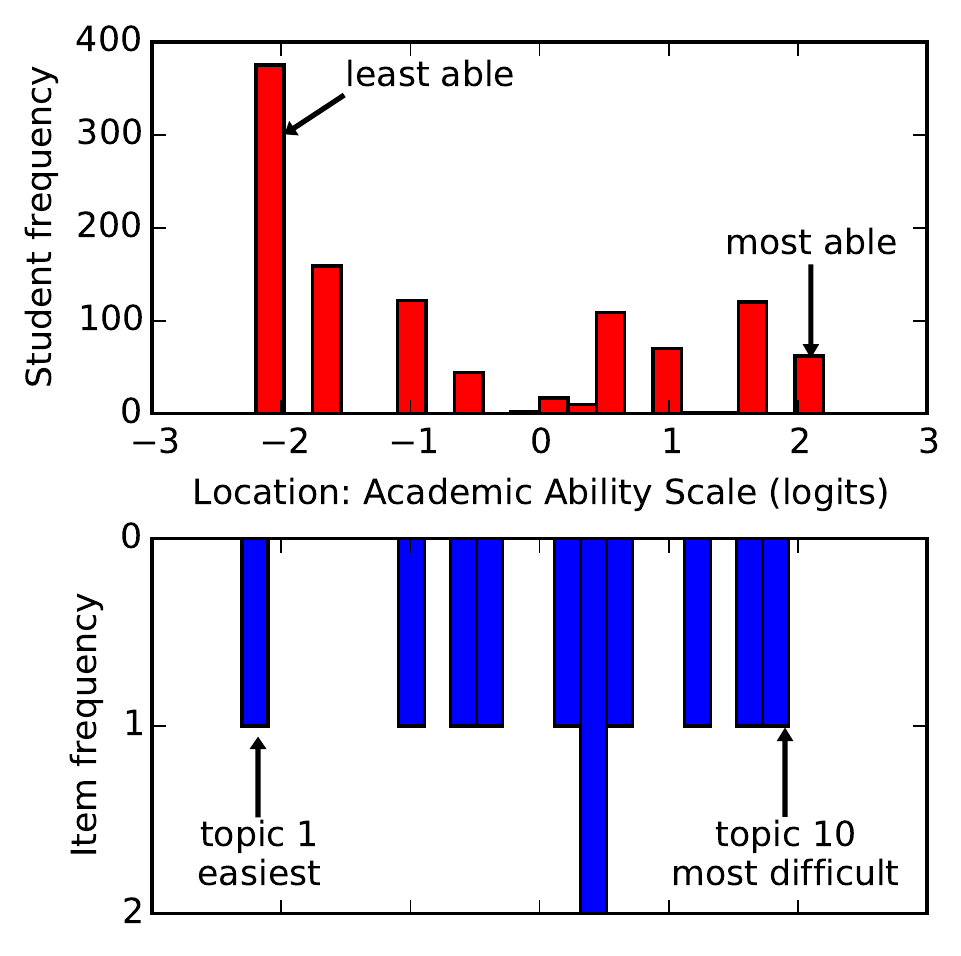}
    \caption{Histograms of OPT MOOC student ability location (top) and item difficulty location (bottom).}
    \label{fig:perosnitemmap}
    \end{minipage}
\end{figure}

Additionally, we examine item difficulties and student abilities. Figure~\ref{fig:perosnitemmap} displays the histogram of item difficulty and student ability along a common scale.
According to the Rasch model, the higher a person's ability relative to the difficulty of a topic, the higher the probability that person posts on that topic.
It can be seen that most students with low ability (around -2 logits), only dominate the ``easiest'' topic (topic 1 with difficult -2.3 logits); this topic concerns general problem solving. In other words, these students are likely to post only on topic 1, and unlikely to post on other topics. By comparison, the most able students with abilities around 2, with high probability contribute to all the topics.

\subsection{Topic Interpretation and Discussion}
We qualitatively examine topic interpretation, in order to assess educational meaningfulness. Well-scaled topics can potentially be used for curriculum refinement. %For example, the topics that students with different ability levels discussed can be examined, and the topics that are unexpectedly too easy or too hard for students with a certain ability level can be identified, and therefore actions can be taken, \eg to adjust the course materials. 
Table~\ref{tab:rasch2topic1} presents the topics generated using \TopicResponse, alongside inferred difficulties. 
Topics are interpreted by an instructor who teaches a similar course. 
As the topics are not all course content-related, we envision that instructors examine discovered topics prior to using all for refining curriculum or taking other actions. 
Additionally, the inferred student ability levels and topic difficulty levels could be potentially used for personalised feedback, by tailoring appropriate topics of course content or forum discussion to students with their individual ability level taken into account.
For example, most students (lowest ability) only discuss solving problem in general, as shown in Figure~\ref{fig:perosnitemmap}. If they cannot obtain sufficient help from forum discussions, they may be prone to drop out without further topic exploration. Therefore, in intervening with at-risk students, it is advisable to leverage discovered topics to better focus measures. Such services may be useful in preventing dropout in early stages (when most dropouts typically occur).

\begin{table*}[ht!]
    \small
    \centering
    \caption{Topics and difficulty levels, by \TopicResponse on OPT MOOC.}
    \begin{tabular}{cp{5cm}p{2.5cm}c} 
        \toprule
        No. & Topics   & Interpretation & Inferred difficulty \\
        \toprule
        1 & use time problem get solut one optim algorithm tri work   & Solving in general & -2.30\\
        2 & cours thank would lectur realli great assign good like think &  Course feedback & -0.93 \\  
        3 & python use run program solver java matlab instal command work &   Python/Java/Matlab (How to start) & -0.63\\ 
        4 & problem thank solut get grade knapsack got feedback optim solv &   How knapscak problem is solved and graded& -0.44 \\  
        5 & memori dp use column bb implement solv algorithm bound tabl & Comparing algorithms memory/time &  0.23\\   
        6 & color node graph random edg greedi opt search swap iter & Graph coloring & 0.31\\
        7 & item valu weight capac estim take solut calcul best list &  Knapsack problem  & 0.33\\
        8 & file pi line solver data submit lib urllib2 solveit open &  Using solvers &  0.52\\
        9 & video http class load lecture org problem coursera optimization 001 &   Platform & 1.17 \\
        10 & submit assign assignment error messag view assignment\_id detail class coursera   & Assignment submission & 1.73\\
        \bottomrule
    \end{tabular}%
    \label{tab:rasch2topic1}%
\end{table*}%

\subsection{Parameter Sensitivity}\label{sec:para}
To validate the robustness of \TopicResponse to parameter settings, a series of sensitivity experiments were conducted. The parameter settings are shown in Table~\ref{tab:paras}. 
Negative log-likelihoods, $\norm{\vbf{1}_r\vbf{H}-\vbf{H}_{ideal}}$, $\norm{\vbf{V-WH}}$ and $\norm{\vbf{H}\circ\vbf{H}-\vbf{H}}$ are examined in these experiments. 
Due to space limitations, we report here results for $\lambda_0$ on the OPT MOOC. The reader is referred to Appendix~\ref{app:B} for results on parameters $\lambda_1$,$\lambda_2$,$\lambda_3$, $k$ on all three MOOCs.

\paragraph{Effect of Parameter $\lambda_0$.}
As can be seen in Figure~\ref{fig:lambdaraschb}, as $\lambda_0$ is increased \TopicResponse performs better in terms of negative log-likelihood, and performs worse in terms of the other three metrics due to the regularisation on the Rasch model. By contrast, the performance of \GGNMF does not change as there is no regularisation term on its Rasch estimation. Overall, \TopicResponse performs well when $\lambda_0$ varies between 0.1 and 0.2.

\begin{figure}[!htb]
    \centering
    \begin{subfigure}[t]{0.45\textwidth}
        \centering
        \includegraphics[scale=0.4]{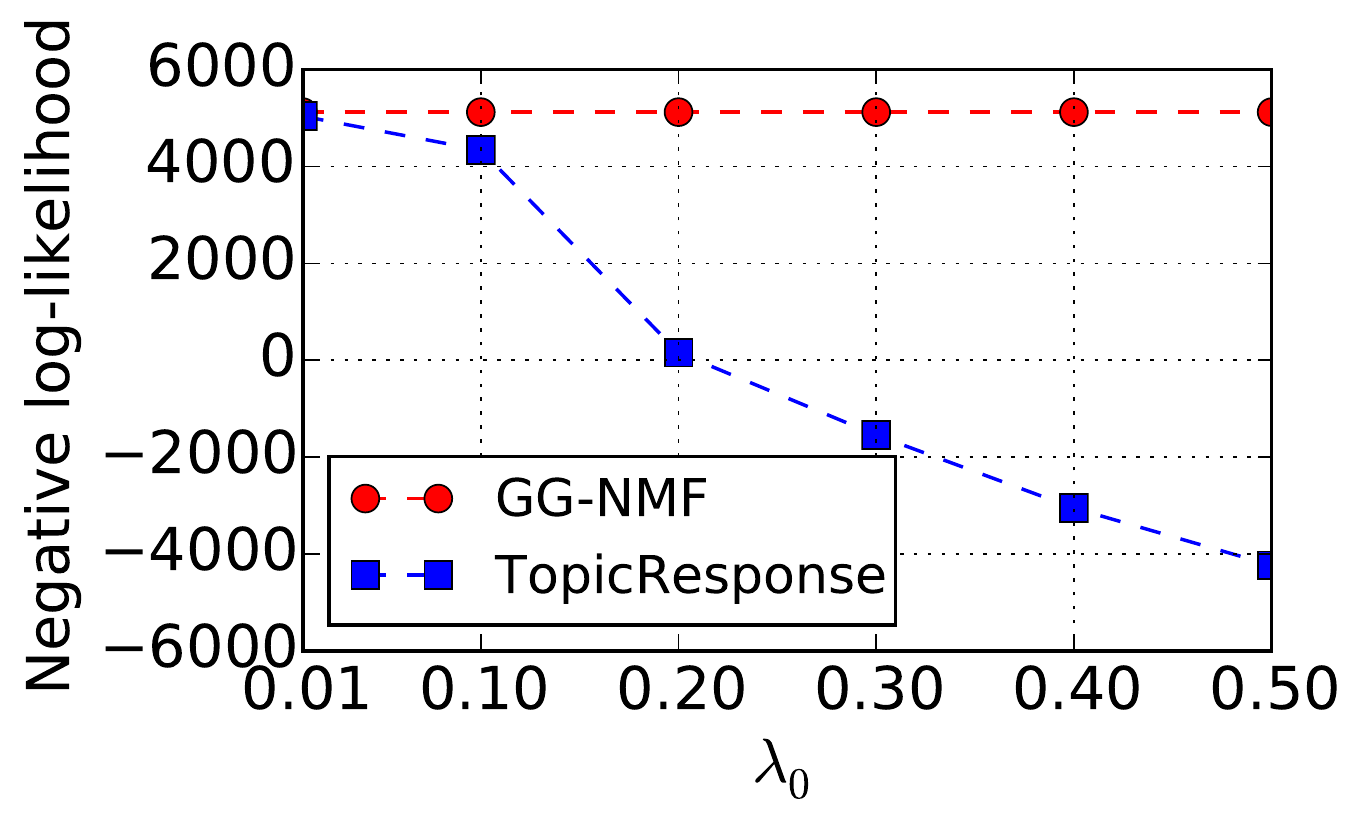}
        \caption{Negative log-ikelihood on OPT}
    \end{subfigure}
    ~
    \begin{subfigure}[t]{0.45\textwidth}
        \centering
        \includegraphics[scale=0.4]{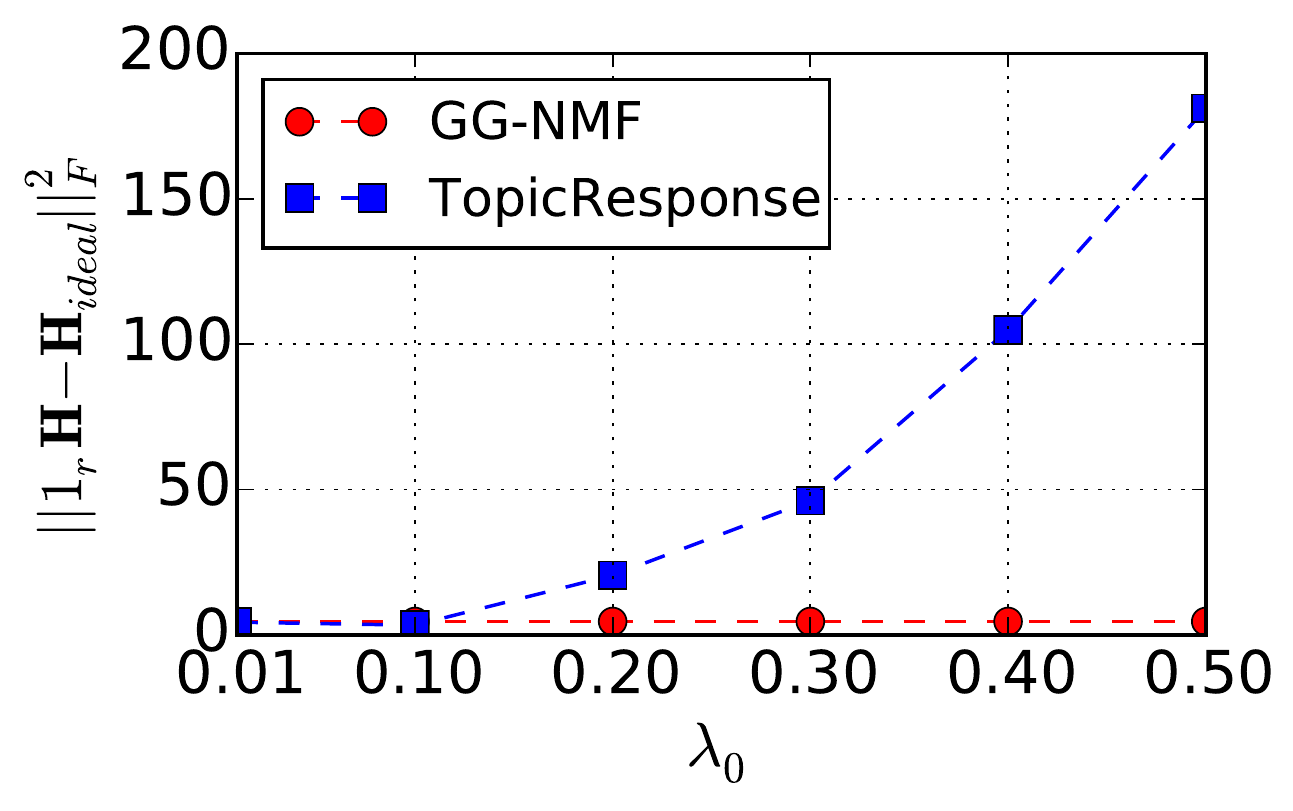}
        \caption{$\norm{\vbf{1}_r\vbf{H}-\vbf{H}_{ideal}}$ on OPT}
    \end{subfigure}
    ~
    \begin{subfigure}[t]{0.45\textwidth}
        \centering
        \includegraphics[scale=0.4]{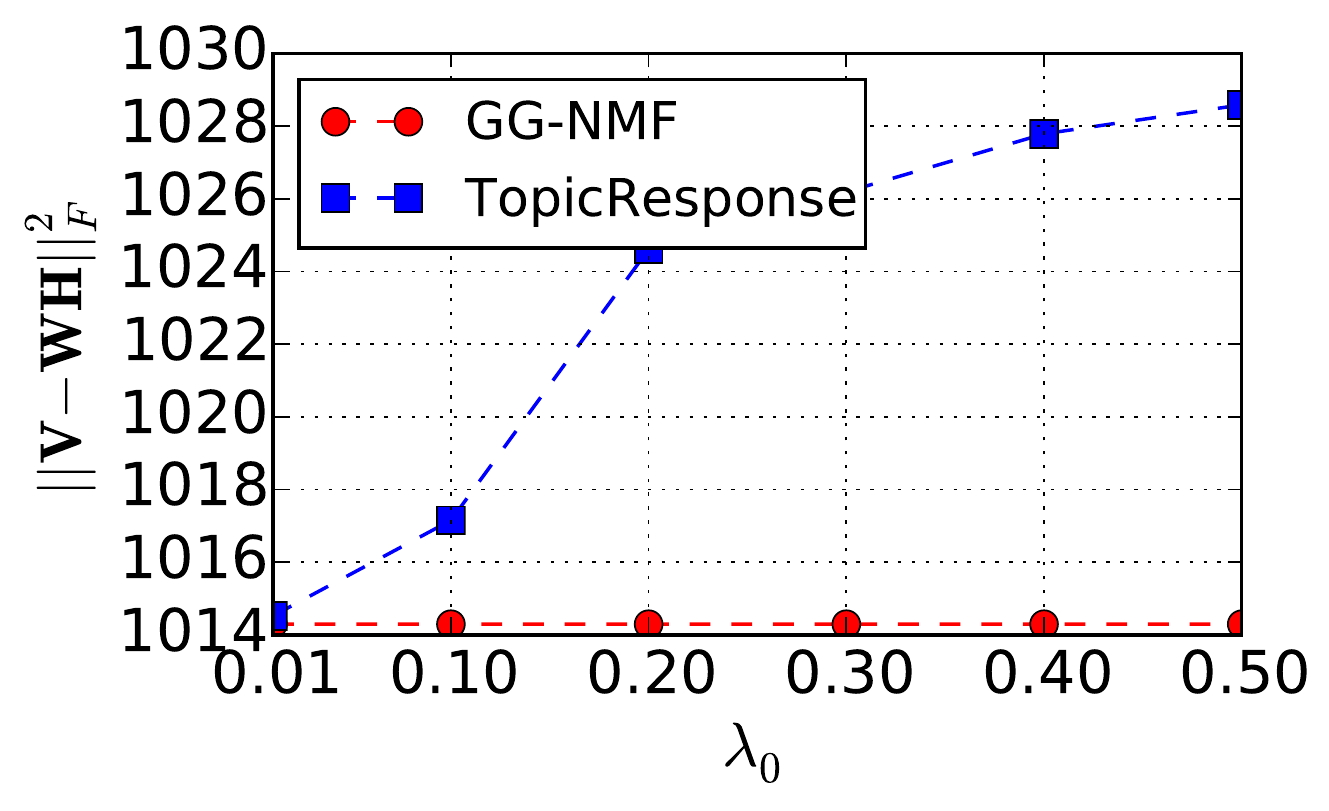}
        \caption{$\norm{\vbf{V}-\vbf{WH}}$ on OPT}
    \end{subfigure}
    ~
    \begin{subfigure}[t]{0.45\textwidth}
        \centering
        \includegraphics[scale=0.4]{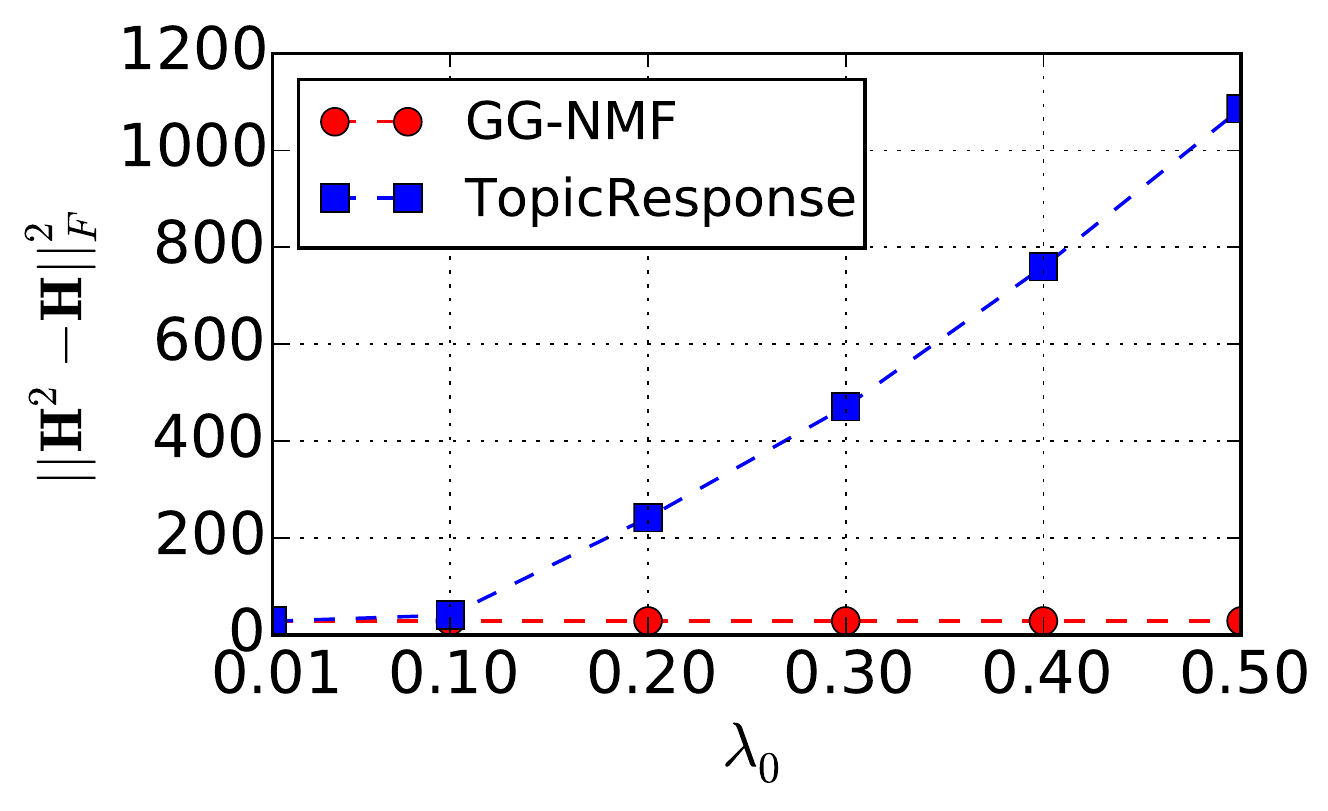}
        \caption{$\norm{\vbf{H}\circ\vbf{H}-\vbf{H}}$ on OPT}
    \end{subfigure}
    \caption{Performance of \GGNMF and \TopicResponse on OPT with varying $\lambda_0$.}
    \label{fig:lambdaraschb}
\end{figure}

\section{Conclusion and Future Work}
We have examined the suitability of content-based items (topics) discovered from MOOC forum discussions, for modelling student abilities. 
Our central tenet is that topics can be regarded as useful items for measuring latent skills, if student responses to these topics fit the Rasch item-response theory model, and if the discovered topics are further interpretable to domain experts. 
%By identifying the limited use of the Guttman scale in practice, and the popularity and advantages of the Rasch model, we impose requirement that students' participation across discovered topics from students' forum postings fits the Rasch model, as statistical evidence of reliable measurement for student ability.
We propose to jointly optimise NMF and Rasch modelling, in order to discover Rasch-scaled topics.
We provide a quantitative validation on three Coursera MOOCs, demonstrating that \TopicResponse yields better global fit to the Rasch model (observed with lower negative log-likelihood), maintains good quality of factorisation approximation, while measuring the students' academic abilities (reflected by the grade-guided constraint on students' participation on topics). We also provide qualitative examination of topic interpretation with inferred difficulty levels on a Discrete Optimisation MOOC. The results on goodness of fit and our qualitative examination, together suggest potential applications in curriculum refinement, student assessment and personalised feedback. 

We opted to study the relatively simple Rasch model, as it forms the basis of very many subsequent models in the literature. One direction for extension, is that for any model (like Rasch), that fits parameters via maximum-likelihood estimation (or risk minimisation in general), the model can be augmented with NMF as an additional regularisation. For example, such an extension should be straightforward for polychotomous observations, hierarchical models on latent skills, models that include more flexible per-student variation, etc. These represent fruitful direction for future research. Another possible extension could involve augmenting the $\vbf{W},\vbf{H}$ matrices in the NMF or Rasch objective terms with manually-crafted items, to make effective use of prior knowledge.

\label{sec:con}

\begin{acknowledgements}
We thank Jeffrey Chan for discussions related to this work, and the anonymous reviewers and editor
for their thoughtful feedback. 
This work is supported by Data61, and the Australian Research Council (DE160100584). 
\end{acknowledgements}

% BibTeX users please use one of
\bibliographystyle{spbasic}      % basic style, author-year citations
\bibliography{mandb}   % name your BibTeX data base

\appendix

\section{Proof of Theorem~\ref{thm:thm2}}

The update rules for $\beta_{i}$ and $\theta_j$ are derived using the Newton-Raphson method~\citep{bertsekas1999nonlinear}, where the convergence to a local optimum is guaranteed. Here, we focus on the proof for the update rule for $h_{ij}$. The update rule for $w_{ij}$ can be proved similarly.
We closely follow the procedure described in \citep{lee2001algorithms}, where an auxiliary function similar to that used in the Expectation-Maximization (EM) algorithm is used for proof.
\begin{defn}[\citealt{lee2001algorithms}]
     $G(h,h^\prime)$ is an auxiliary function for $F(h)$ if the conditions
    \begin{eqnarray*}
        G(h,h^\prime)\geq F(h)\enspace, \qquad G(h,h)=F(h)\enspace,
    \end{eqnarray*}
    are satisfied.
\end{defn}
\begin{lemma}[\citealt{lee2001algorithms}]
	If $G$ is an auxiliary function, then $F$ is non-increasing under the update
    \begin{eqnarray}\label{equ:argmin}
        h_{t+1}=\argmin_hG(h,h^t) \enspace.
    \end{eqnarray}
\end{lemma}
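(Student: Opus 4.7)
The plan is to establish $F(h^{t+1}) \leq F(h^t)$ via a short chain of three comparisons that interleave the two defining properties of an auxiliary function with the minimization characterization of the update rule.

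First, I would invoke the majorization property $G(h, h') \geq F(h)$ applied at the pair $h = h^{t+1}$, $h' = h^t$, which immediately yields $F(h^{t+1}) \leq G(h^{t+1}, h^t)$. Next, I would use the definition of $h^{t+1}$ as a minimizer of $h \mapsto G(h, h^t)$: since $h^t$ is itself a feasible candidate inside that $\argmin$, we get $G(h^{t+1}, h^t) \leq G(h^t, h^t)$. Finally, the tight-on-the-diagonal property $G(h, h) = F(h)$ taken at $h = h^t$ gives $G(h^t, h^t) = F(h^t)$. Chaining these three steps produces
\[
F(h^{t+1}) \;\leq\; G(h^{t+1}, h^t) \;\leq\; G(h^t, h^t) \;=\; F(h^t),
\]
which is exactly the desired monotonicity statement.

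The main (and essentially only) subtlety is whether the $\argmin$ in \eqref{equ:argmin} is actually attained; if it is not, one replaces the exact minimizer by any point satisfying $G(h^{t+1}, h^t) \leq G(h^t, h^t)$, and the same chain still drives $F$ non-increasing. This is the standard relaxation used throughout the EM/MM literature. Beyond that caveat, the argument is a direct consequence of the two bullet-point properties of $G$ together with the definition of the update, so there is no substantive obstacle to overcome in the lemma itself. The real work in the surrounding convergence analysis lies in \emph{constructing} an auxiliary function $G$ tailored to the $h_{ij}$ subproblem of \TopicResponse; once such a $G$ is in hand, this lemma delivers monotonicity for free.
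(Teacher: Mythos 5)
Your proof is correct and follows exactly the same chain of inequalities, $F(h^{t+1}) \leq G(h^{t+1}, h^t) \leq G(h^t, h^t) = F(h^t)$, that the paper uses; you simply spell out the justification of each step and add a sensible (but unneeded here) caveat about attainment of the $\argmin$. No further changes needed.
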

\begin{proof}
The result follows from noting $F(h^{t+1})\leq G(h^{t+1}, h^t)\leq G(h^t, h^t)=F(h^t)$.\qed
\end{proof}
For any element $h_{ij}$ in $\vbf{H}$, let $F_{h_{ij}}$ denote the part of $f(\vbf{W},\vbf{H}, \vbf{\theta}, \vbf{\beta})$ in Eq.~(\ref{equ:fun2}) relevant to $h_{ij}$. Since the update is essentially element-wise, it is sufficient to show that each $F_{h_{ij}}$ is non-increasing under the update rule of Eq.~(\ref{equ:h1}). To prove this, we define the auxiliary function regarding $h_{ij}$ as follows.

\begin{lemma}
    Function
    \begin{eqnarray}\label{equ:auxiliary1}
    \begin{split}
    G(h_{ij}, h_{ij}^t)= F_{h_{ij}}(h_{ij}^t)+F_{h_{ij}}^{\prime}(h_{ij}^t)(h_{ij}-h_{ij}^t)
    +\varphi_{ij} (h_{ij}-h_{ij}^t)^2\enspace,
    \end{split}
    \end{eqnarray}
    where
    \begin{eqnarray*}
        \varphi_{ij}=\frac{2(\vbf{W}^T\vbf{WH})_{ij}+2\lambda_1(\vbf{1}_r^T\vbf{1}_r\vbf{H})_{ij}+12\lambda_2(h_{ij}^t)^3+2\lambda_2h_{ij}^t + \lambda_3 (\vbf{\theta}-\vbf{\beta})_{ij}^-}{2h_{ij}^t}
    \end{eqnarray*} 
    is an auxiliary function for $F_{h_{ij}}$.
\end{lemma}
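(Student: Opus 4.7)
The plan is to check the two defining conditions of an auxiliary function directly from the form of $G$ in Eq.~(\ref{equ:auxiliary1}). The equality $G(h_{ij}^t, h_{ij}^t) = F_{h_{ij}}(h_{ij}^t)$ is immediate, since both the linear and quadratic correction terms vanish when $h_{ij} = h_{ij}^t$. The substantive task is the upper bound $G(h_{ij}, h_{ij}^t) \geq F_{h_{ij}}(h_{ij})$ for all $h_{ij} \geq 0$, which after rearrangement amounts to showing
\[
F_{h_{ij}}(h_{ij}) - F_{h_{ij}}(h_{ij}^t) - F_{h_{ij}}'(h_{ij}^t)(h_{ij} - h_{ij}^t) \;\leq\; \varphi_{ij}\,(h_{ij}-h_{ij}^t)^2.
\]
The approach is to decompose $F_{h_{ij}}$ into the four additive contributions from the terms of~\eqref{equ:fun2} that involve $h_{ij}$---the Frobenius loss, the Rasch log-likelihood, the grade-guidance regulariser $\norm{\vbf{1}_r\vbf{H}-\vbf{H}_{ideal}}$, and the binary regulariser $\norm{\vbf{H}\circ\vbf{H}-\vbf{H}}$---and then to bound each piece's Taylor residual separately, summing at the end.

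For the Frobenius contribution, the residual equals exactly $(\vbf{W}^T\vbf{W})_{ii}(h_{ij}-h_{ij}^t)^2$, and the Lee--Seung inequality $(\vbf{W}^T\vbf{W}\vbf{H})_{ij} \geq (\vbf{W}^T\vbf{W})_{ii}\,h_{ij}^t$ (valid because $h_{\ell j}^t \geq 0$ for every $\ell$) upper-bounds this by $[(\vbf{W}^T\vbf{W}\vbf{H})_{ij}/h_{ij}^t](h_{ij}-h_{ij}^t)^2$, recovering the first term of $\varphi_{ij}$ after the common factor of $2h_{ij}^t$. The grade-guidance regulariser is again quadratic in $h_{ij}$ and is handled analogously via the non-negativity bound $(\vbf{1}_r^T\vbf{1}_r\vbf{H})_{ij} \geq h_{ij}^t$. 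The Rasch contribution $-\lambda_0 h_{ij}(\theta_j-\beta_i)$ is linear in $h_{ij}$ and hence has zero residual; the $\lambda_3(\vbf{\theta}-\vbf{\beta})^-_{ij}$ entry appearing in $\varphi_{ij}$ is added purely as bookkeeping so that eventual minimisation of $G$ in $h_{ij}$ reproduces the positive/negative-part splitting $(\vbf{\theta}-\vbf{\beta})^+-(\vbf{\theta}-\vbf{\beta})^-$ used in update rule~\eqref{equ:h1}.

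The main obstacle is the binary regulariser $(h_{ij}^2-h_{ij})^2$, which is quartic in $h_{ij}$ and therefore admits no naive second-order Taylor majorisation. The plan is to follow the Zhang-style technique~\citep{zhang2007binary,zhang2010binary}: expand the residual explicitly as $(6(h_{ij}^t)^2 - 6 h_{ij}^t + 1)(h_{ij}-h_{ij}^t)^2 + (4h_{ij}^t - 2)(h_{ij}-h_{ij}^t)^3 + (h_{ij}-h_{ij}^t)^4$, and then bound its cubic and quartic tails by exploiting $h_{ij}\geq 0$ together with identities such as $h_{ij}^p = h_{ij}^{p-1}\cdot h_{ij}$ and multiplicative-update style majorisations that absorb excess powers of $h_{ij}$ into $(h_{ij}^t)^p$ factors. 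This yields the remaining contributions $12\lambda_2(h_{ij}^t)^3/(2h_{ij}^t)=6\lambda_2(h_{ij}^t)^2$ and $2\lambda_2 h_{ij}^t/(2h_{ij}^t)=\lambda_2$ to $\varphi_{ij}$. Collecting the four bounds over the common denominator $2 h_{ij}^t$ reproduces the stated $\varphi_{ij}$, and setting $\partial G/\partial h_{ij}=0$ then directly yields the multiplicative update~\eqref{equ:h1}.
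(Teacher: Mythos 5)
Your treatment of the quadratic pieces coincides exactly with the paper's: the paper also reduces the claim to $\varphi_{ij}\geq\tfrac{1}{2}F''_{h_{ij}}(h_{ij}^t)$ and establishes it with the same diagonal-dominance bounds $(\vbf{W}^T\vbf{W}\vbf{H})_{ij}\geq(\vbf{W}^T\vbf{W})_{ii}h_{ij}^t$ and $(\vbf{1}_r^T\vbf{1}_r\vbf{H})_{ij}\geq(\vbf{1}_r^T\vbf{1}_r)_{ii}h_{ij}^t$, likewise discarding the non-negative $(\vbf{\theta}-\vbf{\beta})^-$ contribution. The divergence, and the genuine gap in your proposal, is the binary regulariser. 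Your Taylor expansion of its residual, $(6(h_{ij}^t)^2-6h_{ij}^t+1)\delta^2+(4h_{ij}^t-2)\delta^3+\delta^4$ with $\delta=h_{ij}-h_{ij}^t$, is correct, but the announced plan to ``absorb'' the cubic and quartic tails into the quadratic coefficient cannot be carried out globally. The share of $\varphi_{ij}$ earmarked for this term is $\bigl(12\lambda_2(h_{ij}^t)^3+2\lambda_2 h_{ij}^t\bigr)/(2h_{ij}^t)\cdot\delta^2=\lambda_2\bigl(6(h_{ij}^t)^2+1\bigr)\delta^2$, so the inequality you need reduces to $\delta^2+(4h_{ij}^t-2)\delta-6h_{ij}^t\leq 0$ for all $\delta\geq -h_{ij}^t$, which fails as $\delta\to+\infty$. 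More structurally, $G(\cdot,h_{ij}^t)$ is a quadratic in $h_{ij}$ while $F_{h_{ij}}$ contains the quartic $\lambda_2(h_{ij}^2-h_{ij})^2$ with positive leading coefficient, so no finite $\varphi_{ij}$ can make $G\geq F$ on all of $[0,\infty)$; the majorisation could only be salvaged by a non-quadratic auxiliary function (retaining a $\delta^4$ term) or by restricting $h_{ij}$ to a bounded range such as $[0,1]$.

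In fairness, the paper's own proof does not close this gap either: it writes $F_{h_{ij}}$ as identically equal to its second-order Taylor polynomial, which silently deletes the $\delta^3$ and $\delta^4$ terms of the quartic regulariser, and its final chain of inequalities inserts a $-12\lambda_2 h_{ij}^t$ term whose comparison with $-6\lambda_2(h_{ij}^t)^2$ only goes the right way when $h_{ij}^t\geq 2$. So your decomposition is, if anything, more candid in isolating where the difficulty lies; but the step you defer to a ``Zhang-style technique'' is precisely the step that does not go through, and a complete proof would have to either change $G$ or add an explicit boundedness assumption on $\vbf{H}$.
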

\begin{proof}
It is obvious that $G(h_{ij},h_{ij})=F_{h_{ij}}$. So we need only prove that $G(h_{ij}, h_{ij}^t)\geq F_{h_{ij}}$. Considering the Taylor series expansion of $F_{h_{ij}}$,
\begin{eqnarray*}
    F_{h_{ij}}= F_{h_{ij}}(h_{ij}^t)+F_{h_{ij}}^{\prime}(h_{ij}^t)(h_{ij}-h_{ij}^t)
    +\frac{1}{2}F_{h_{ij}}^{\prime\prime}(h_{ij}^t)(h_{ij}-h_{ij}^t)^2 \enspace,
\end{eqnarray*}
$G(h_{ij}, h_{ij}^t)\geq F_{h_{ij}}$ is equivalent to 
$\varphi_{ij}\geq \frac{1}{2}F_{h_{ij}}^{\prime\prime}(h_{ij}^t)$, where
\begin{eqnarray*}
	F_{h_{ij}}^{\prime\prime}(h_{ij}^t)=2(\vbf{W}^T\vbf{W})_{ii}+2\lambda_1(\vbf{1}_r^T\vbf{1}_r)_{ii}+12\lambda_2(h_{ij}^t)^2
    -12\lambda_2h_{ij}^t+2\lambda_2\enspace.
\end{eqnarray*}
To prove the above inequality, we have
\begin{eqnarray*}
    \varphi_{ij}h_{ij}^t&=&(\vbf{W}^T\vbf{WH})_{ij}+\lambda_1(\vbf{1}_r^T\vbf{1}_r\vbf{H})_{ij}+6\lambda_2(h_{ij}^t)^3+\lambda_2h_{ij}^t + 0.5\lambda_3 (\vbf{\theta}-\vbf{\beta})_{ij}^-\\
    &=&\sum_{l=1}^{k}(\vbf{W}^T\vbf{W})_{il}h_{lj}^t+
    \lambda_1\sum_{l=1}^{k}(\vbf{1}_r^T\vbf{1}_r)_{il}h_{lj}^t +
    +6\lambda_2(h_{ij}^t)^3+\lambda_2h_{ij}^t + 0.5 \lambda_3 (\vbf{\theta}-\vbf{\beta})_{ij}^-\\
    &\geq& (\vbf{W}^T\vbf{W})_{ii}h_{ij}^t+\lambda_1(\vbf{1}_r^T\vbf{1}_r)_{ii}h_{ij}^t+6\lambda_2(h_{ij}^t)^3+\lambda_2h_{ij}^t-12\lambda_2h_{ij}^t\\	
    &\geq& h_{ij}^t\big((\vbf{W}^T\vbf{W})_{ii}+\lambda_1(\vbf{1}_r^T\vbf{1}_r)_{ii}+6\lambda_2(h_{ij}^t)^2-6\lambda_2h_{ij}^t+\lambda_2\big)\\
    &=&\frac{1}{2}F_{h_{ij}}^{\prime\prime}(h_{ij}^t)h_{ij}^t\enspace.
\end{eqnarray*}
Thus, $G(h_{ij}, h_{ij}^t)\geq F_{h_{ij}}$ as claimed.\qed
\end{proof}

Replacing $G(h_{ij}, h_{ij}^t)$ in Eq.~(\ref{equ:argmin}) by Eq.~(\ref{equ:auxiliary1}) and setting $\frac{\partial G(h_{ij}, h_{ij}^t)}{\partial h_{ij}}$ to be 0 results in the update rule in Eq.~(\ref{equ:h1}). 
Since Eq.~(\ref{equ:auxiliary1}) is an auxiliary function, $F_{h_{ij}}$ is non-increasing under this update rule.

\section{Experimental Results of Parameter Sensitivity on Regularisation Parameters $\lambda_1$,$\lambda_2$,$\lambda_3$, and $k$}\label{app:B}

\textbf{a) Effect of Parameter $\lambda_1$}:
    As we can see from Figure~\ref{fig:lambdaw}, GG-NMF and TopicResponse are not sensitive to $\lambda_1$, performing stably with varying $\lambda_1$. TopicResponse constantly performs better in terms of negative log-likelihood while maintaining the comparable performance in terms of the other three metrics.
    
    \textbf{b) Effect of Parameter $\lambda_2$}:
    It can be seen from Figure~\ref{fig:lambda1hideal} that GG-NMF and TopicResponse perform well in terms of $\norm{\vbf{1}_r\vbf{H}-\vbf{H}_{ideal}}$ (Figure~\ref{fig:lambdahidealhideal1} to Figure~\ref{fig:lambdahidealhideal3}) and $\norm{\vbf{H}\circ\vbf{H}-\vbf{H}}$ (Figure~\ref{fig:lambdahidealhbinary1} to Figure~\ref{fig:lambdahidealhbinary3}) when $\lambda_2$ varies from $10^{0}$ to $10^{2}$, and from $10^{-3}$ to $10^{0}$  respectively. 
    $\norm{\vbf{V-WH}}$ gets worse as $\lambda_1$ increases, but does not change a lot compared to $\norm{\vbf{1}_r\vbf{H}-\vbf{H}_{ideal}}$ and $\norm{\vbf{H}\circ\vbf{H}-\vbf{H}}$.
    As $\lambda_2$ increases, the performance of GG-NMF and TopicResponse in terms of negative log-likelihood decrease, and TopicResponse constantly performs better than GG-NMF. Overall, $\lambda_2$ with values around 1.0 is good for GG-NMF and TopicResponse.

    \textbf{c) Effect of Parameter $\lambda_3$}:
    It can be seen that GG-NMF and TopicResponse perform well in terms of $\norm{\vbf{1}_r\vbf{H}-\vbf{H}_{ideal}}$ (Figure~\ref{fig:lambdahbinaryhideal1} to Figure~\ref{fig:lambdahbinaryhideal3}) and $\norm{\vbf{H}\circ\vbf{H}-\vbf{H}}$ (Figure~\ref{fig:lambdahbinaryhbinary1} to Figure~\ref{fig:lambdahbinaryhbinary3}) when $\lambda_3$ varies from $10^{-1}$ to $10^{0}$, and from $10^{0}$ to $10^{2}$  respectively. 
    Similar to $\lambda_2$, $\lambda_3$ does not affect $\norm{\vbf{V-WH}}$ significantly. TopicResponse constantly achieves better negative log-likelihood than GG-NMF. Overall, $\lambda_3$ with values around 1.0 is good for GG-NMF and TopicResponse.

    \textbf{d) Effect of the number of topics $k$}:
    It can be seen from Figure~\ref{fig:lambdatopic} that TopicResponse constantly outperforms GG-NMF in terms of negative log-likelihood, while getting slightly worse performance in the other three metrics. This is reasonable, as GG-NMF has more constraints and hence
    the model itself is less likely to perform as well as
    the less constrained GG-NMF in other metrics. 
    Overall, GG-NMF and TopicResponse perform well in the experiments when $k$ is set to 10 or 15. We choose 10 as the value of $k$ since a smaller number of topics are easier to analyse.
    
    \begin{figure}[!htb]
        \centering
        \begin{subfigure}[t]{0.32\textwidth}
            \centering
            \includegraphics[scale=0.30]{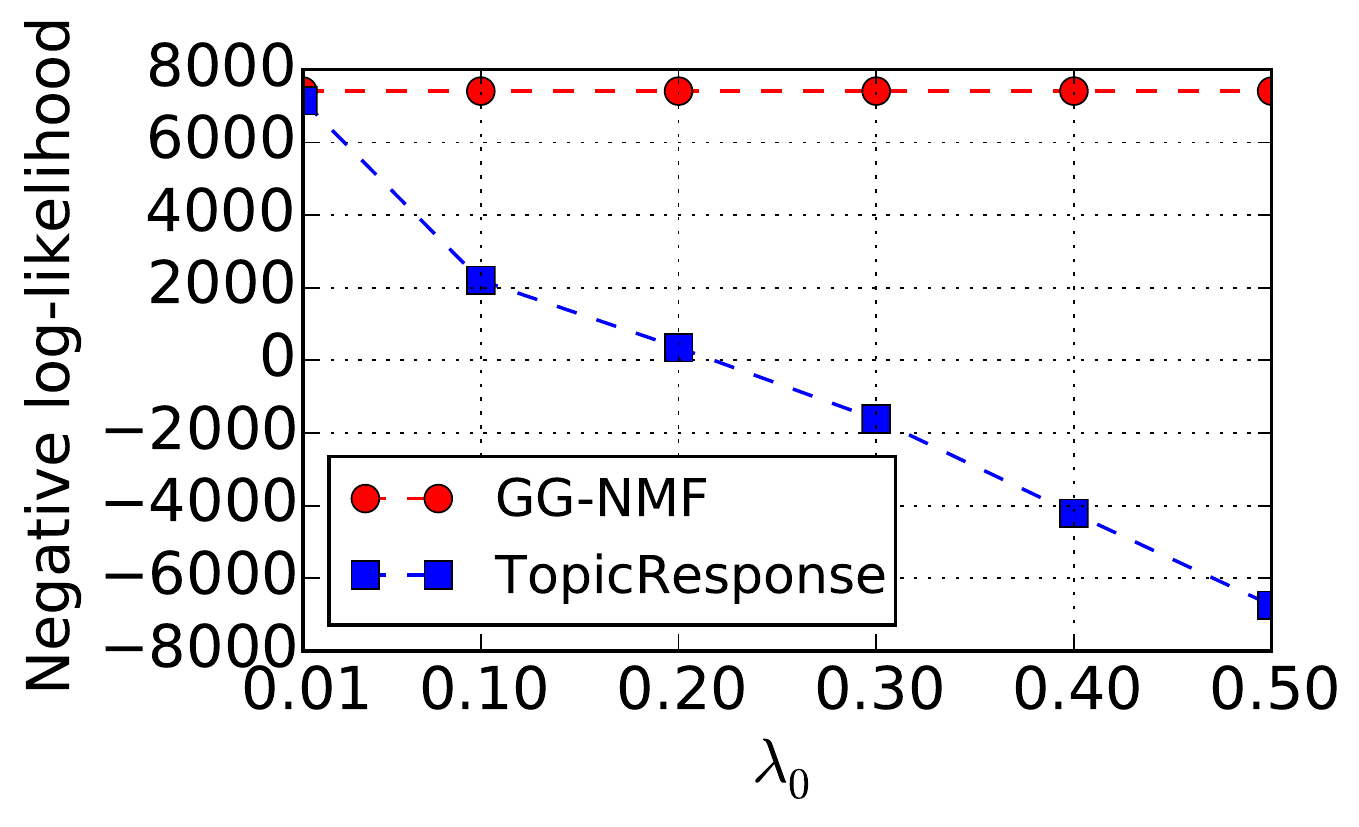}
            \caption{Negative log-ikelihood on EDU}
        \end{subfigure}%
        ~
        \begin{subfigure}[t]{0.32\textwidth}
            \centering
            \includegraphics[scale=0.30]{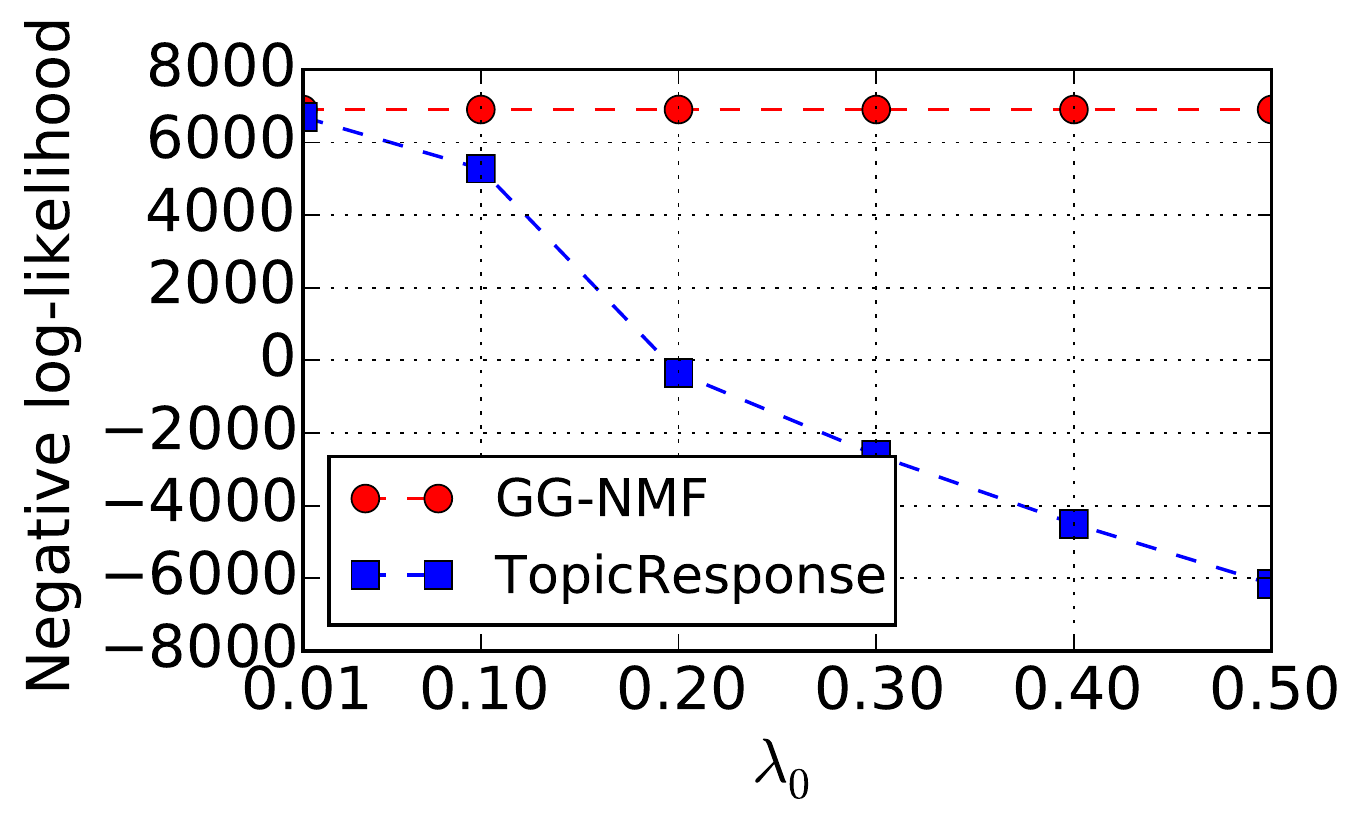}
            \caption{Negative log-ikelihood on ECON}
        \end{subfigure}
        ~
        \begin{subfigure}[t]{0.32\textwidth}
            \centering
            \includegraphics[scale=0.30]{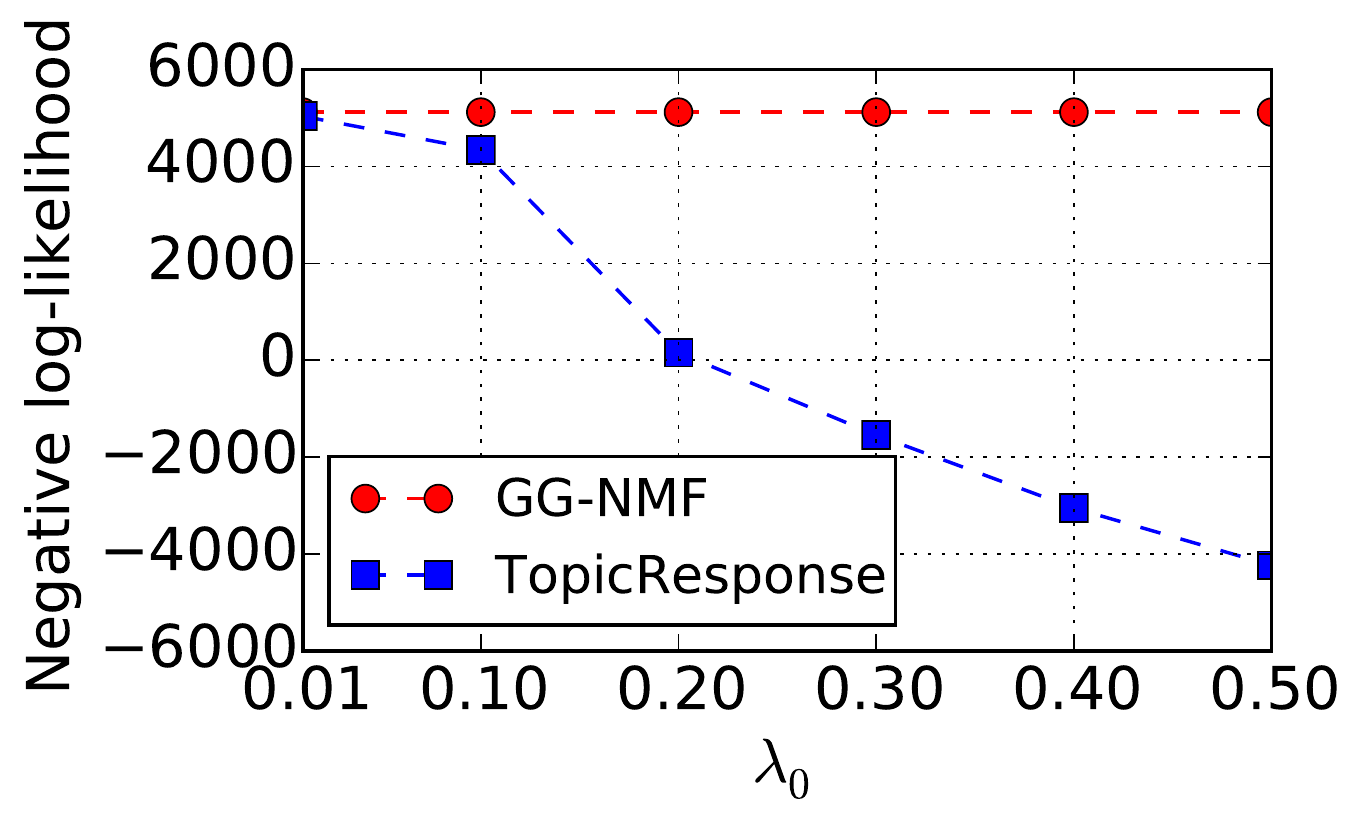}
            \caption{Negative log-ikelihood on OPT}
        \end{subfigure}
        ~
        \begin{subfigure}[t]{0.32\textwidth}
            \centering
            \includegraphics[scale=0.30]{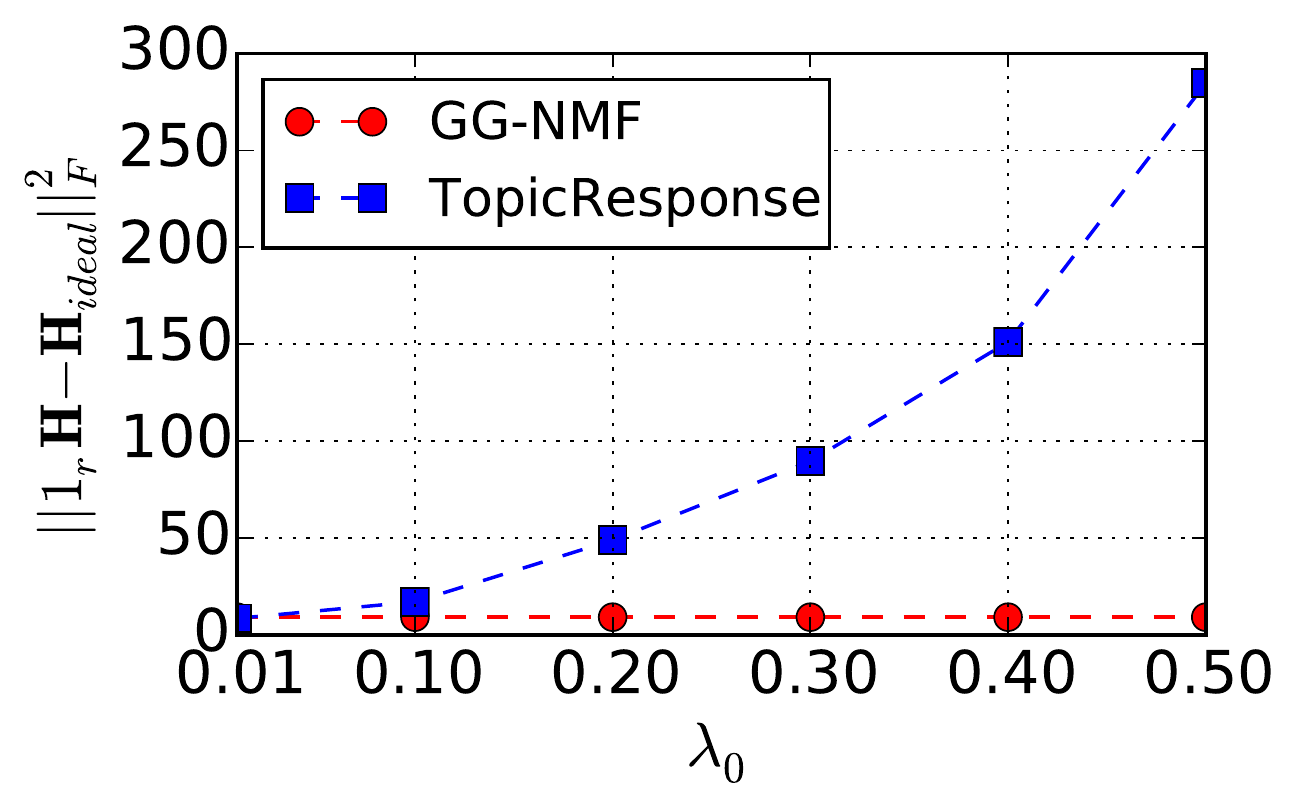}
            \caption{$\norm{\vbf{1}_r\vbf{H}-\vbf{H}_{ideal}}$ on EDU}
        \end{subfigure}%
        ~
        \begin{subfigure}[t]{0.32\textwidth}
            \centering
            \includegraphics[scale=0.30]{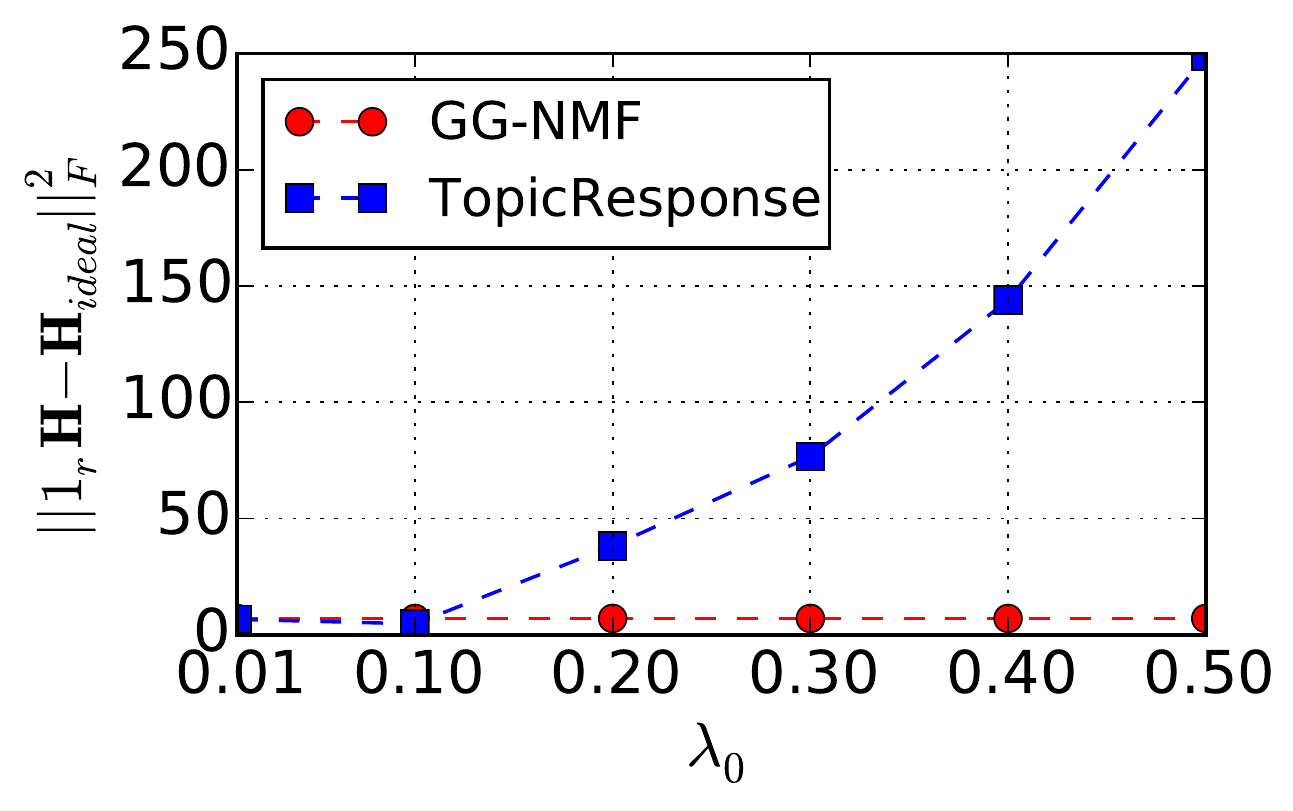}
            \caption{$\norm{\vbf{1}_r\vbf{H}-\vbf{H}_{ideal}}$ on ECON}
        \end{subfigure}
        ~
        \begin{subfigure}[t]{0.32\textwidth}
            \centering
            \includegraphics[scale=0.30]{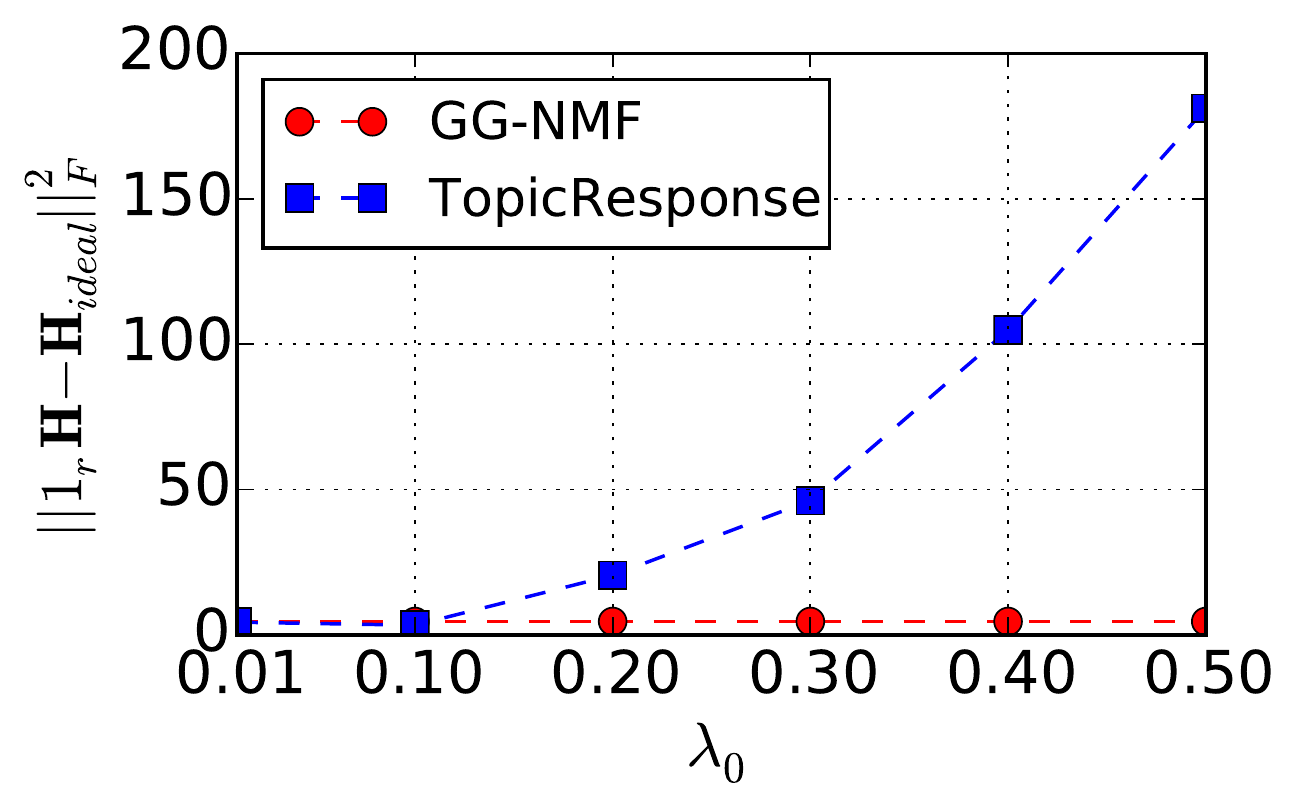}
            \caption{$\norm{\vbf{1}_r\vbf{H}-\vbf{H}_{ideal}}$ on OPT}
        \end{subfigure}
        ~
        \begin{subfigure}[t]{0.32\textwidth}
            \centering
            \includegraphics[scale=0.30]{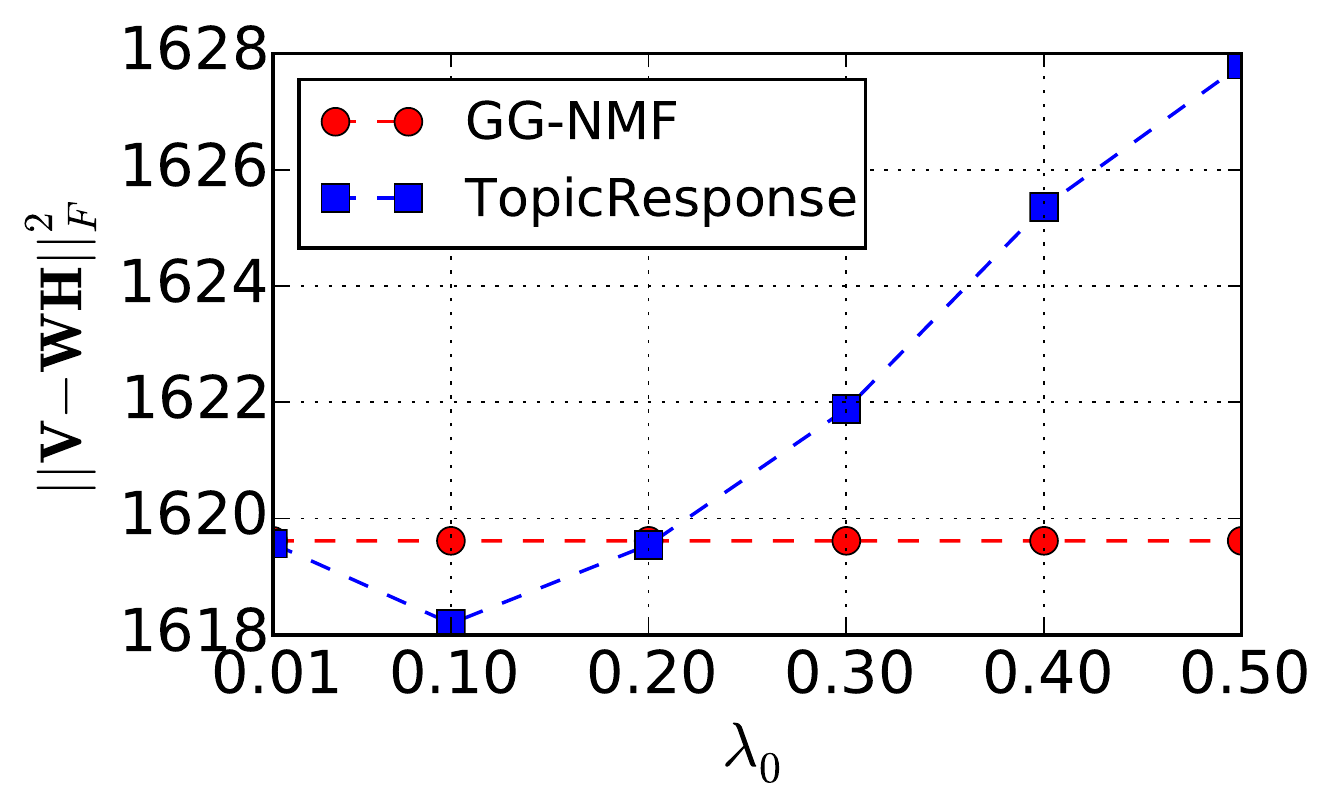}
            \caption{$\norm{\vbf{V}-\vbf{WH}}$ on EDU}
        \end{subfigure}%
        ~
        \begin{subfigure}[t]{0.32\textwidth}
            \centering
            \includegraphics[scale=0.30]{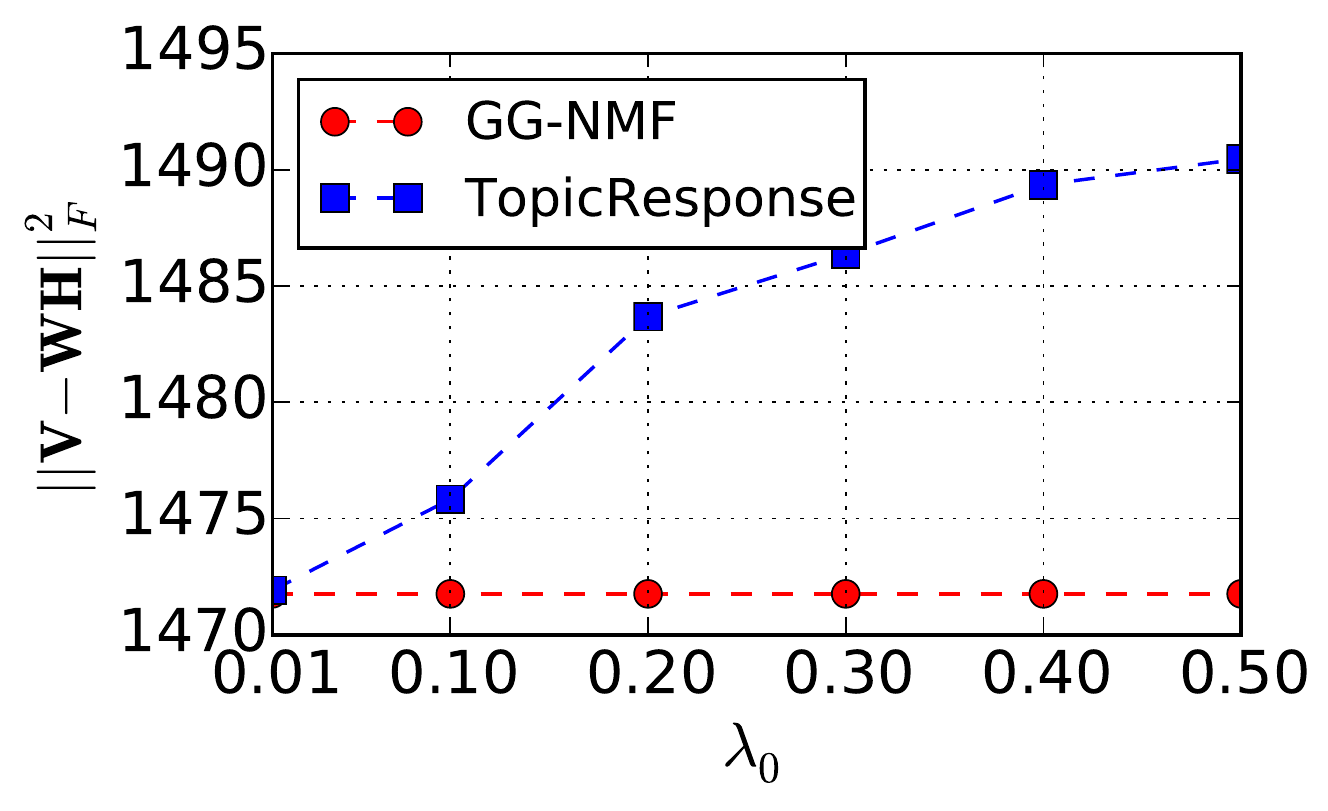}
            \caption{$\norm{\vbf{V}-\vbf{WH}}$ on ECON}
        \end{subfigure}
        ~
        \begin{subfigure}[t]{0.32\textwidth}
            \centering
            \includegraphics[scale=0.30]{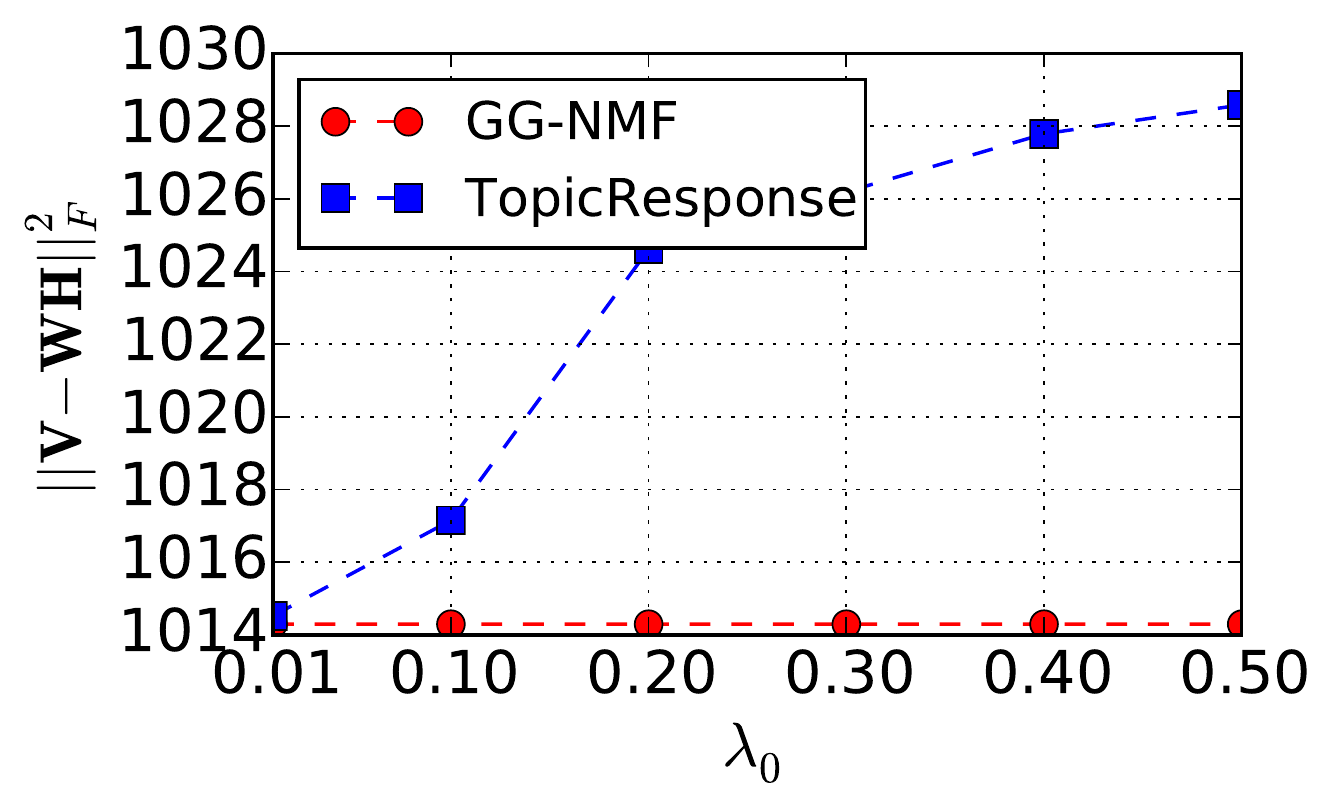}
            \caption{$\norm{\vbf{V}-\vbf{WH}}$ on OPT}
        \end{subfigure}
        ~
        \begin{subfigure}[t]{0.32\textwidth}
            \centering
            \includegraphics[scale=0.30]{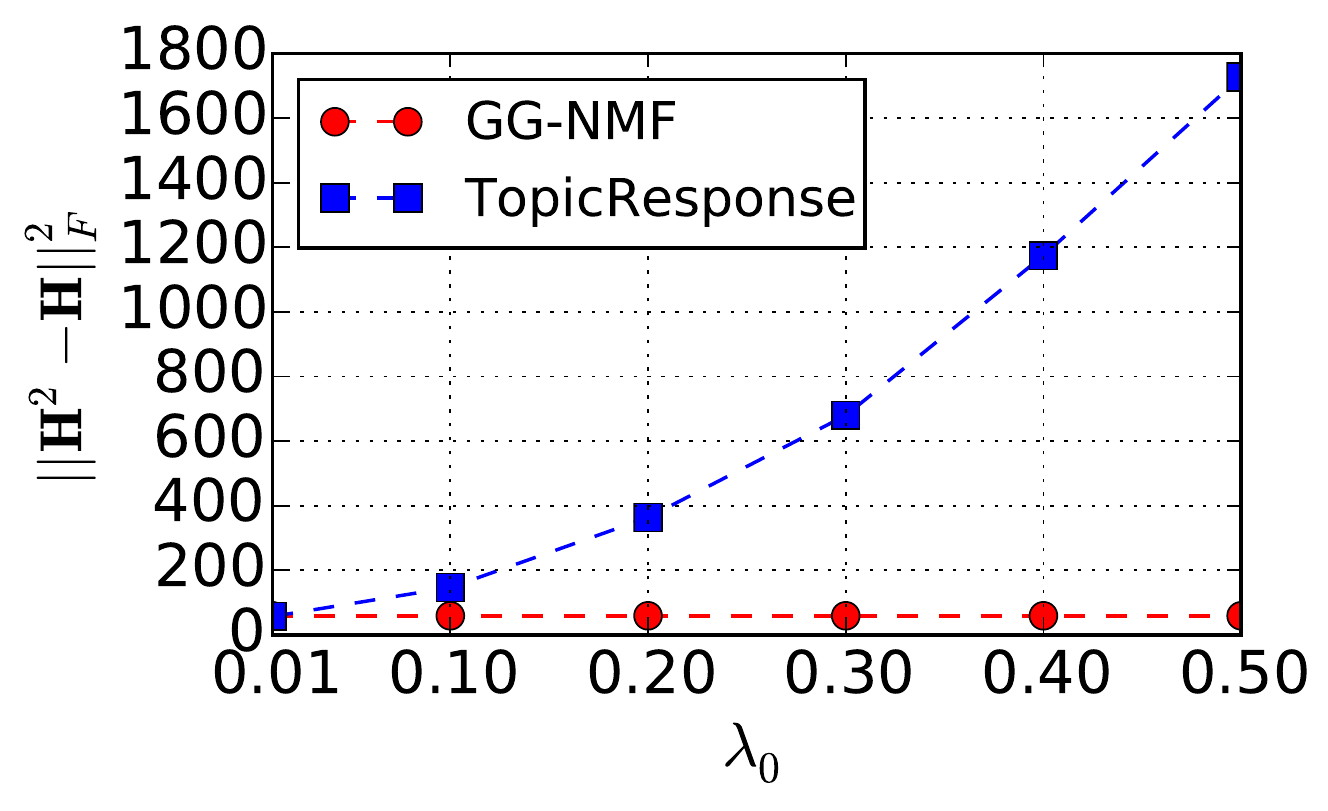}
            \caption{$\norm{\vbf{H}\circ\vbf{H}-\vbf{H}}$ on EDU}
        \end{subfigure}%
        ~
        \begin{subfigure}[t]{0.32\textwidth}
            \centering
            \includegraphics[scale=0.30]{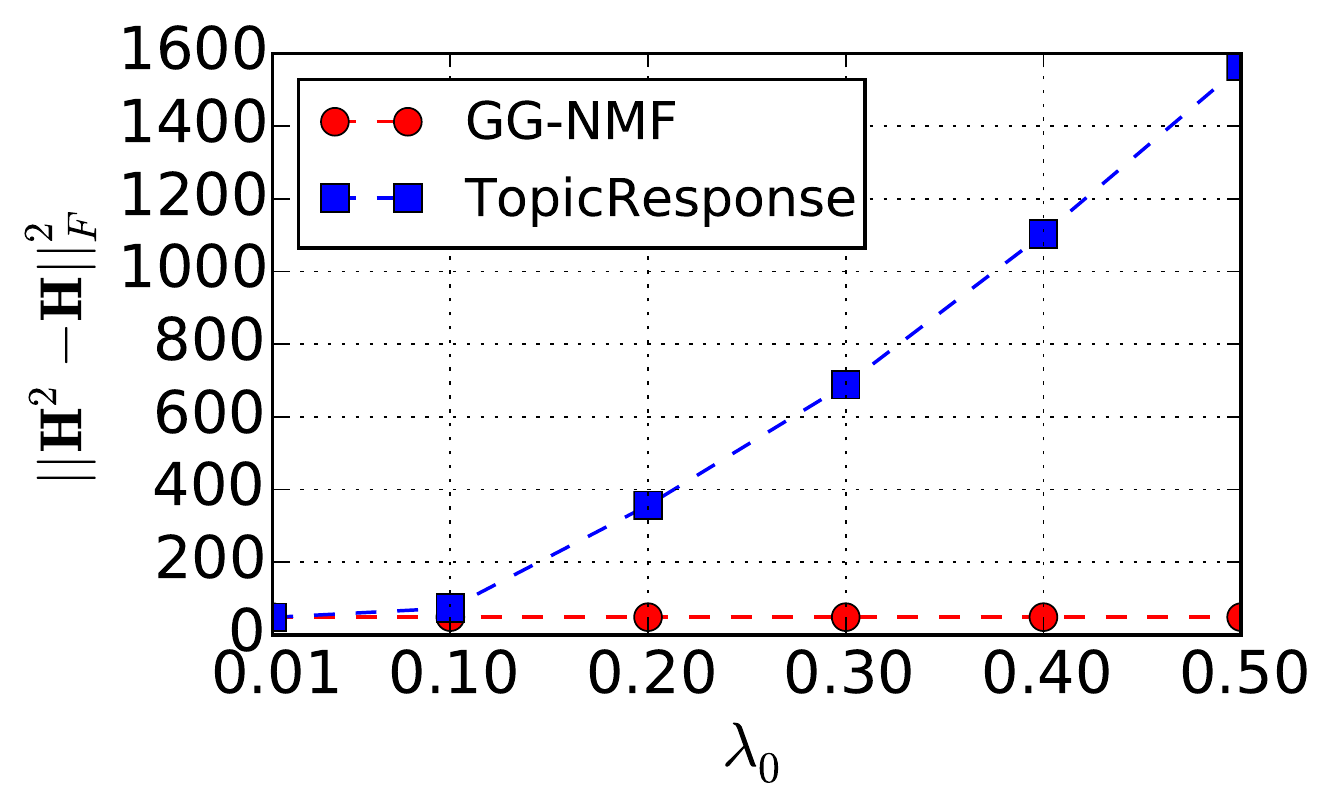}
            \caption{$\norm{\vbf{H}\circ\vbf{H}-\vbf{H}}$ on ECON}
        \end{subfigure}
        ~
        \begin{subfigure}[t]{0.32\textwidth}
            \centering
            \includegraphics[scale=0.30]{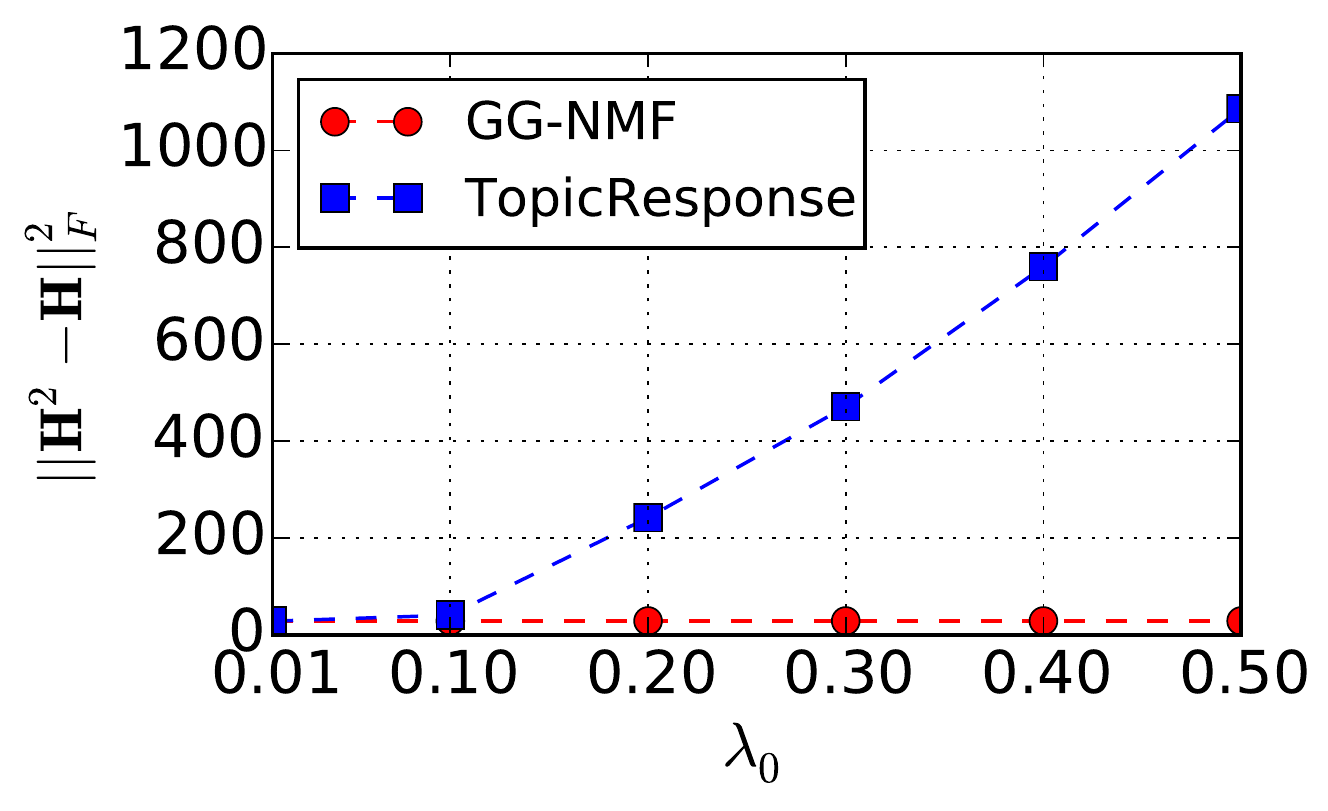}
            \caption{$\norm{\vbf{H}\circ\vbf{H}-\vbf{H}}$ on OPT}
        \end{subfigure}
        \caption{Performance of GG-NMF and TopicResponse with varying $\lambda_0$.}
        \label{fig:lambdarasch}
    \end{figure}
    
    \begin{figure}[!htb]
        \centering
        \begin{subfigure}[t]{0.32\textwidth}
            \centering
            \includegraphics[scale=0.30]{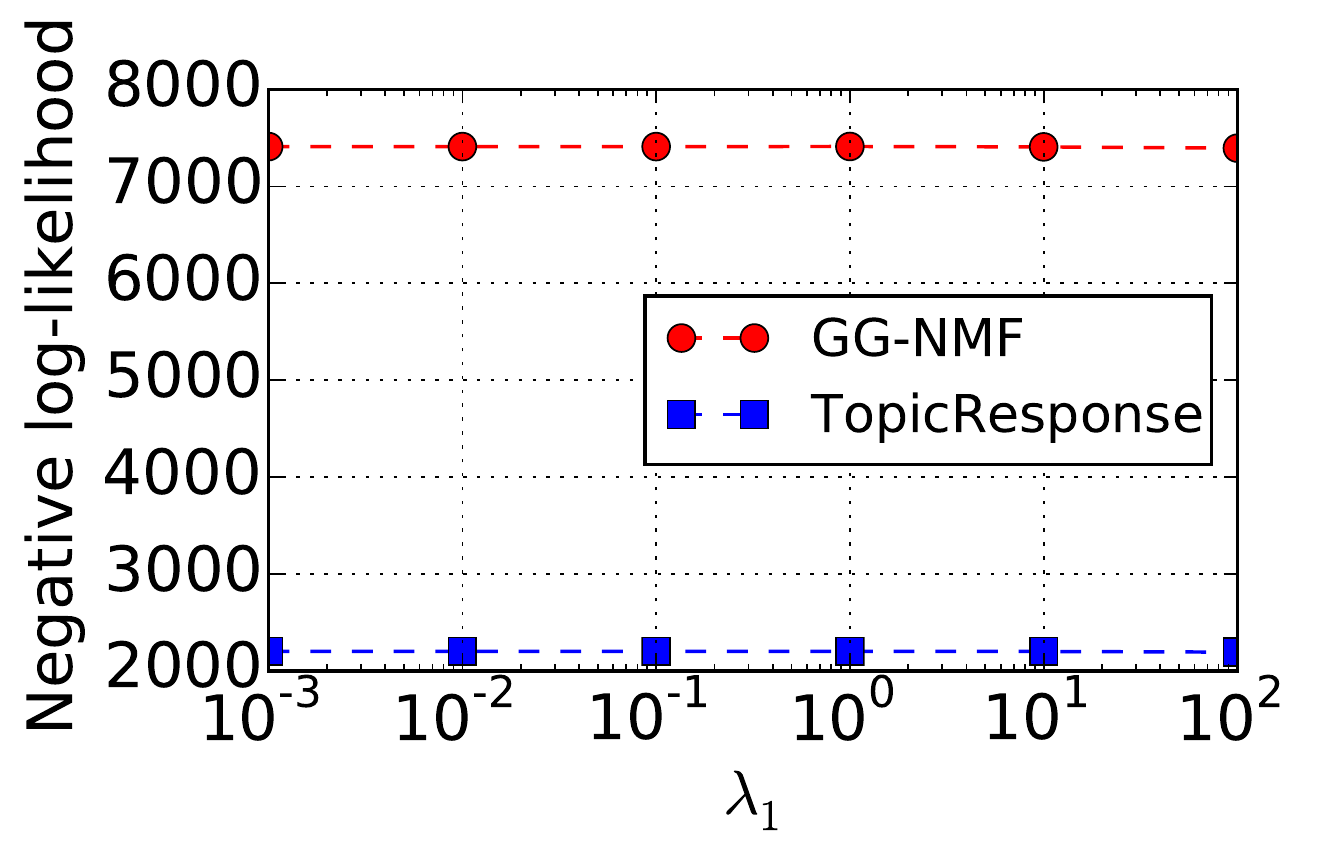}
            \caption{Negative log-likelihood on EDU}
        \end{subfigure}%
        ~
        \begin{subfigure}[t]{0.32\textwidth}
            \centering
            \includegraphics[scale=0.30]{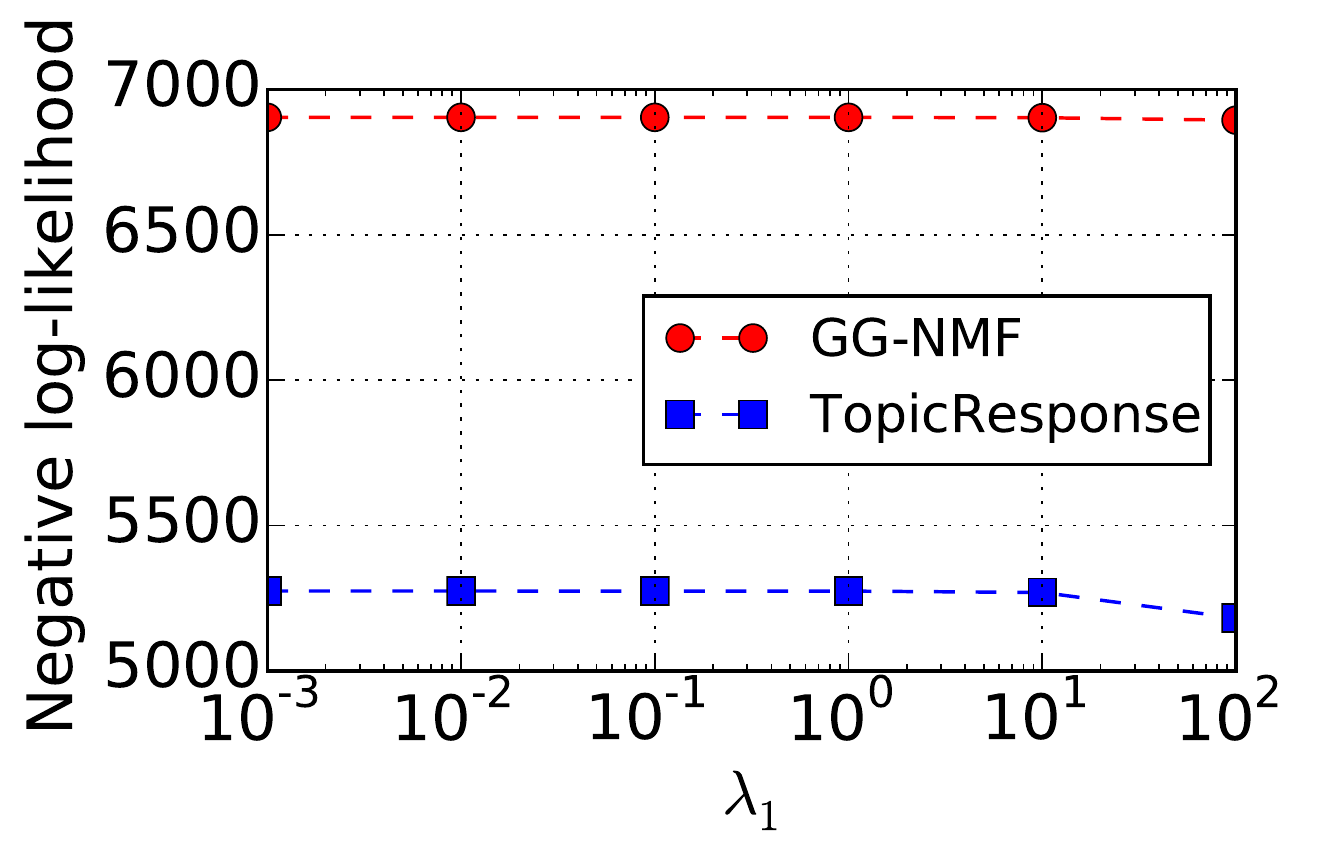}
            \caption{Negative log-likelihood on ECON}
        \end{subfigure}
        ~
        \begin{subfigure}[t]{0.32\textwidth}
            \centering
            \includegraphics[scale=0.30]{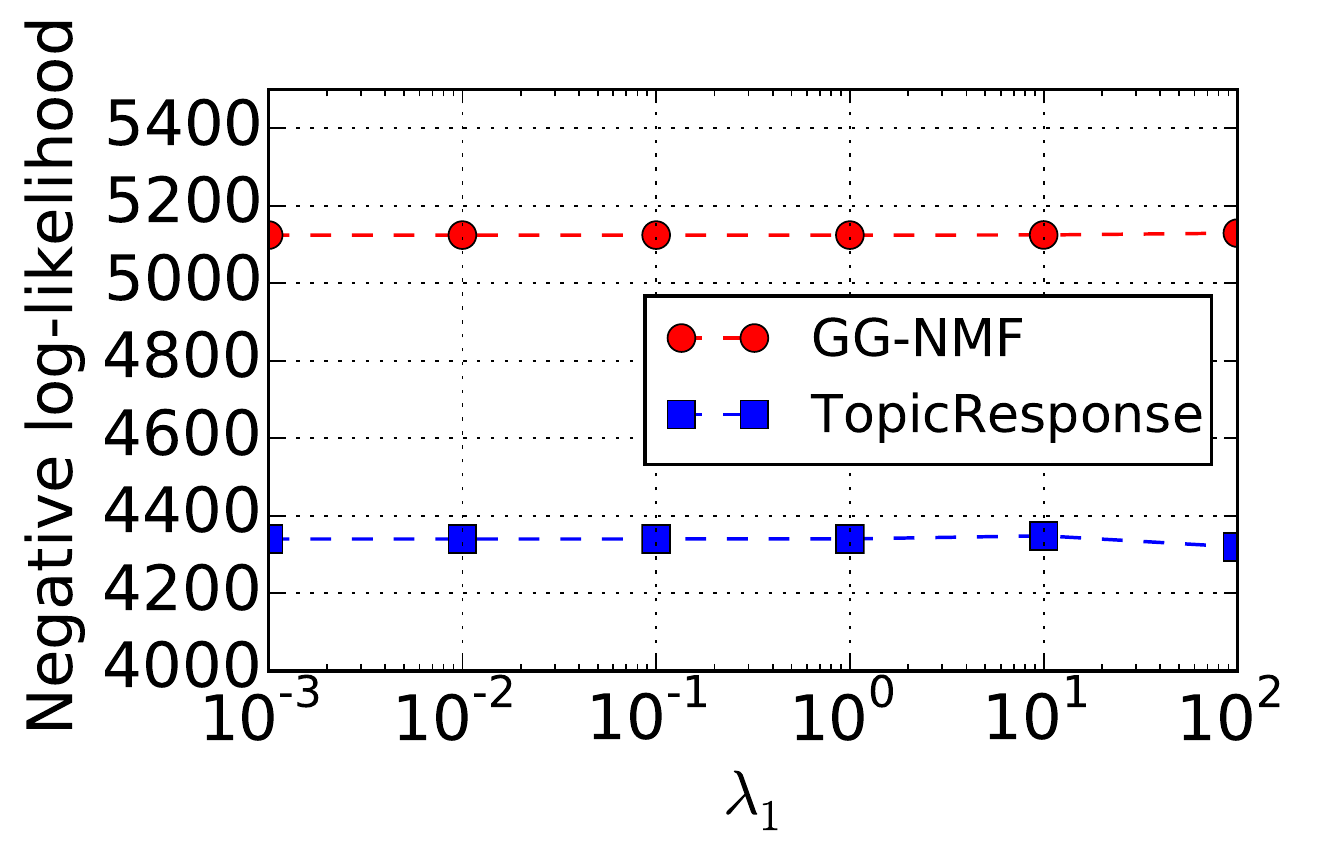}
            \caption{Negative log-likelihood on OPT}
        \end{subfigure}
        ~
        \begin{subfigure}[t]{0.32\textwidth}
            \centering
            \includegraphics[scale=0.30]{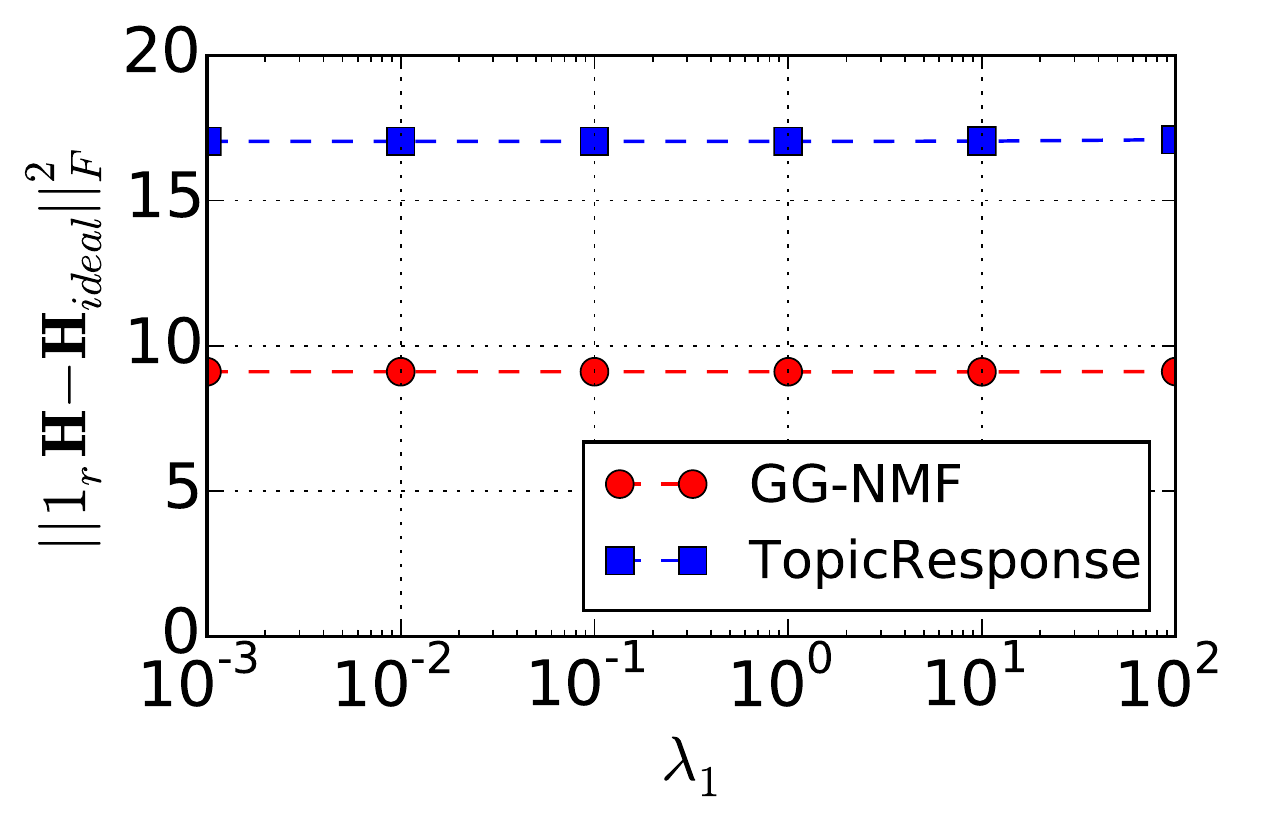}
            \caption{$\norm{\vbf{1}_r\vbf{H}-\vbf{H}_{ideal}}$ on EDU}
        \end{subfigure}%
        ~
        \begin{subfigure}[t]{0.32\textwidth}
            \centering
            \includegraphics[scale=0.30]{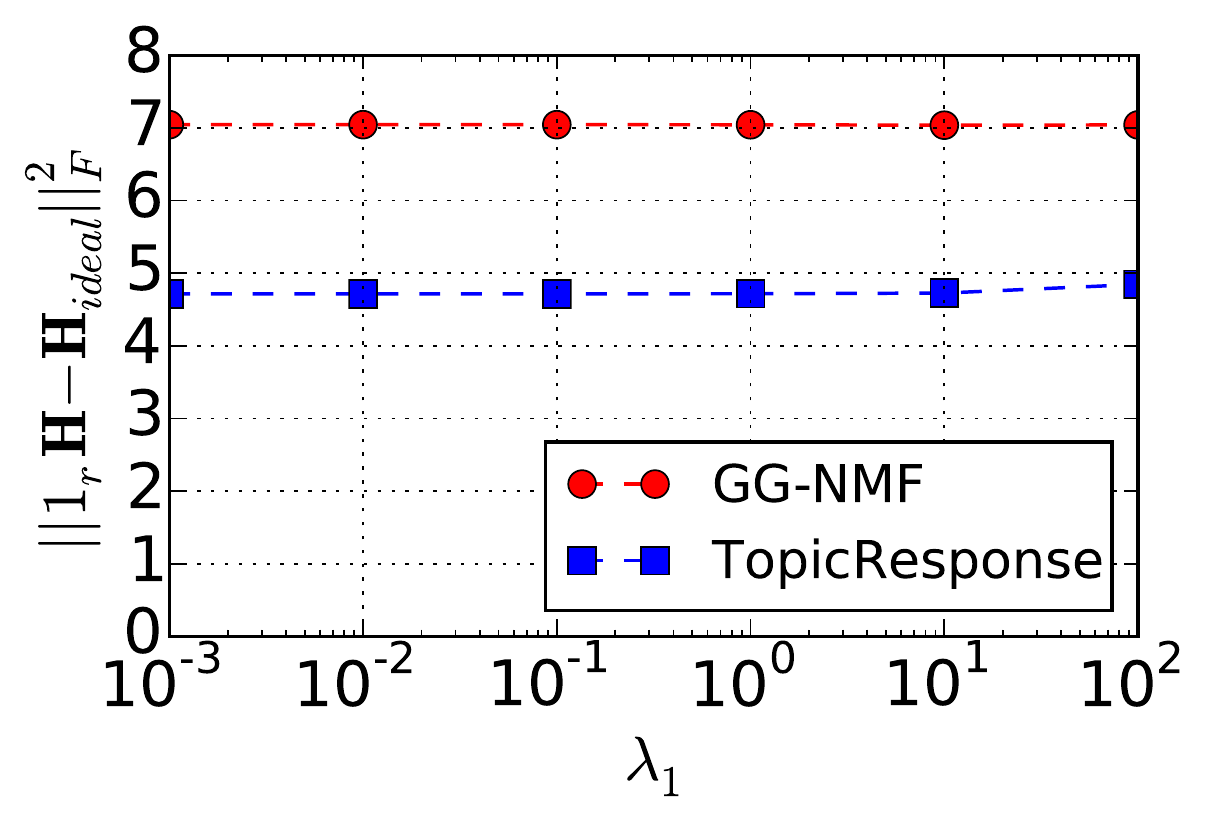}
            \caption{$\norm{\vbf{1}_r\vbf{H}-\vbf{H}_{ideal}}$ on ECON}
        \end{subfigure}
        ~
        \begin{subfigure}[t]{0.32\textwidth}
            \centering
            \includegraphics[scale=0.30]{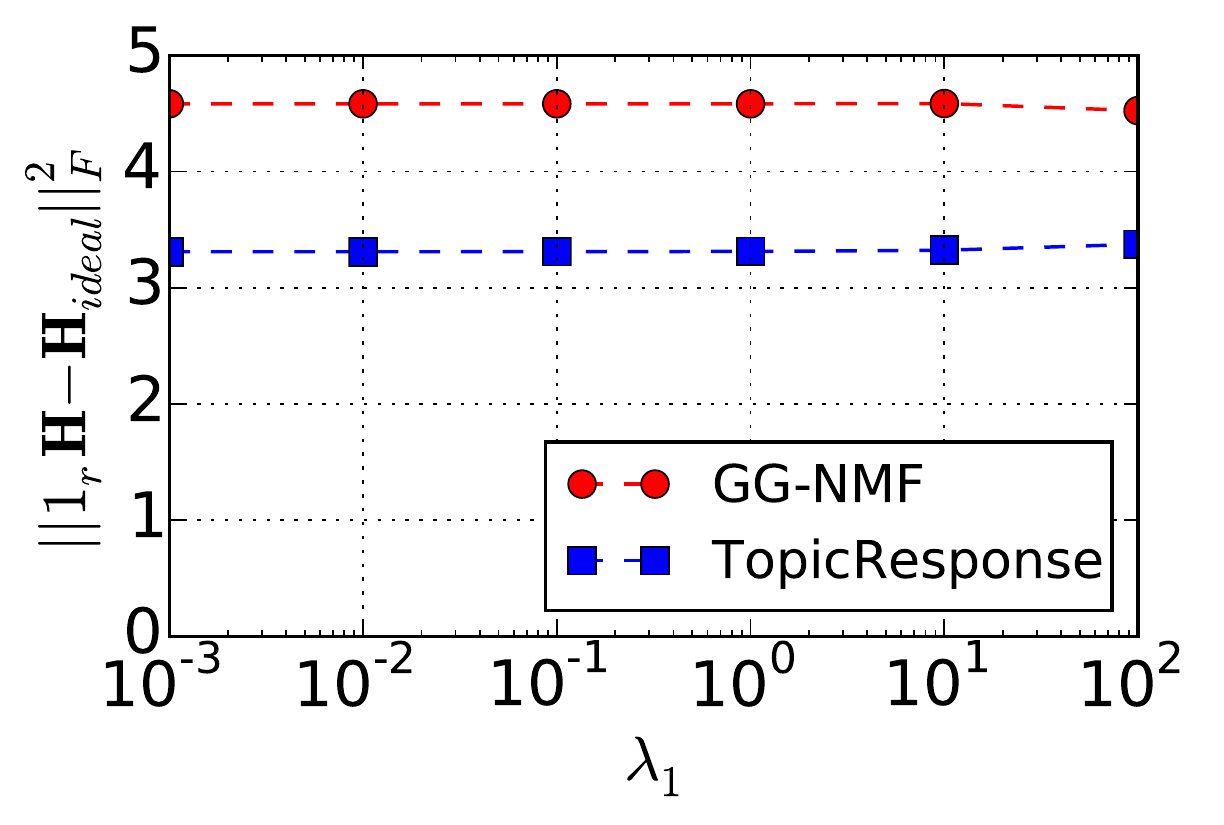}
            \caption{$\norm{\vbf{1}_r\vbf{H}-\vbf{H}_{ideal}}$ on OPT}
        \end{subfigure}
        ~
        \begin{subfigure}[t]{0.32\textwidth}
            \centering
            \includegraphics[scale=0.30]{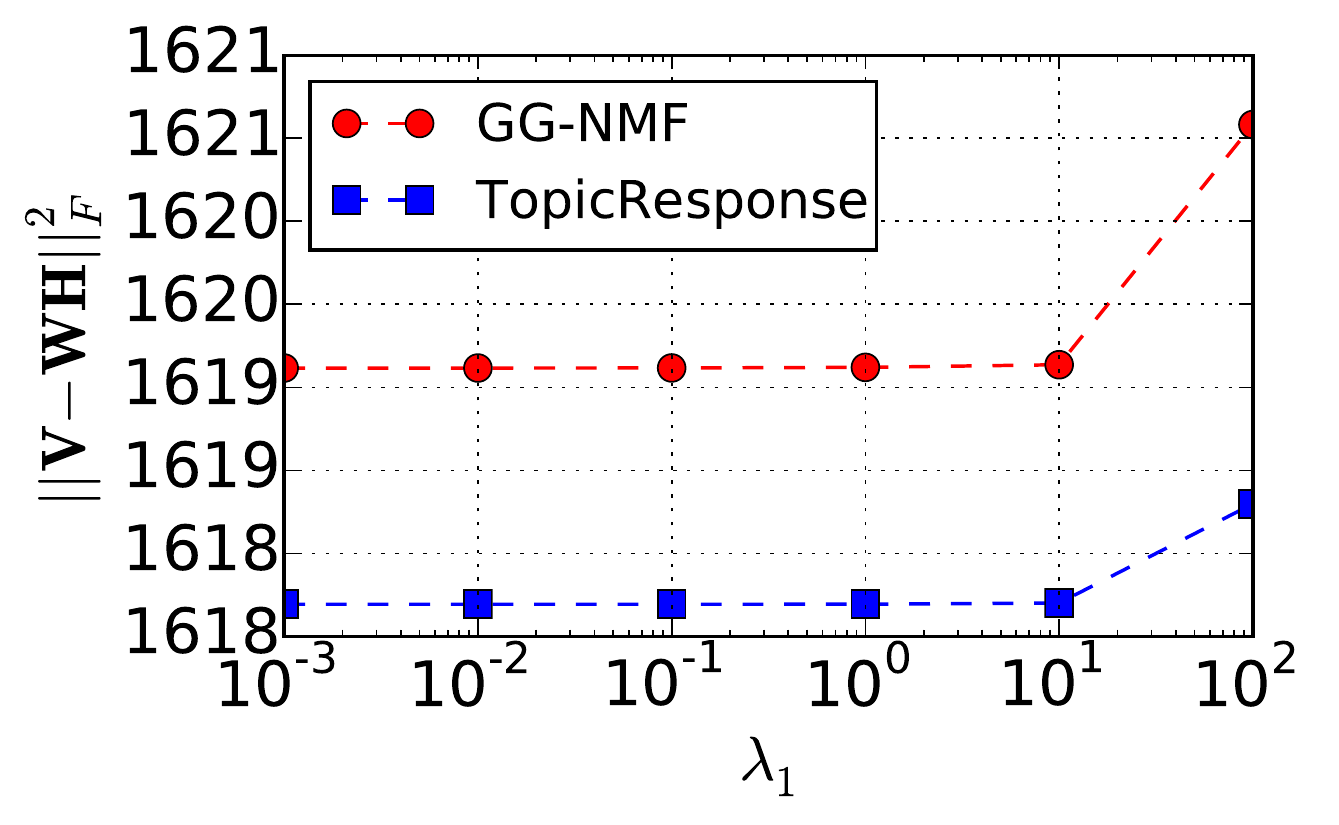}
            \caption{$\norm{\vbf{V}-\vbf{WH}}$ on EDU}
        \end{subfigure}%
        ~
        \begin{subfigure}[t]{0.32\textwidth}
            \centering
            \includegraphics[scale=0.30]{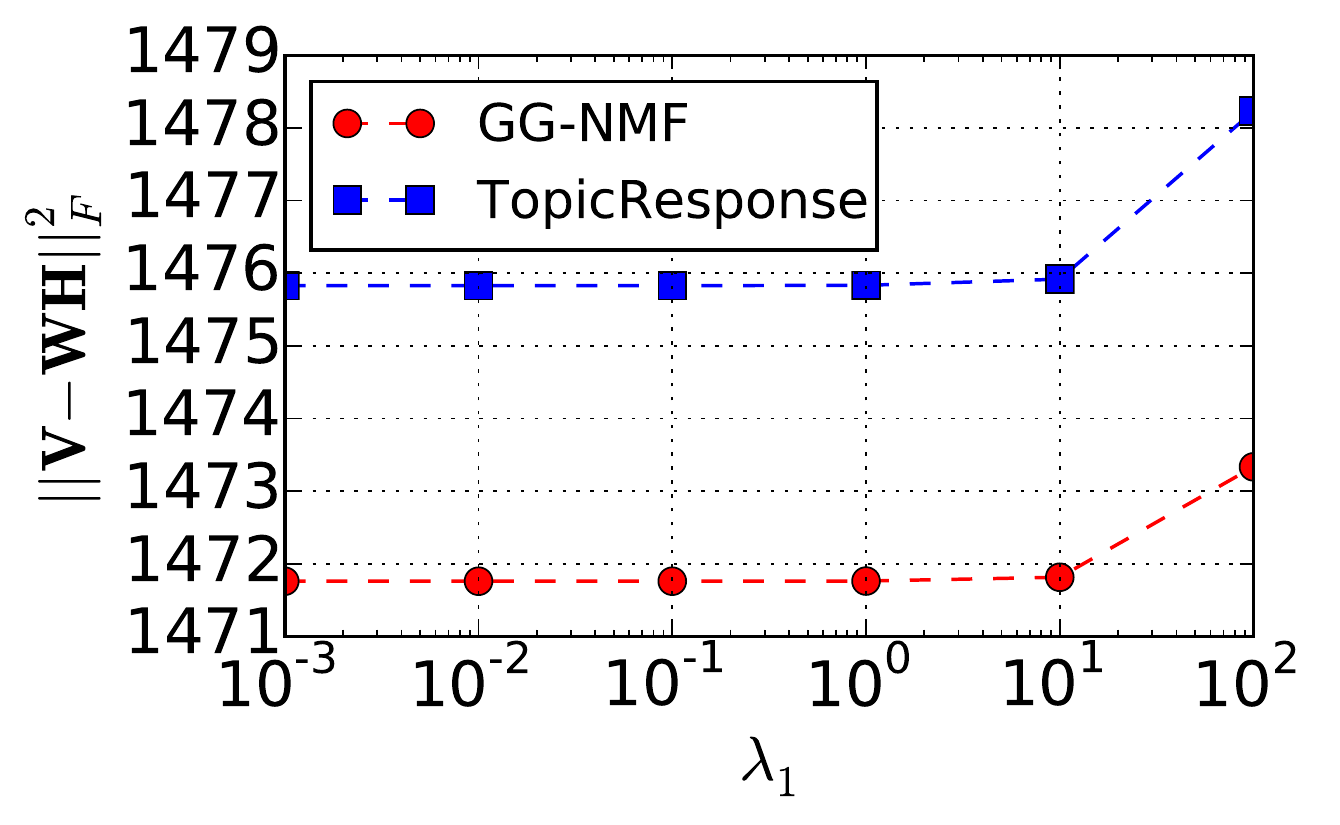}
            \caption{$\norm{\vbf{V}-\vbf{WH}}$ on ECON}
        \end{subfigure}
        ~
        \begin{subfigure}[t]{0.32\textwidth}
            \centering
            \includegraphics[scale=0.30]{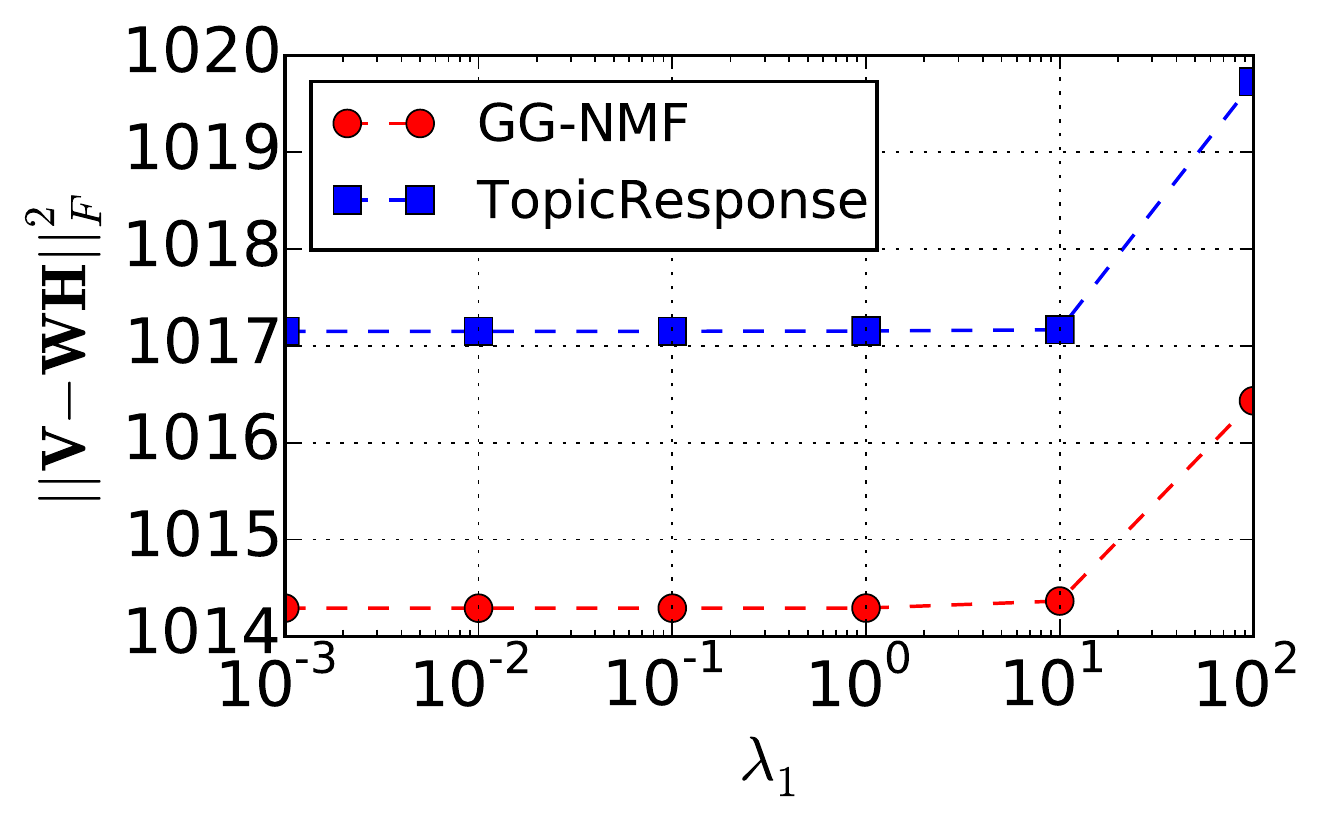}
            \caption{$\norm{\vbf{V}-\vbf{WH}}$ on OPT}
        \end{subfigure}
        ~
        \begin{subfigure}[t]{0.32\textwidth}
            \centering
            \includegraphics[scale=0.30]{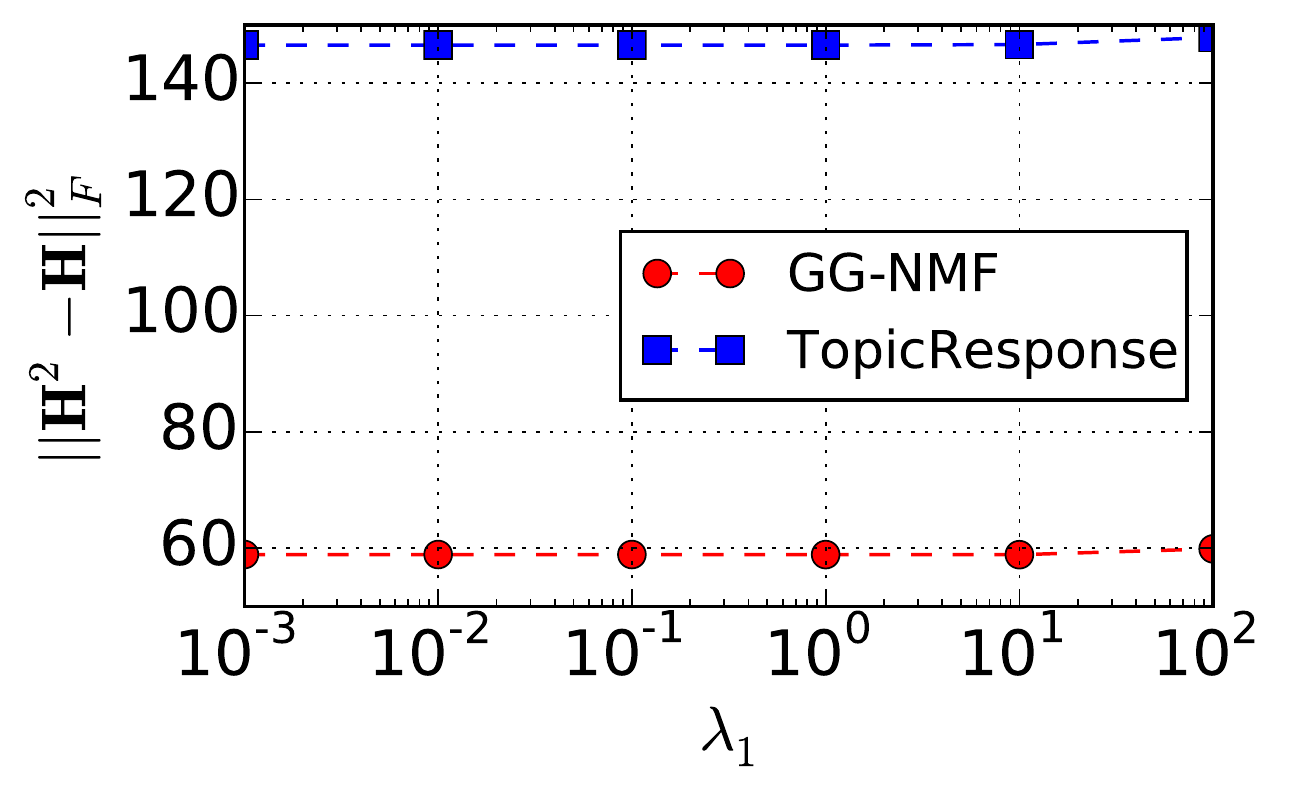}
            \caption{$\norm{\vbf{H}\circ\vbf{H}-\vbf{H}}$ on EDU}
        \end{subfigure}%
        ~
        \begin{subfigure}[t]{0.32\textwidth}
            \centering
            \includegraphics[scale=0.30]{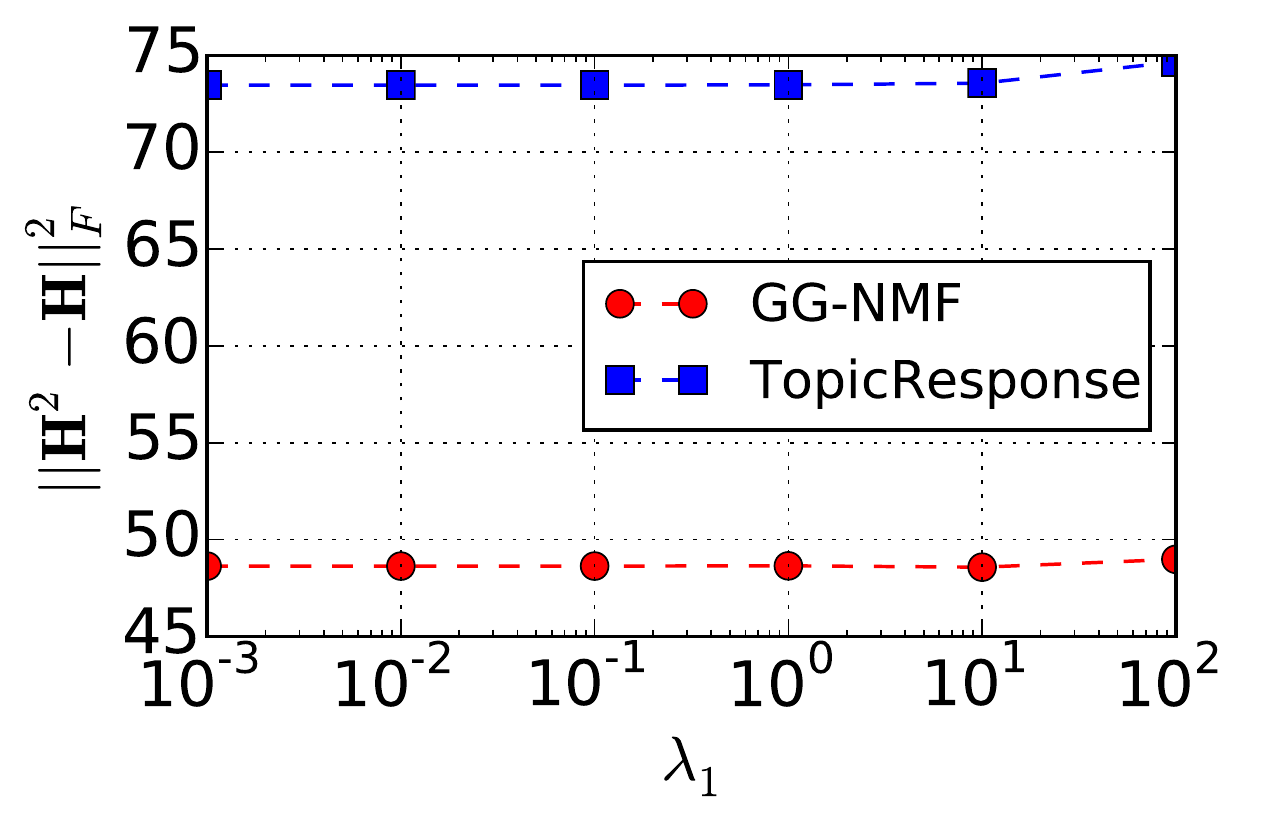}
            \caption{$\norm{\vbf{H}\circ\vbf{H}-\vbf{H}}$ on ECON}
        \end{subfigure}
        ~
        \begin{subfigure}[t]{0.32\textwidth}
            \centering
            \includegraphics[scale=0.30]{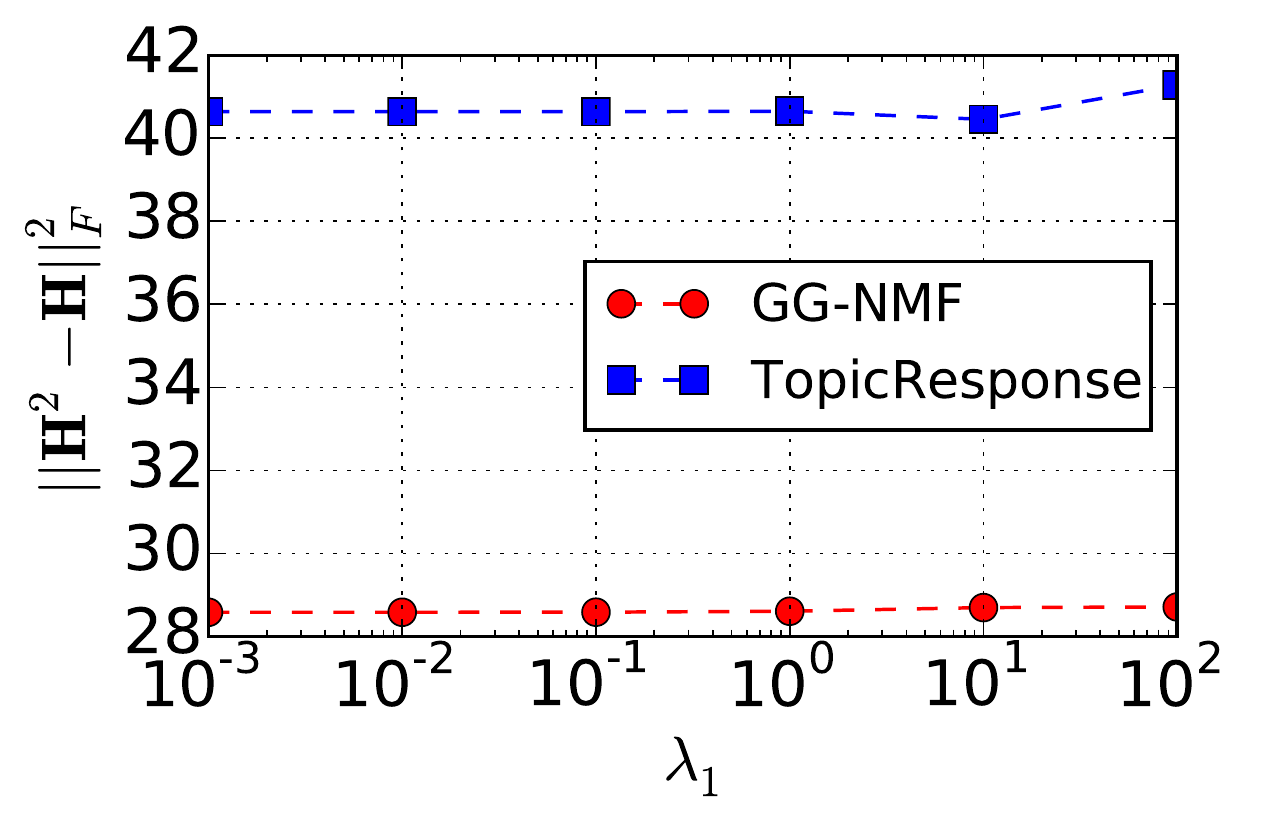}
            \caption{$\norm{\vbf{H}\circ\vbf{H}-\vbf{H}}$ on OPT}
        \end{subfigure}
        \caption{Performance of GG-NMF and TopicResponse with varying $\lambda_1$.}
        \label{fig:lambdaw}
    \end{figure}
    
    \begin{figure}[!htb]
        \centering
        \begin{subfigure}[t]{0.32\textwidth}
            \centering
            \includegraphics[scale=0.30]{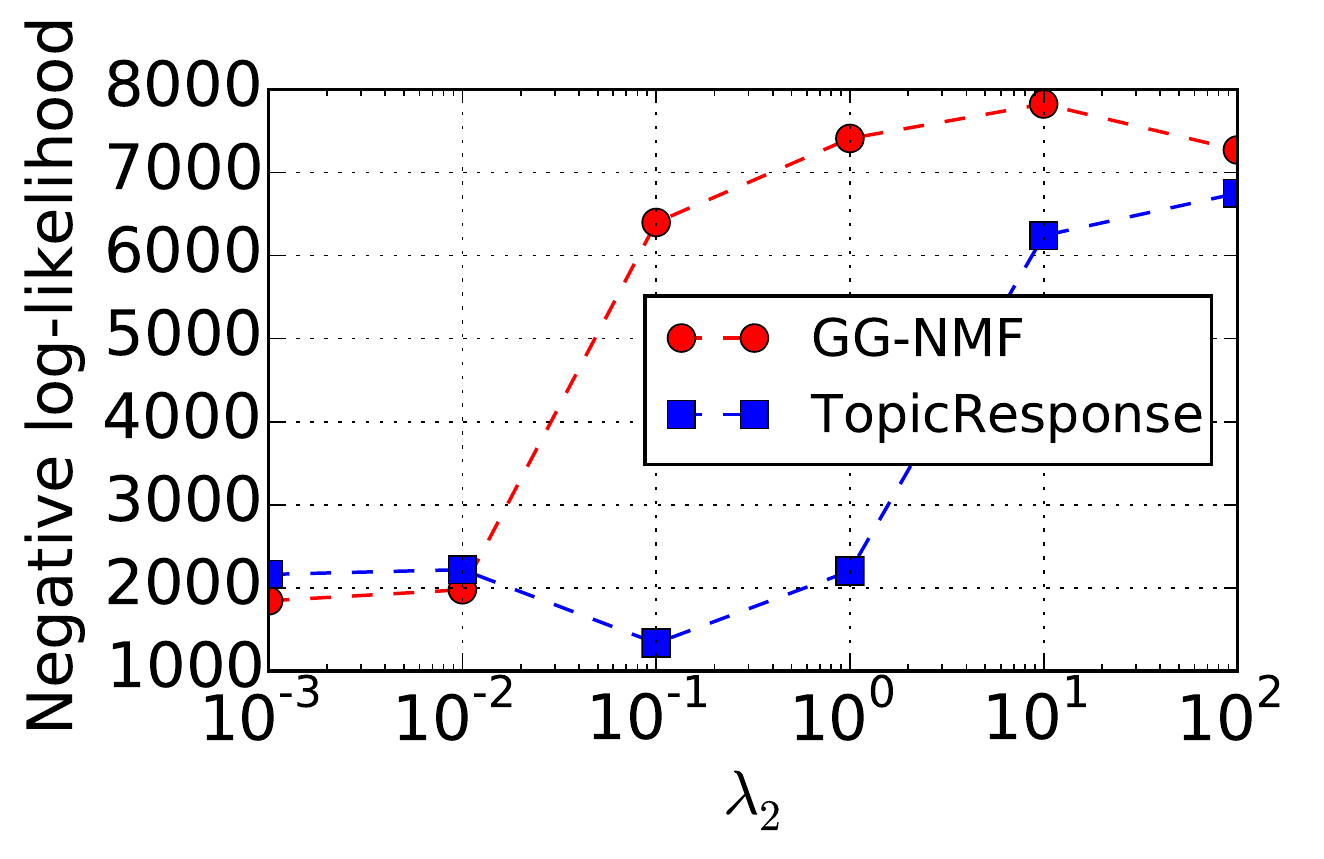}
            \caption{Negative log-likelihood on EDU}
        \end{subfigure}%
        ~
        \begin{subfigure}[t]{0.32\textwidth}
            \centering
            \includegraphics[scale=0.30]{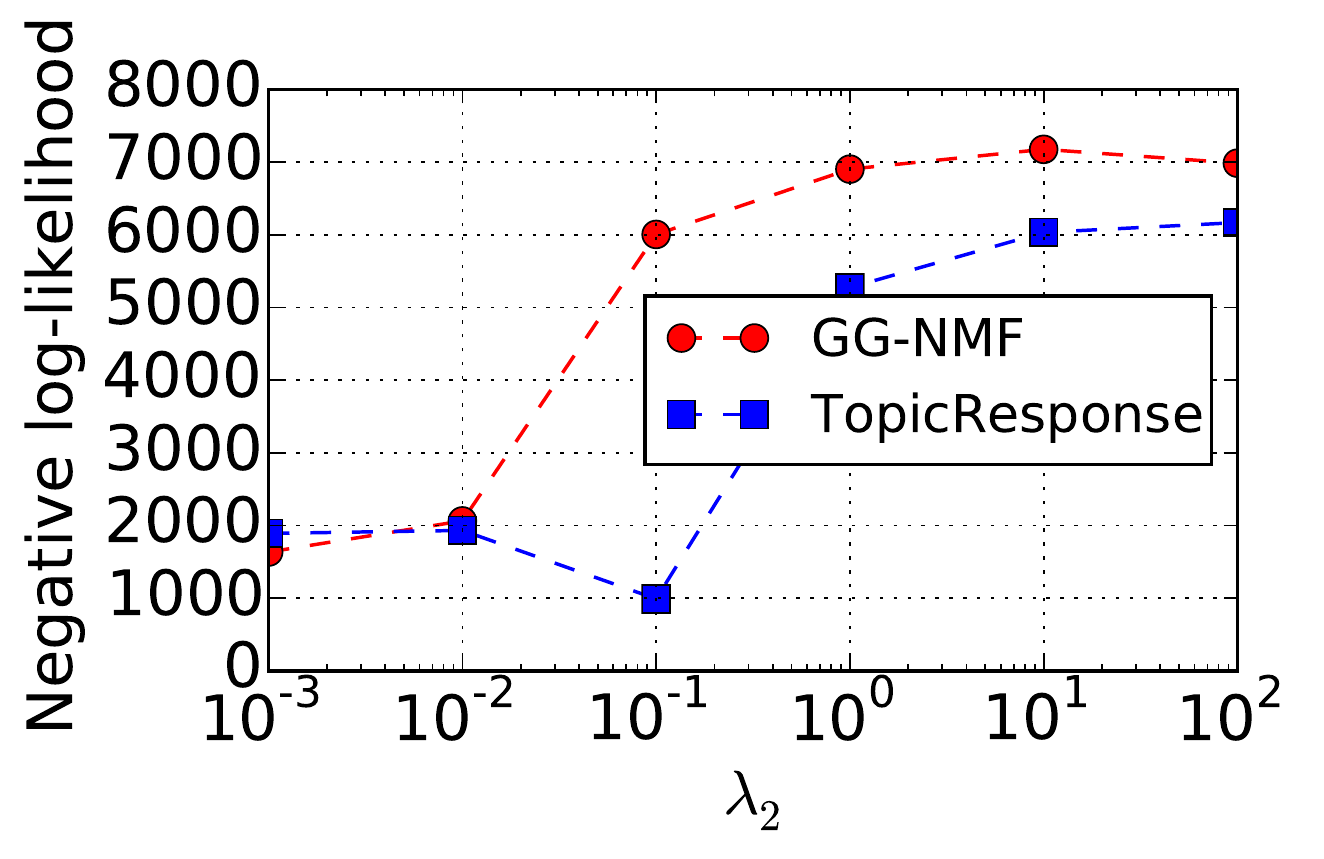}
            \caption{Negative log-likelihood on ECON}
        \end{subfigure}
        ~
        \begin{subfigure}[t]{0.32\textwidth}
            \centering
            \includegraphics[scale=0.30]{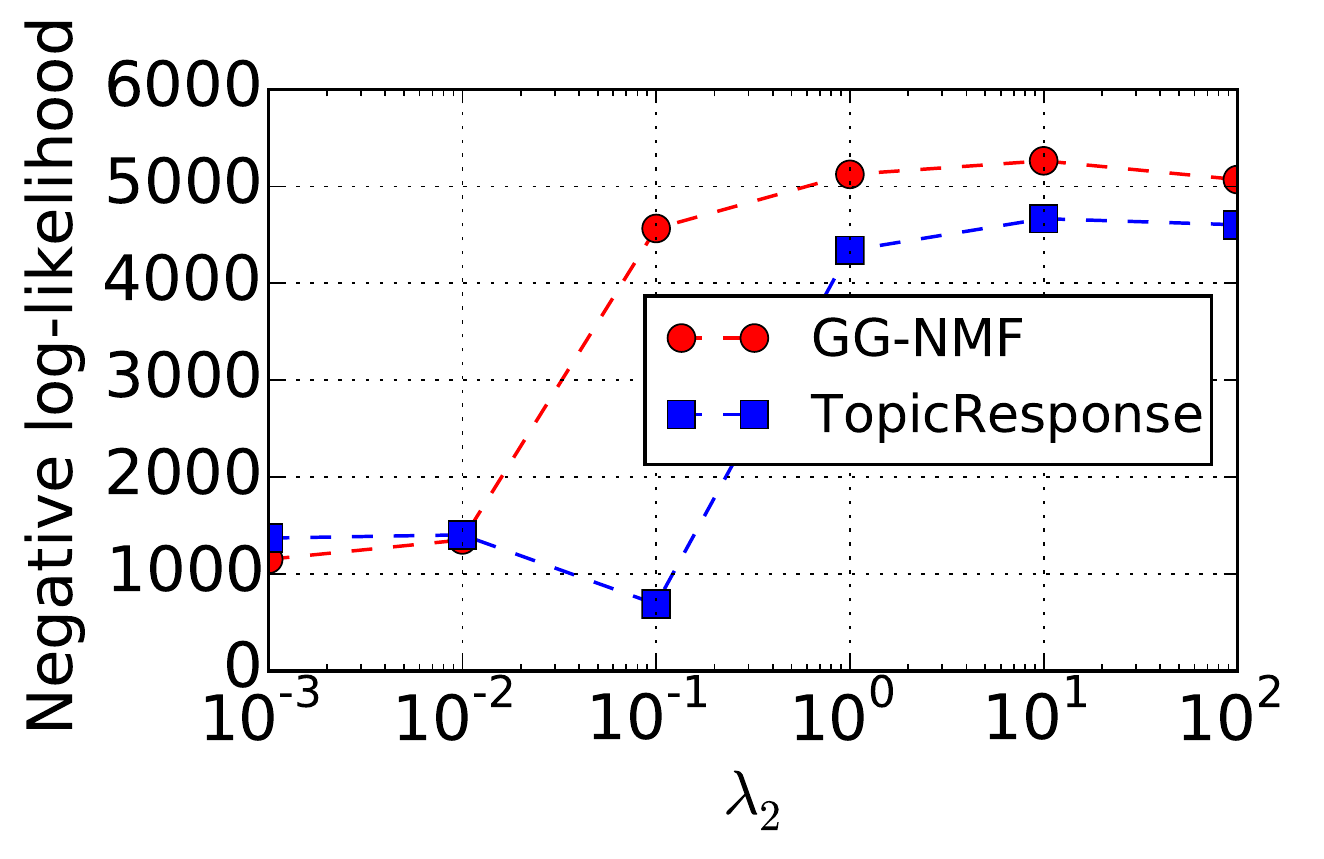}
            \caption{Negative log-likelihood on OPT}
        \end{subfigure}
        ~
        \begin{subfigure}[t]{0.32\textwidth}
            \centering
            \includegraphics[scale=0.30]{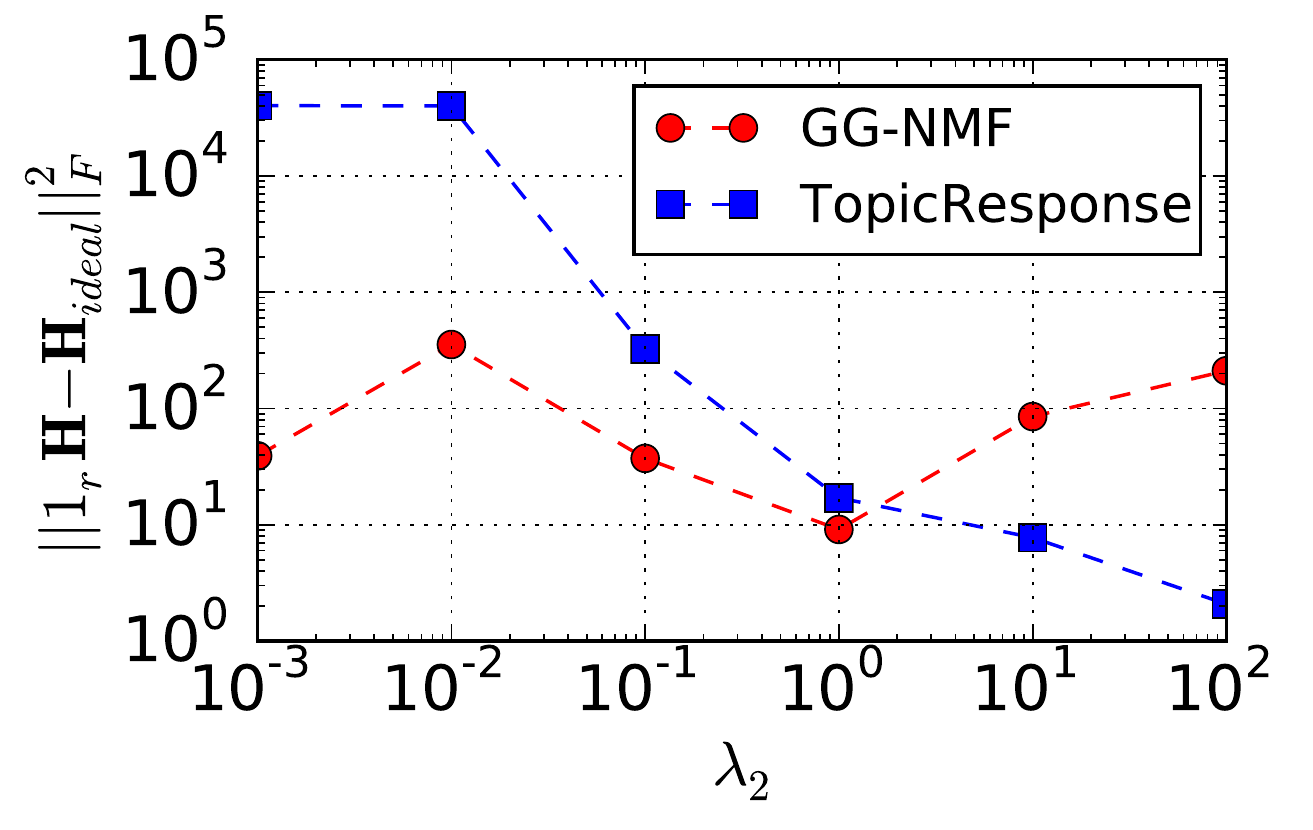}
            \caption{$\norm{\vbf{1}_r\vbf{H}-\vbf{H}_{ideal}}$ on EDU}
            \label{fig:lambdahidealhideal1}
        \end{subfigure}%
        ~
        \begin{subfigure}[t]{0.32\textwidth}
            \centering
            \includegraphics[scale=0.30]{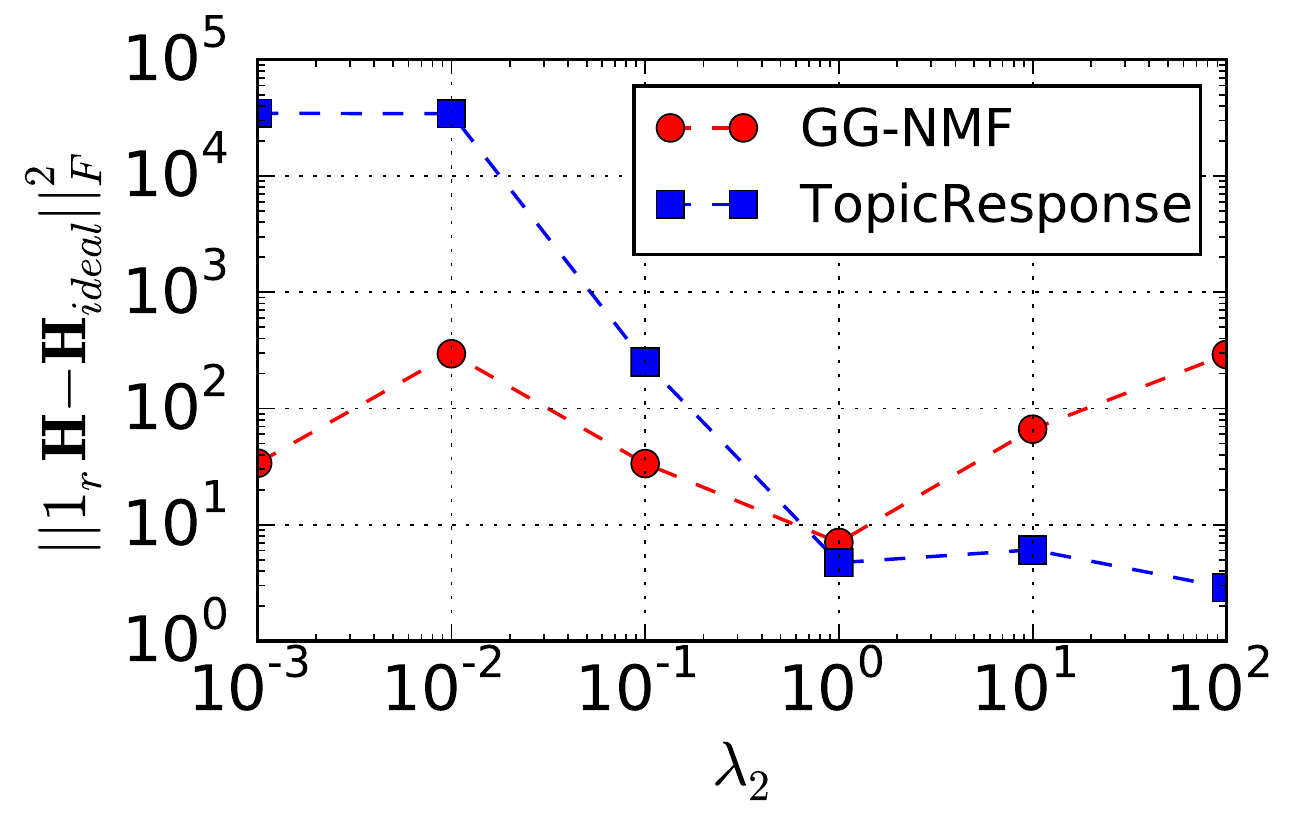}
            \caption{$\norm{\vbf{1}_r\vbf{H}-\vbf{H}_{ideal}}$ on ECON}
            \label{fig:lambdahidealhideal2}
        \end{subfigure}
        ~
        \begin{subfigure}[t]{0.32\textwidth}
            \centering
            \includegraphics[scale=0.30]{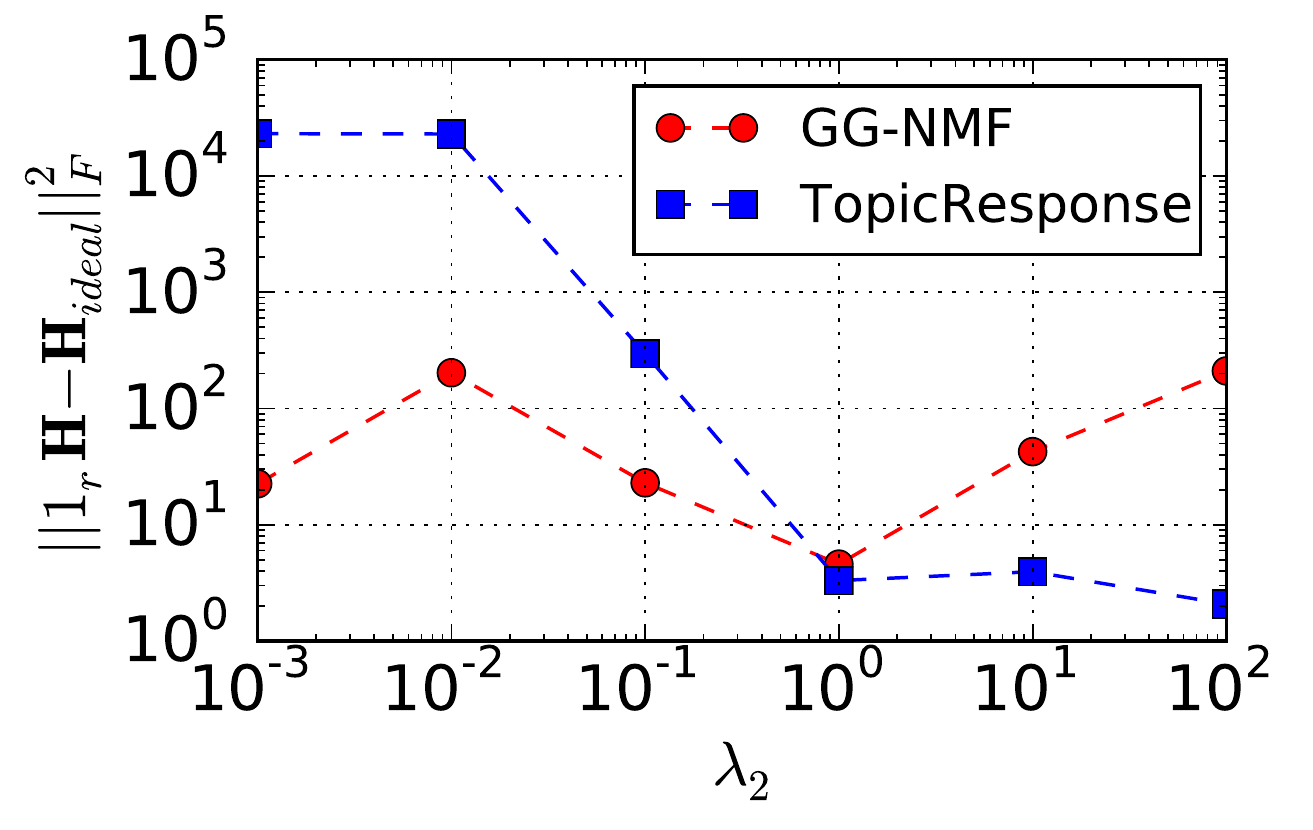}
            \caption{$\norm{\vbf{1}_r\vbf{H}-\vbf{H}_{ideal}}$ on OPT}
            \label{fig:lambdahidealhideal3}
        \end{subfigure}
        ~
        \begin{subfigure}[t]{0.32\textwidth}
            \centering
            \includegraphics[scale=0.30]{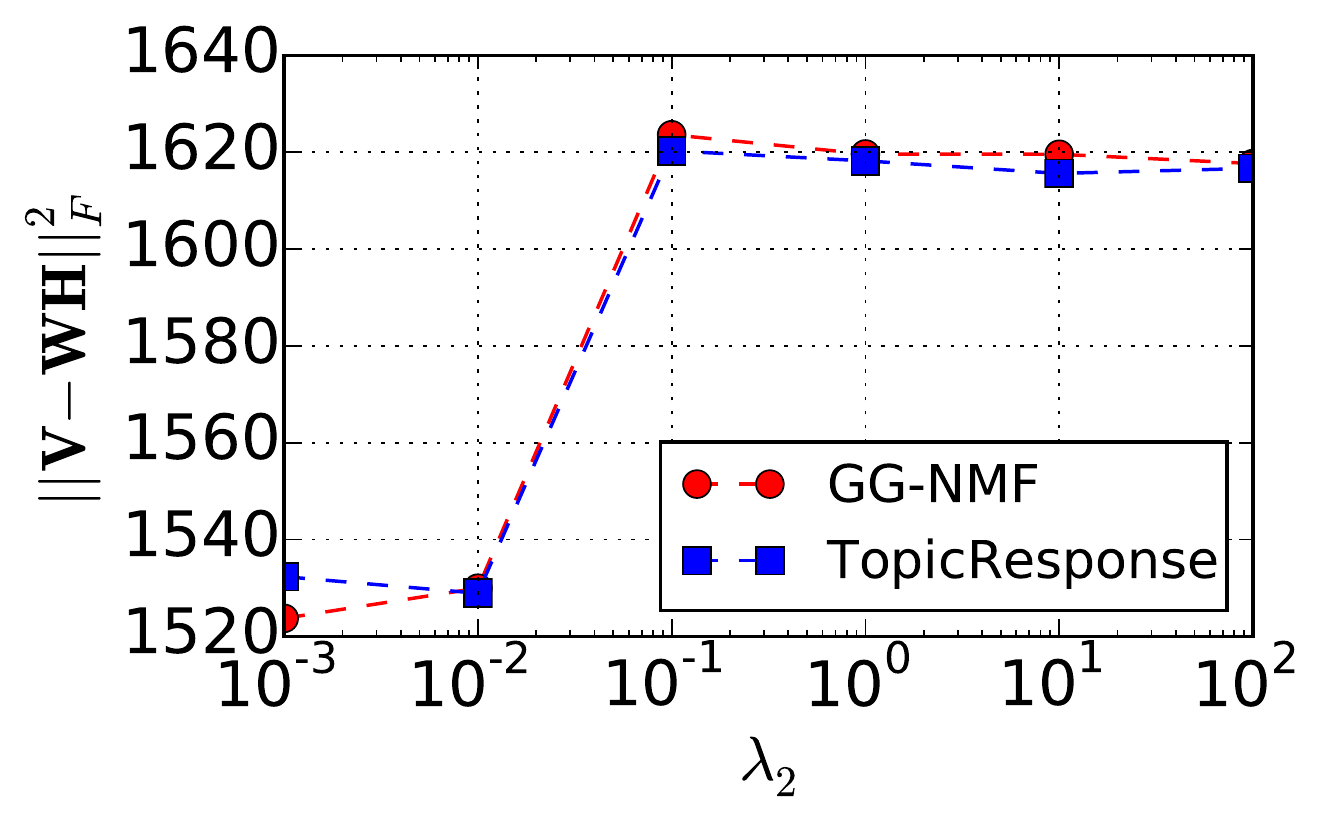}
            \caption{$\norm{\vbf{V}-\vbf{WH}}$ on EDU}
        \end{subfigure}%
        ~
        \begin{subfigure}[t]{0.32\textwidth}
            \centering
            \includegraphics[scale=0.30]{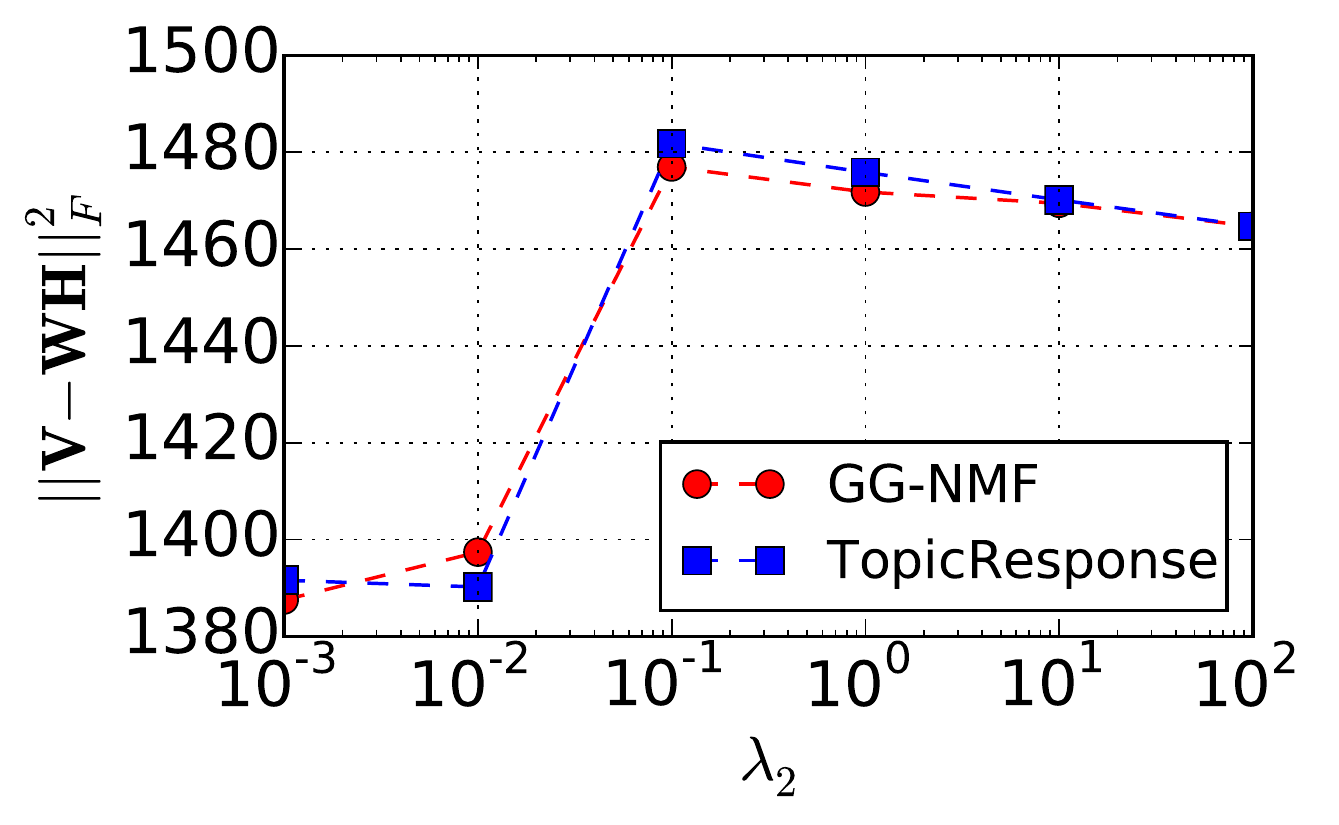}
            \caption{$\norm{\vbf{V}-\vbf{WH}}$ on ECON}
        \end{subfigure}
        ~
        \begin{subfigure}[t]{0.32\textwidth}
            \centering
            \includegraphics[scale=0.30]{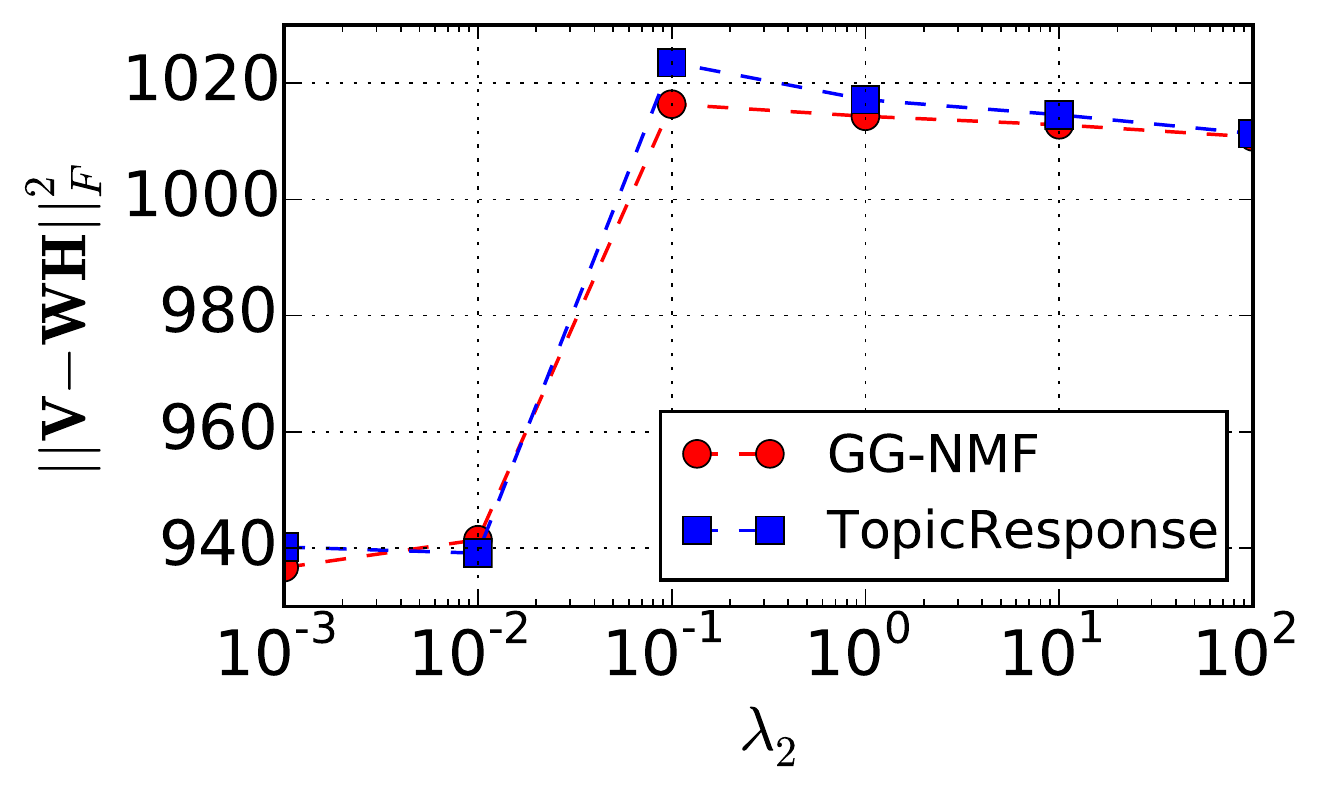}
            \caption{$\norm{\vbf{V}-\vbf{WH}}$ on OPT}
        \end{subfigure}
        ~
        \begin{subfigure}[t]{0.32\textwidth}
            \centering
            \includegraphics[scale=0.30]{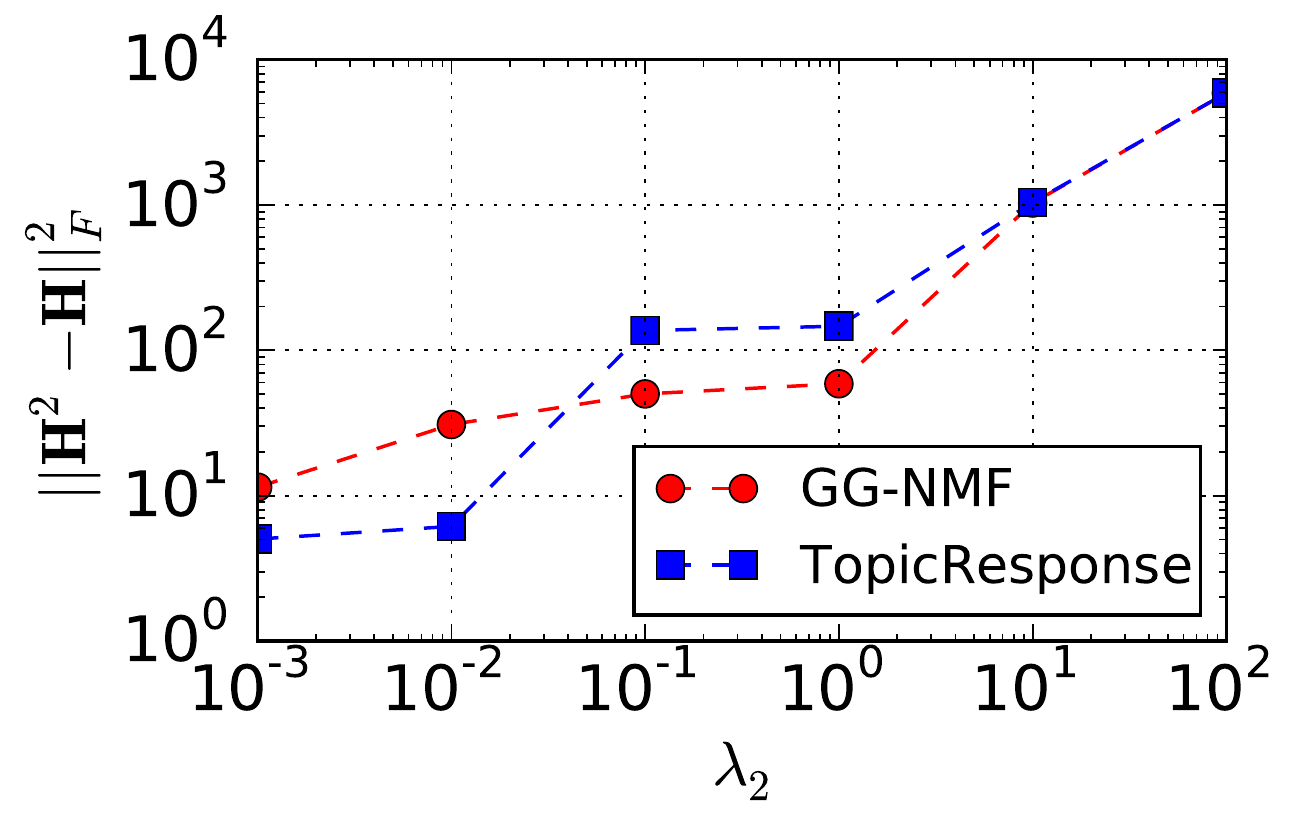}
            \caption{$\norm{\vbf{H}\circ\vbf{H}-\vbf{H}}$ on EDU}
            \label{fig:lambdahidealhbinary1}
        \end{subfigure}%
        ~
        \begin{subfigure}[t]{0.32\textwidth}
            \centering
            \includegraphics[scale=0.30]{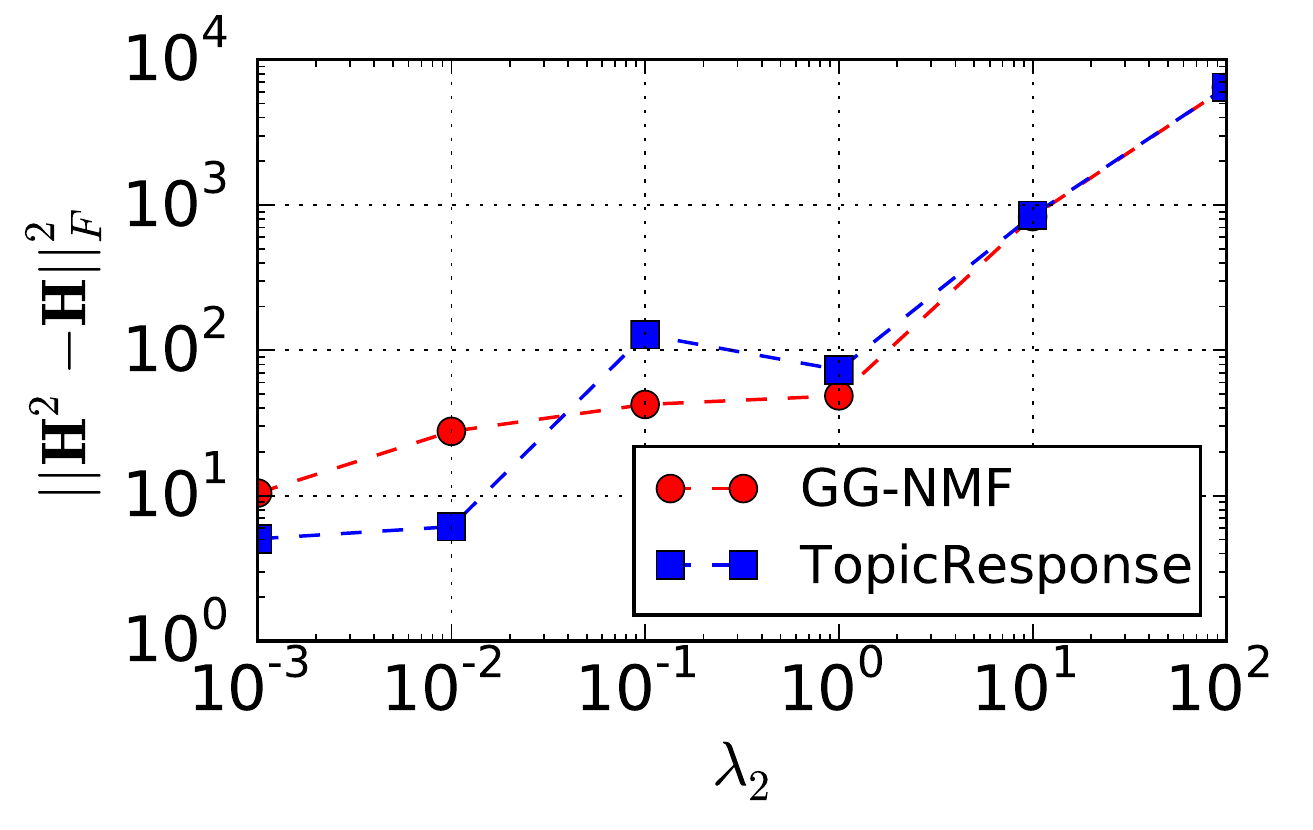}
            \caption{$\norm{\vbf{H}\circ\vbf{H}-\vbf{H}}$ on ECON}
            \label{fig:lambdahidealhbinary2}
        \end{subfigure}
        ~
        \begin{subfigure}[t]{0.32\textwidth}
            \centering
            \includegraphics[scale=0.30]{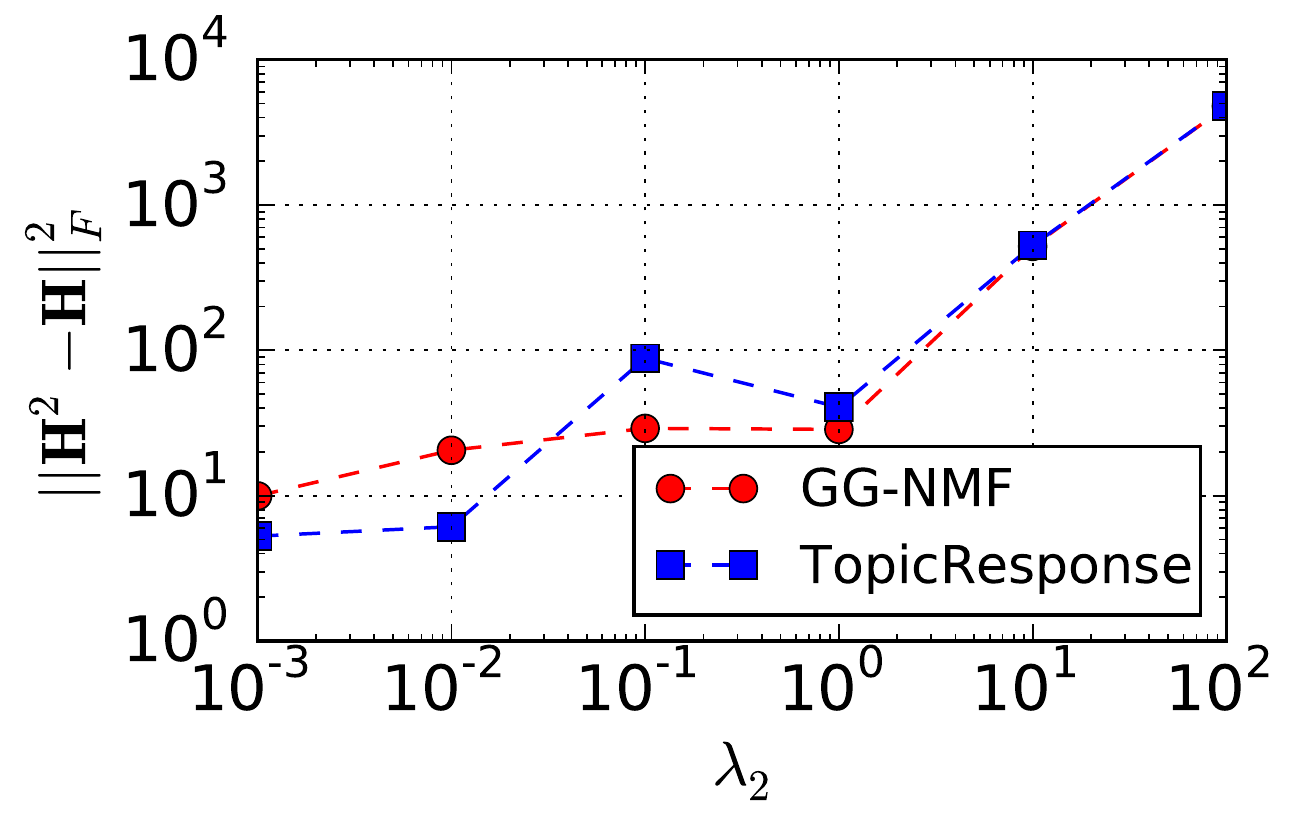}
            \caption{$\norm{\vbf{H}\circ\vbf{H}-\vbf{H}}$ on OPT}
            \label{fig:lambdahidealhbinary3}
        \end{subfigure}
        \caption{Performance of GG-NMF and TopicResponse with varying $\lambda_2$.}
        \label{fig:lambda1hideal}
    \end{figure}
    
    \begin{figure}[!htb]
        \centering
        \begin{subfigure}[t]{0.32\textwidth}
            \centering
            \includegraphics[scale=0.30]{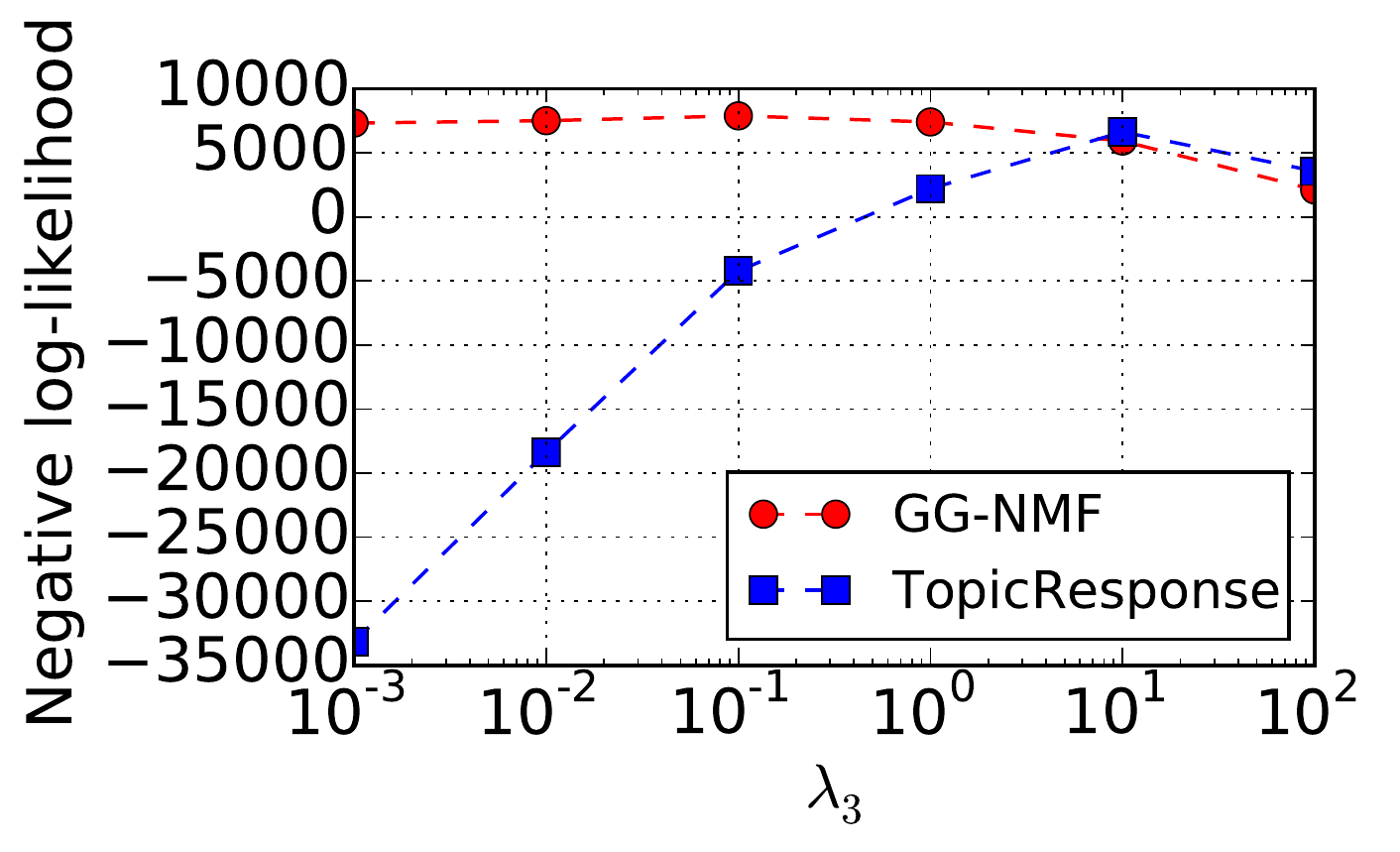}
            \caption{Likelihood on EDU}
        \end{subfigure}%
        ~
        \begin{subfigure}[t]{0.32\textwidth}
            \centering
            \includegraphics[scale=0.30]{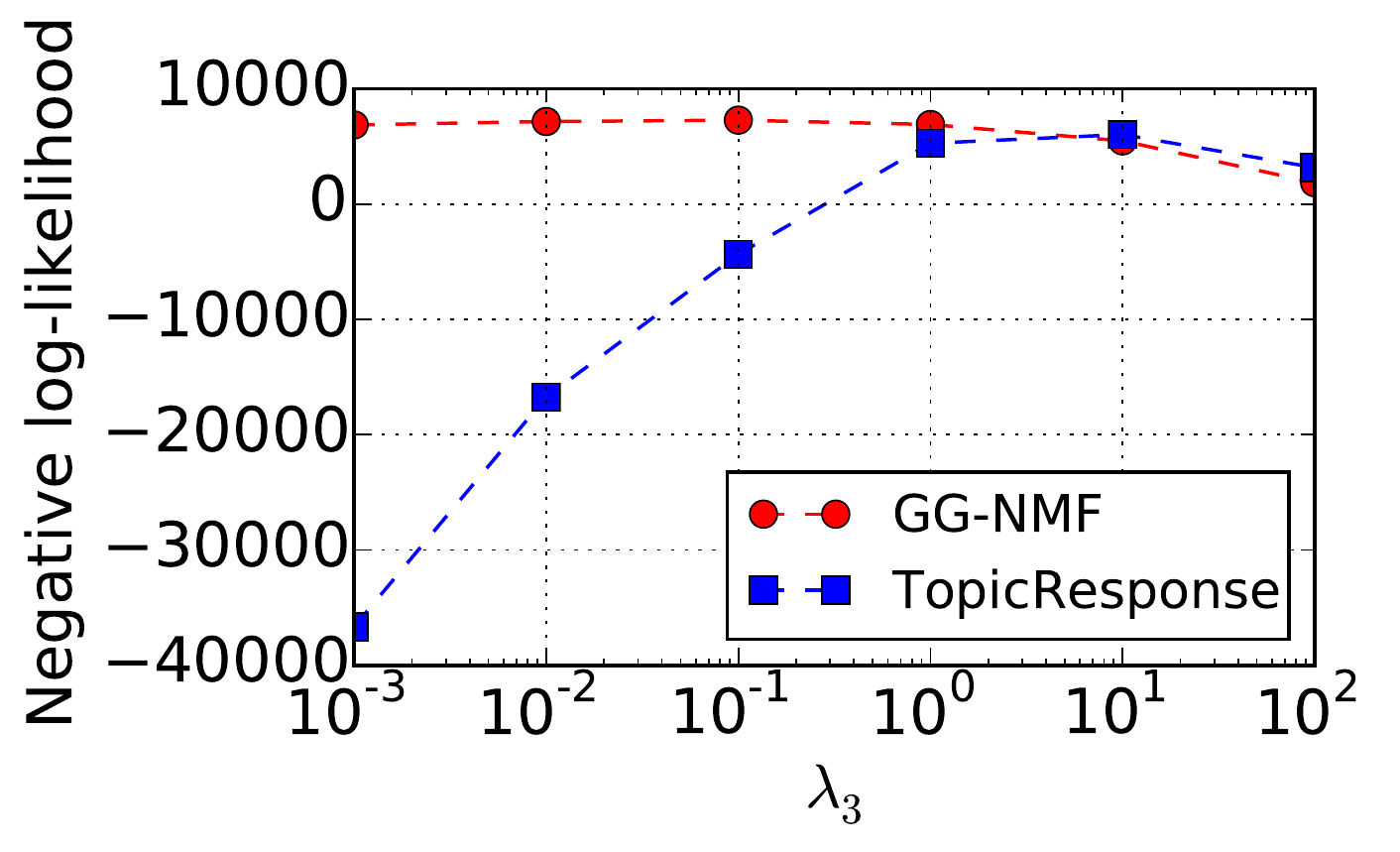}
            \caption{Likelihood on ECON}
        \end{subfigure}
        ~
        \begin{subfigure}[t]{0.32\textwidth}
            \centering
            \includegraphics[scale=0.30]{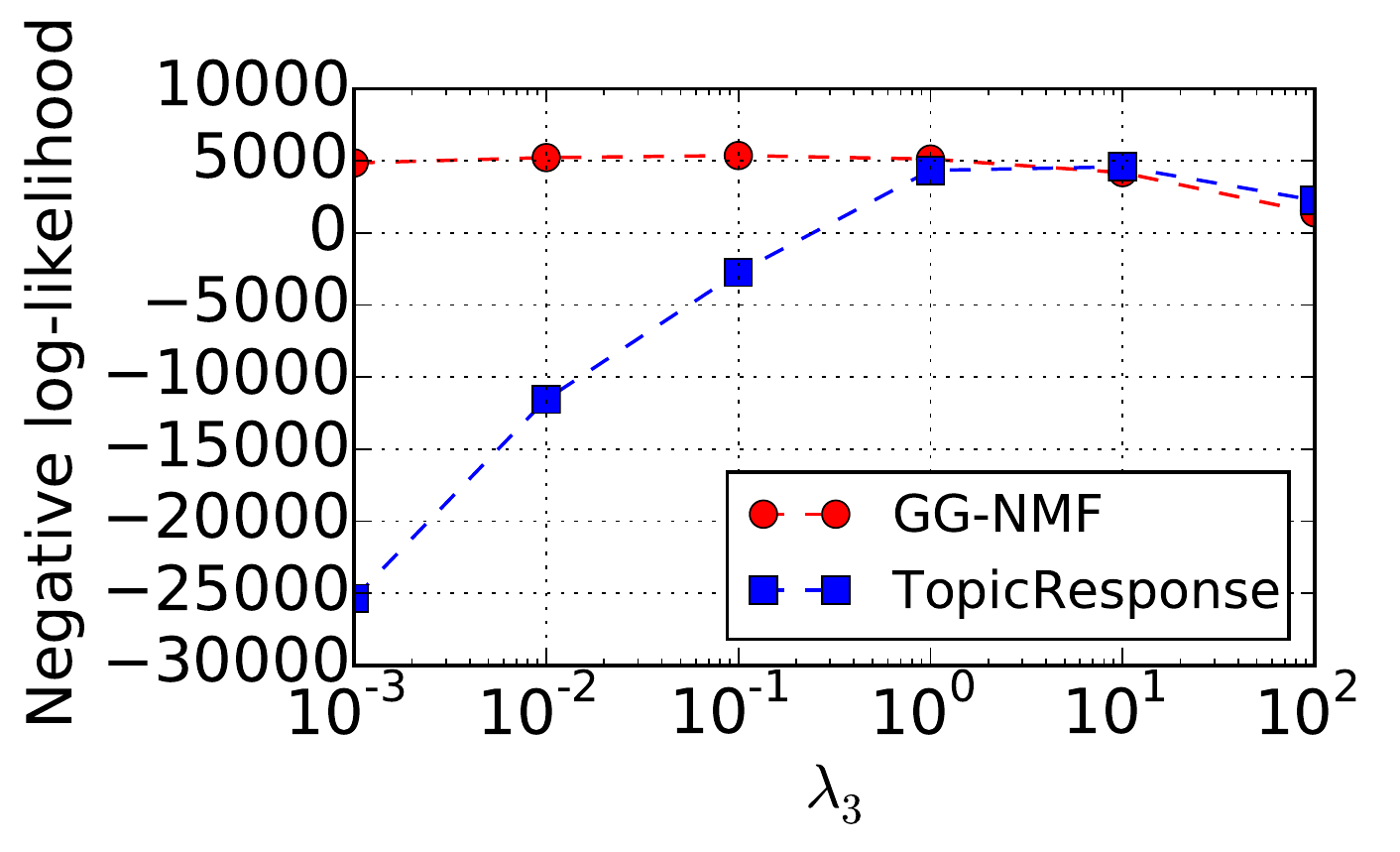}
            \caption{Likelihood on OPT}
        \end{subfigure}
        ~
        \begin{subfigure}[t]{0.32\textwidth}
            \centering
            \includegraphics[scale=0.30]{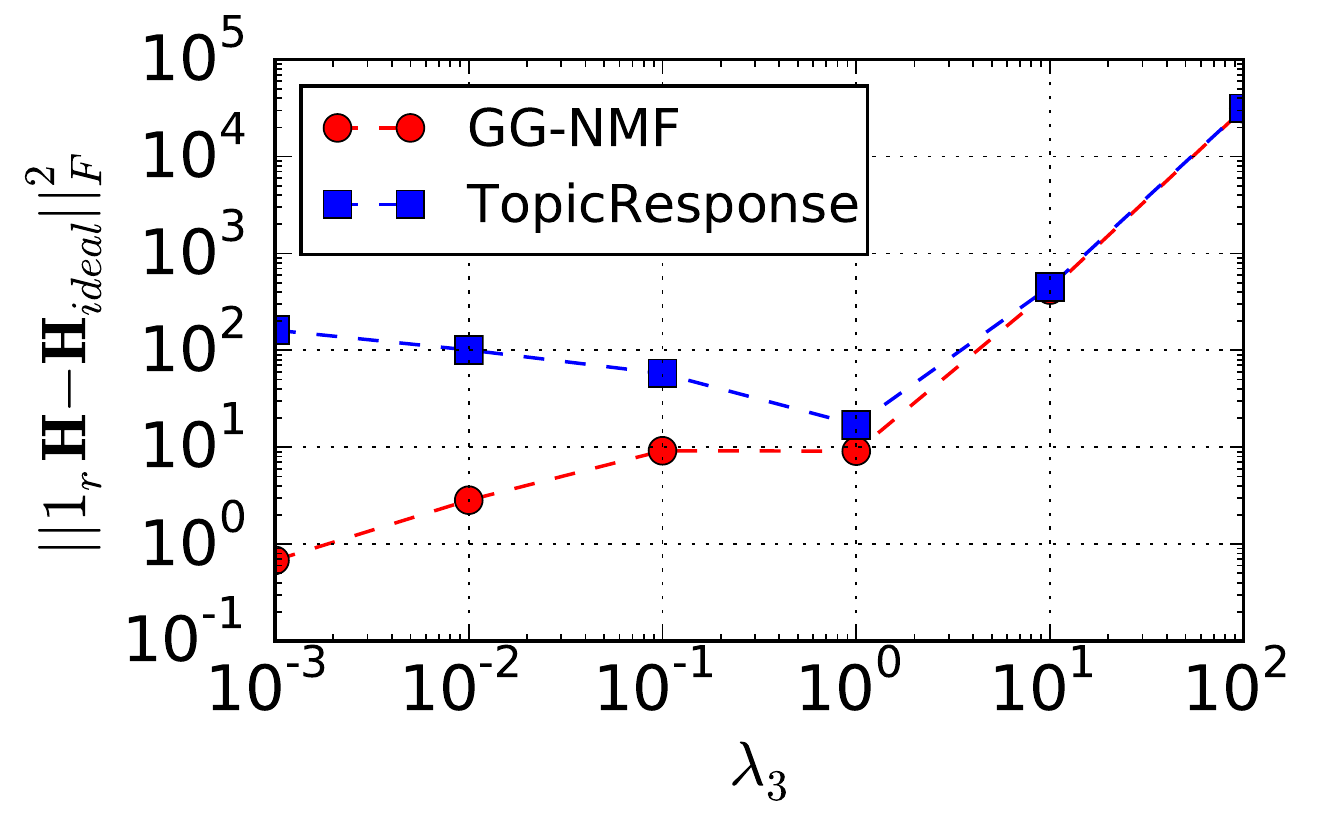}
            \caption{$\norm{\vbf{1}_r\vbf{H}-\vbf{H}_{ideal}}$ on EDU}
            \label{fig:lambdahbinaryhideal1}
        \end{subfigure}%
        ~
        \begin{subfigure}[t]{0.32\textwidth}
            \centering
            \includegraphics[scale=0.30]{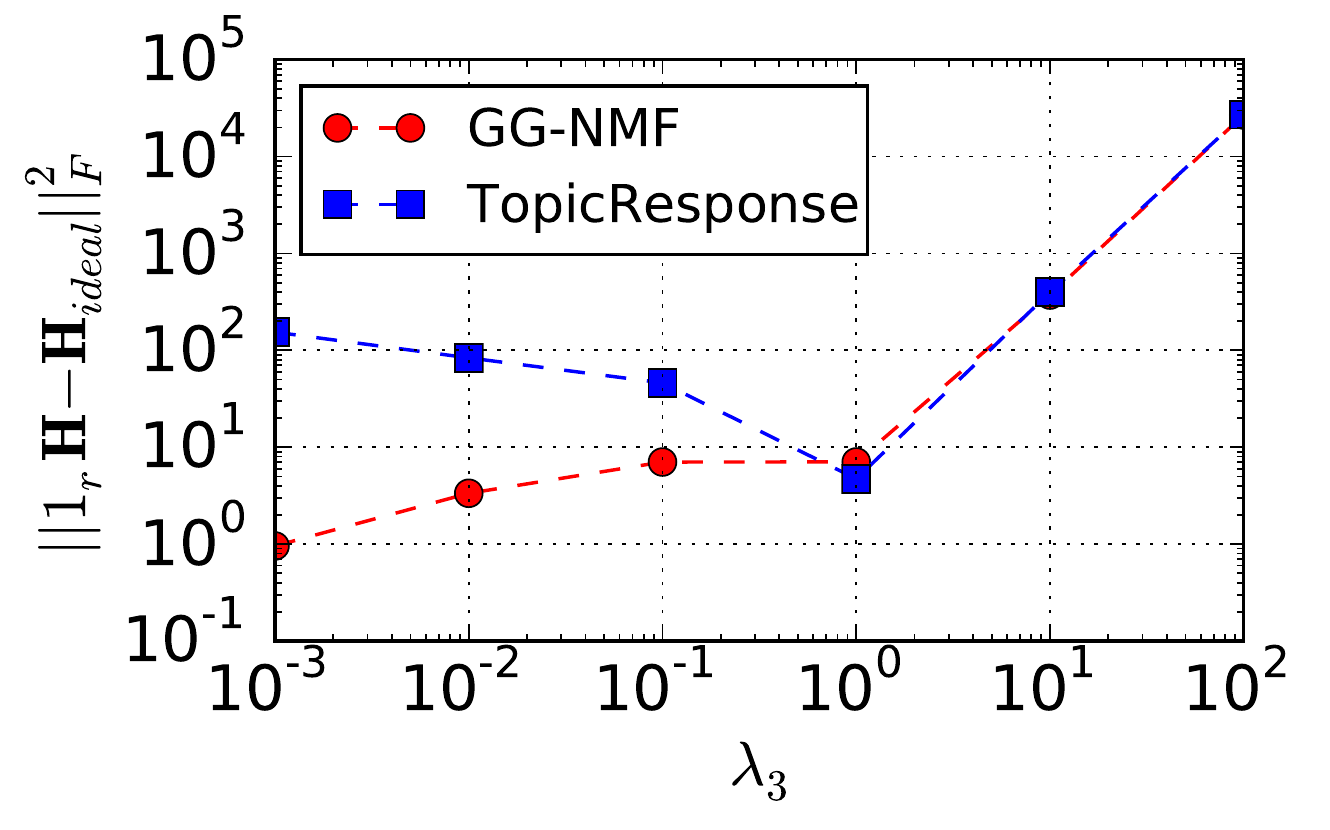}
            \caption{$\norm{\vbf{1}_r\vbf{H}-\vbf{H}_{ideal}}$ on ECON}
            \label{fig:lambdahbinaryhideal2}
        \end{subfigure}
        ~
        \begin{subfigure}[t]{0.32\textwidth}
            \centering
            \includegraphics[scale=0.30]{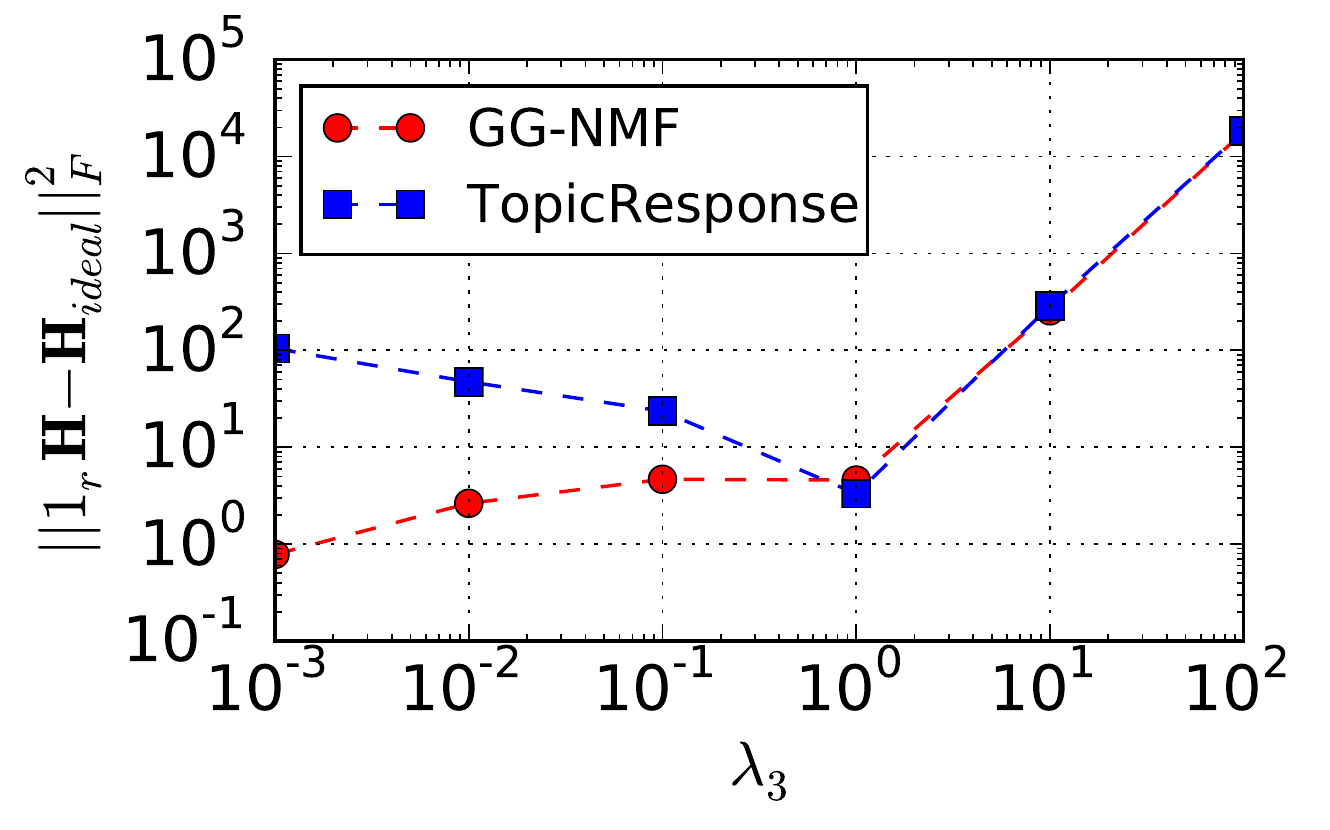}
            \caption{$\norm{\vbf{1}_r\vbf{H}-\vbf{H}_{ideal}}$ on OPT}
            \label{fig:lambdahbinaryhideal3}
        \end{subfigure}
        ~
        \begin{subfigure}[t]{0.32\textwidth}
            \centering
            \includegraphics[scale=0.30]{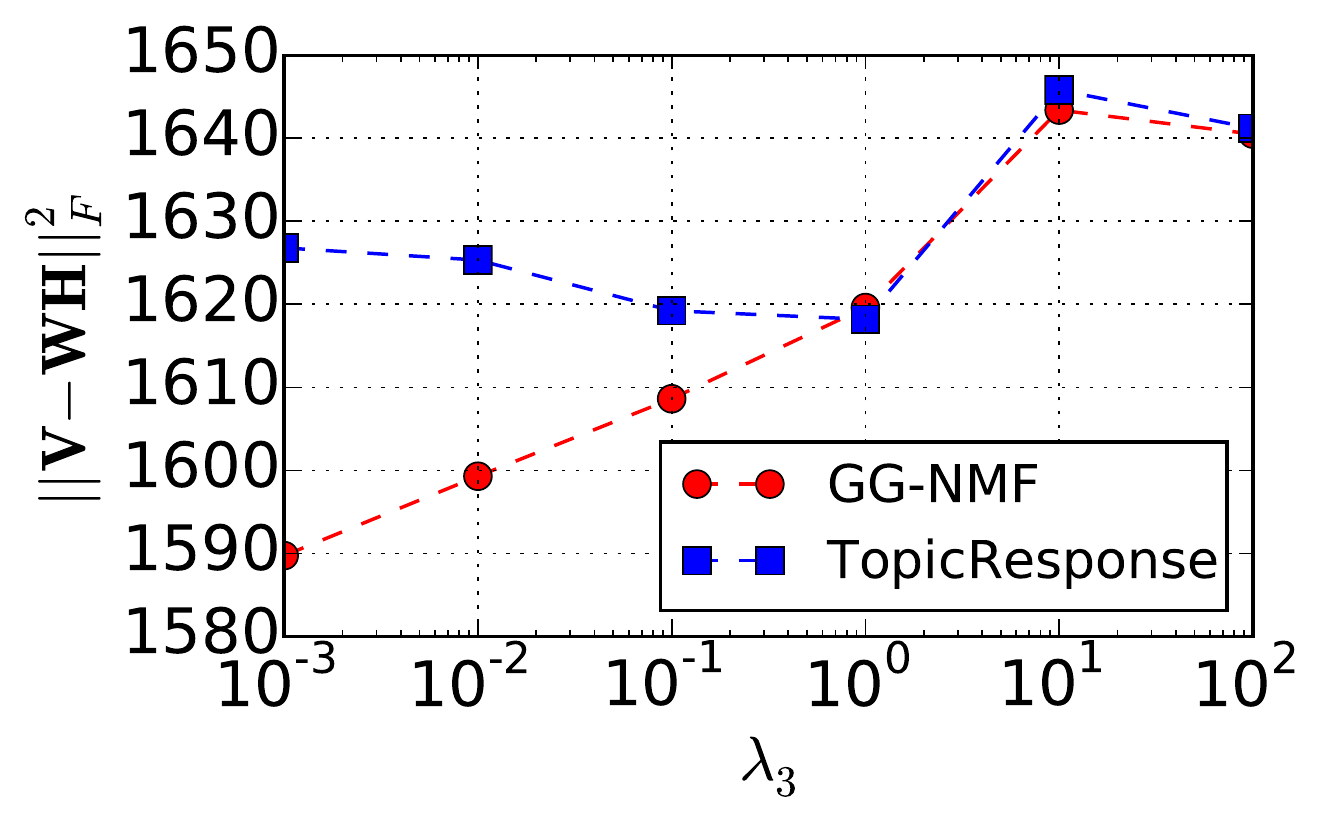}
            \caption{$\norm{\vbf{V}-\vbf{WH}}$ on EDU}
        \end{subfigure}%
        ~
        \begin{subfigure}[t]{0.32\textwidth}
            \centering
            \includegraphics[scale=0.30]{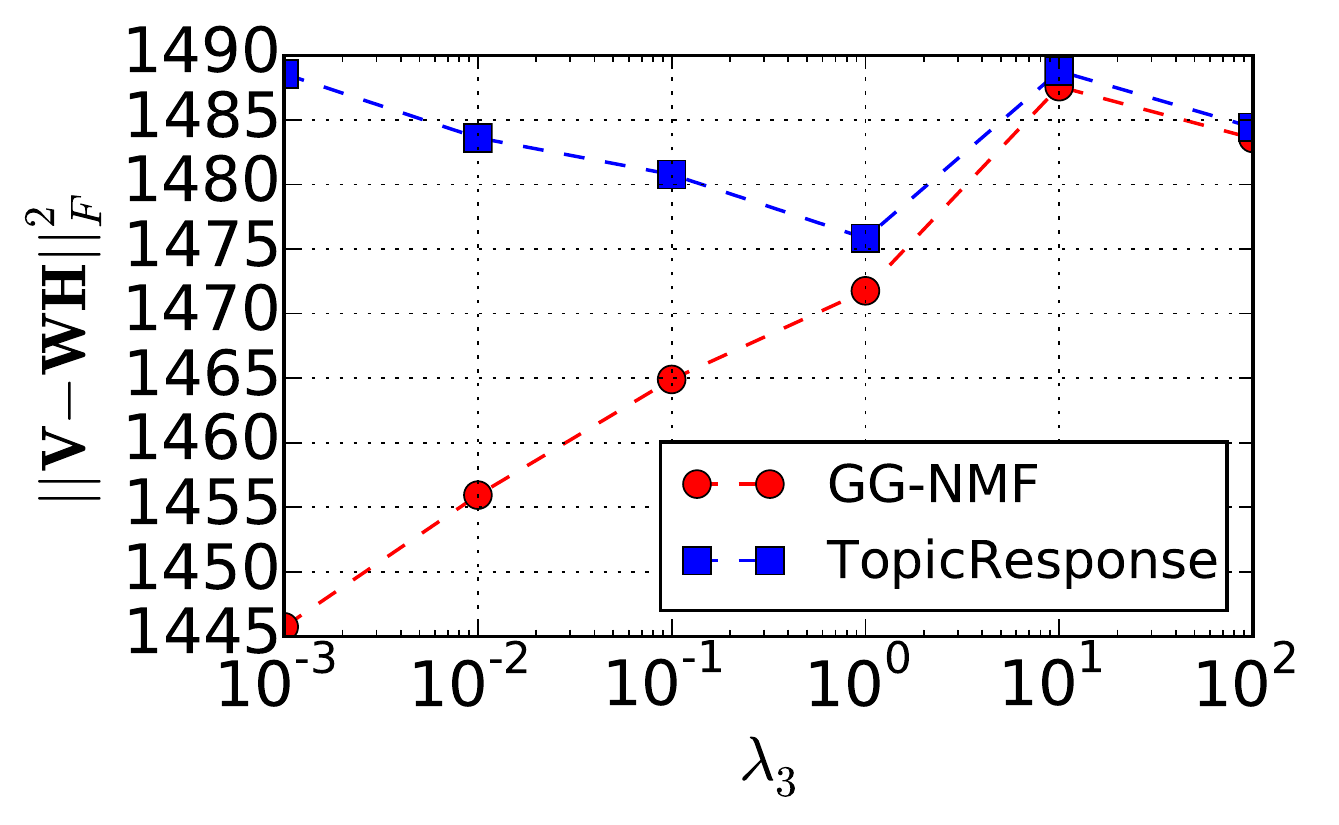}
            \caption{$\norm{\vbf{V}-\vbf{WH}}$ on ECON}
        \end{subfigure}
        ~
        \begin{subfigure}[t]{0.32\textwidth}
            \centering
            \includegraphics[scale=0.30]{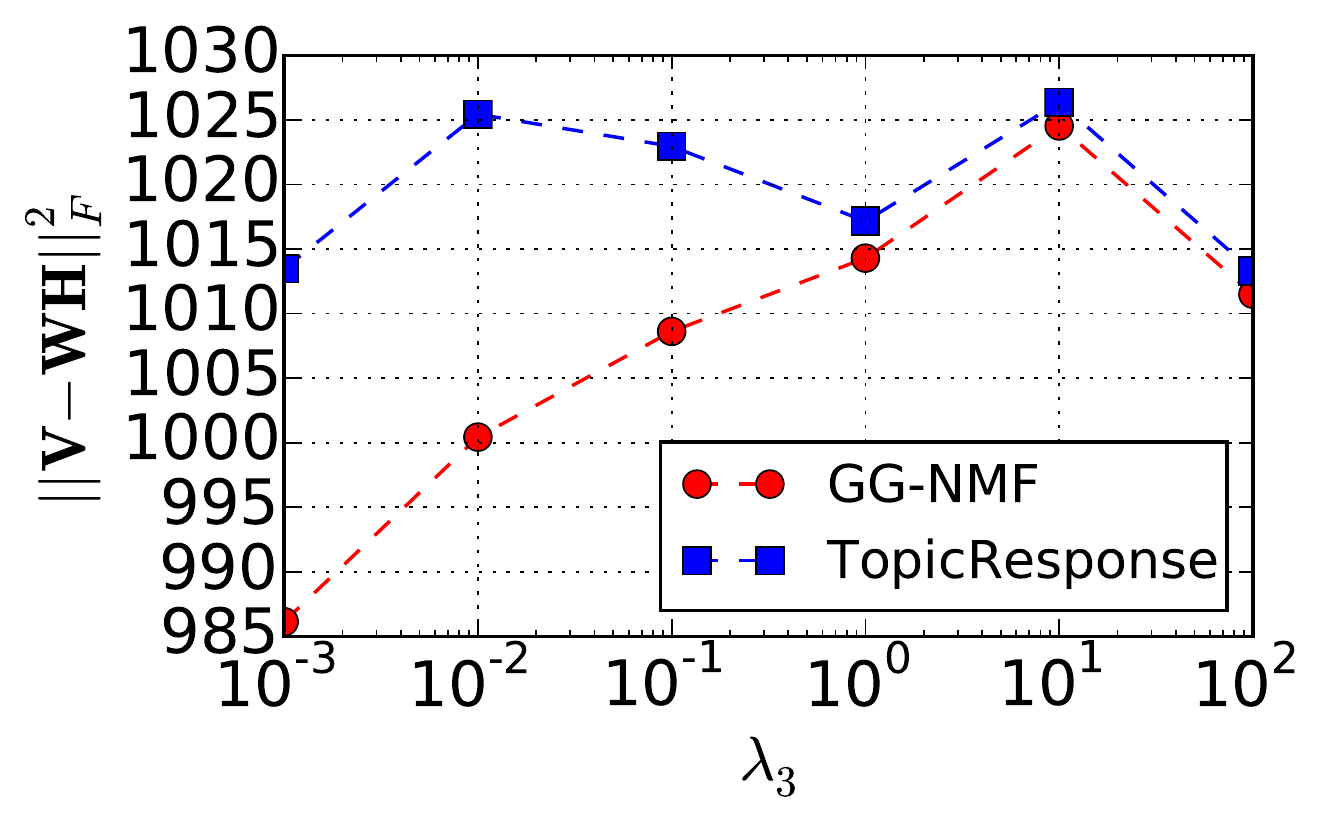}
            \caption{$\norm{\vbf{V}-\vbf{WH}}$ on OPT}
        \end{subfigure}
        ~
        \begin{subfigure}[t]{0.32\textwidth}
            \centering
            \includegraphics[scale=0.30]{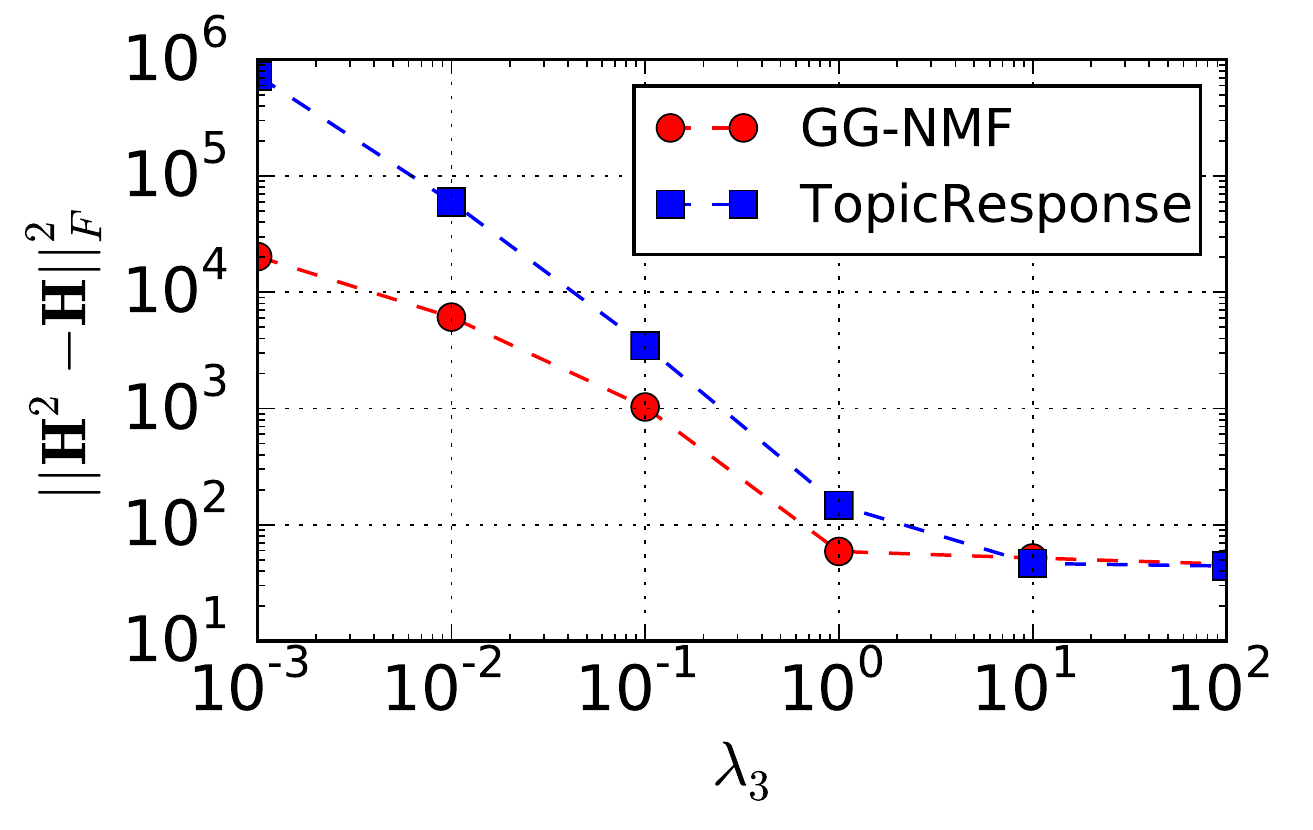}
            \caption{$\norm{\vbf{H}\circ\vbf{H}-\vbf{H}}$ on EDU}
            \label{fig:lambdahbinaryhbinary1}
        \end{subfigure}%
        ~
        \begin{subfigure}[t]{0.32\textwidth}
            \centering
            \includegraphics[scale=0.30]{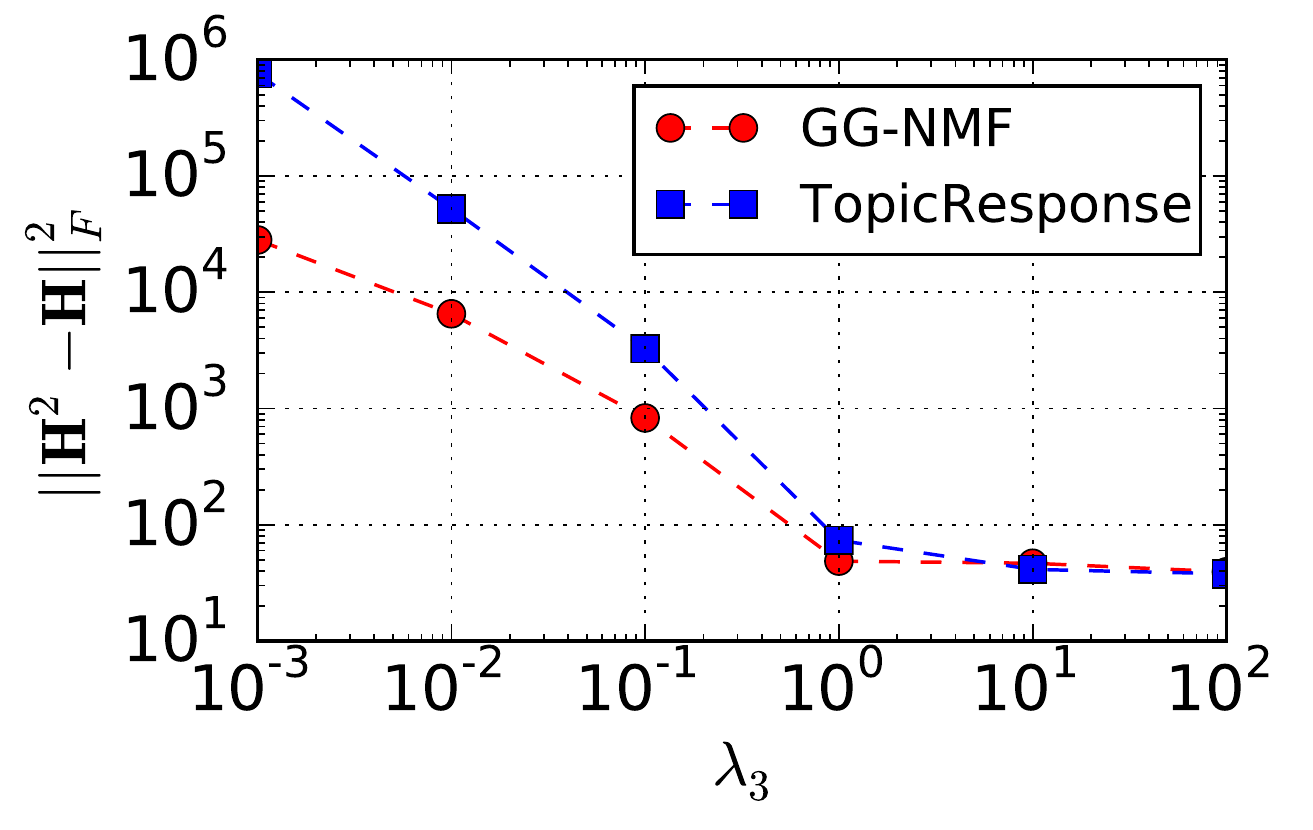}
            \caption{$\norm{\vbf{H}\circ\vbf{H}-\vbf{H}}$ on ECON}
            \label{fig:lambdahbinaryhbinary2}
        \end{subfigure}
        ~
        \begin{subfigure}[t]{0.32\textwidth}
            \centering
            \includegraphics[scale=0.30]{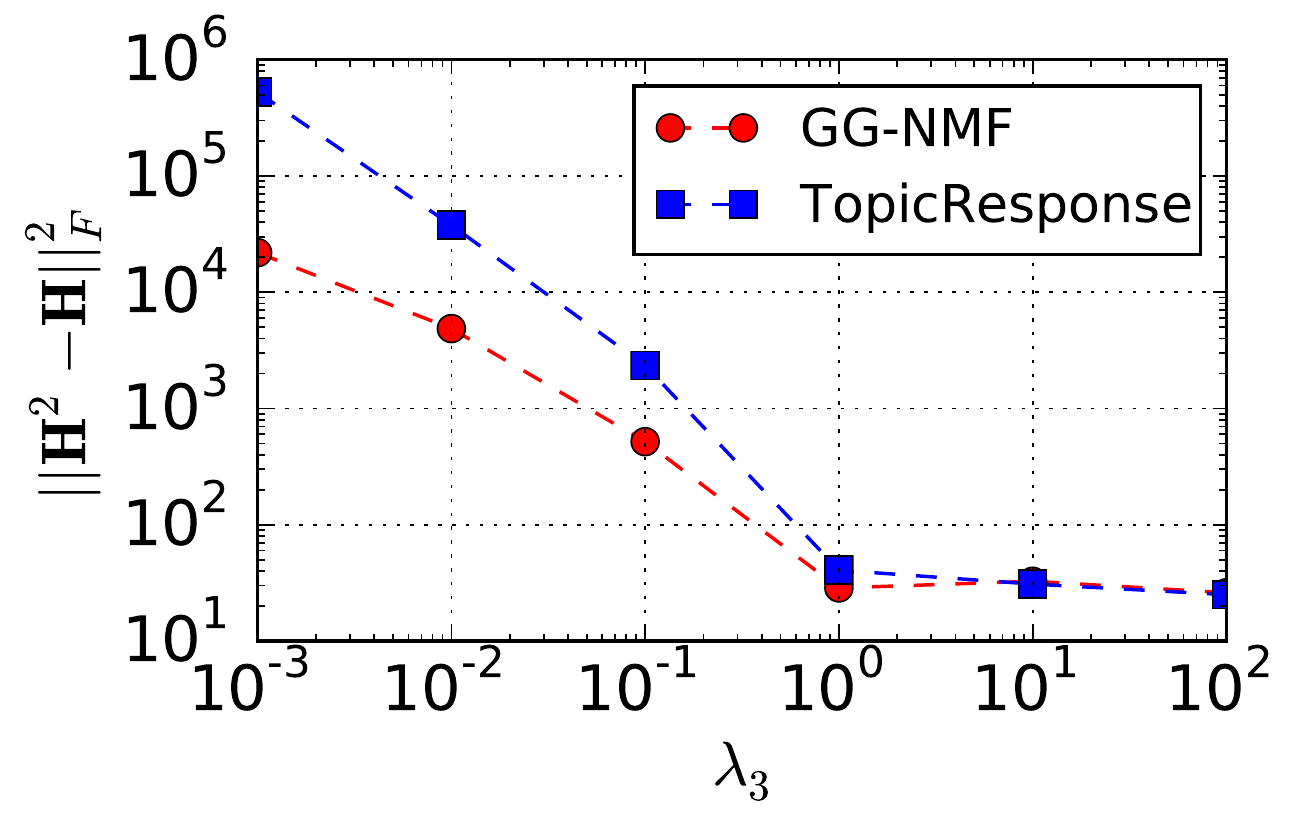}
            \caption{$\norm{\vbf{H}\circ\vbf{H}-\vbf{H}}$ on OPT}
            \label{fig:lambdahbinaryhbinary3}
        \end{subfigure}
        \caption{Performance of GG-NMF and TopicResponse with varying $\lambda_3$.}
        \label{fig:lambda2hbinary}
    \end{figure}
    
    \begin{figure}[!htb]
        \centering
        \begin{subfigure}[t]{0.32\textwidth}
            \centering
            \includegraphics[scale=0.30]{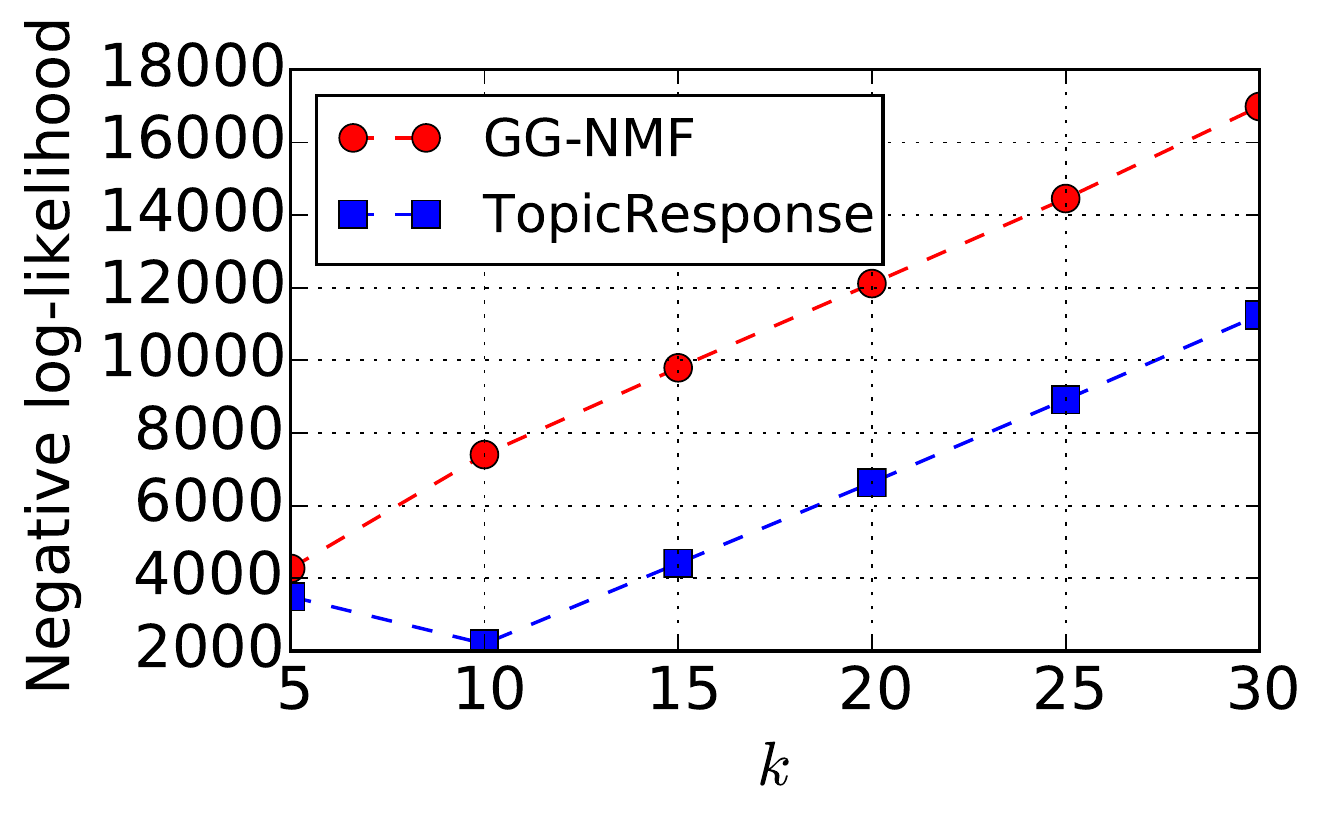}
            \caption{Negative log-ikelihood on EDU}
        \end{subfigure}%
        ~
        \begin{subfigure}[t]{0.32\textwidth}
            \centering
            \includegraphics[scale=0.30]{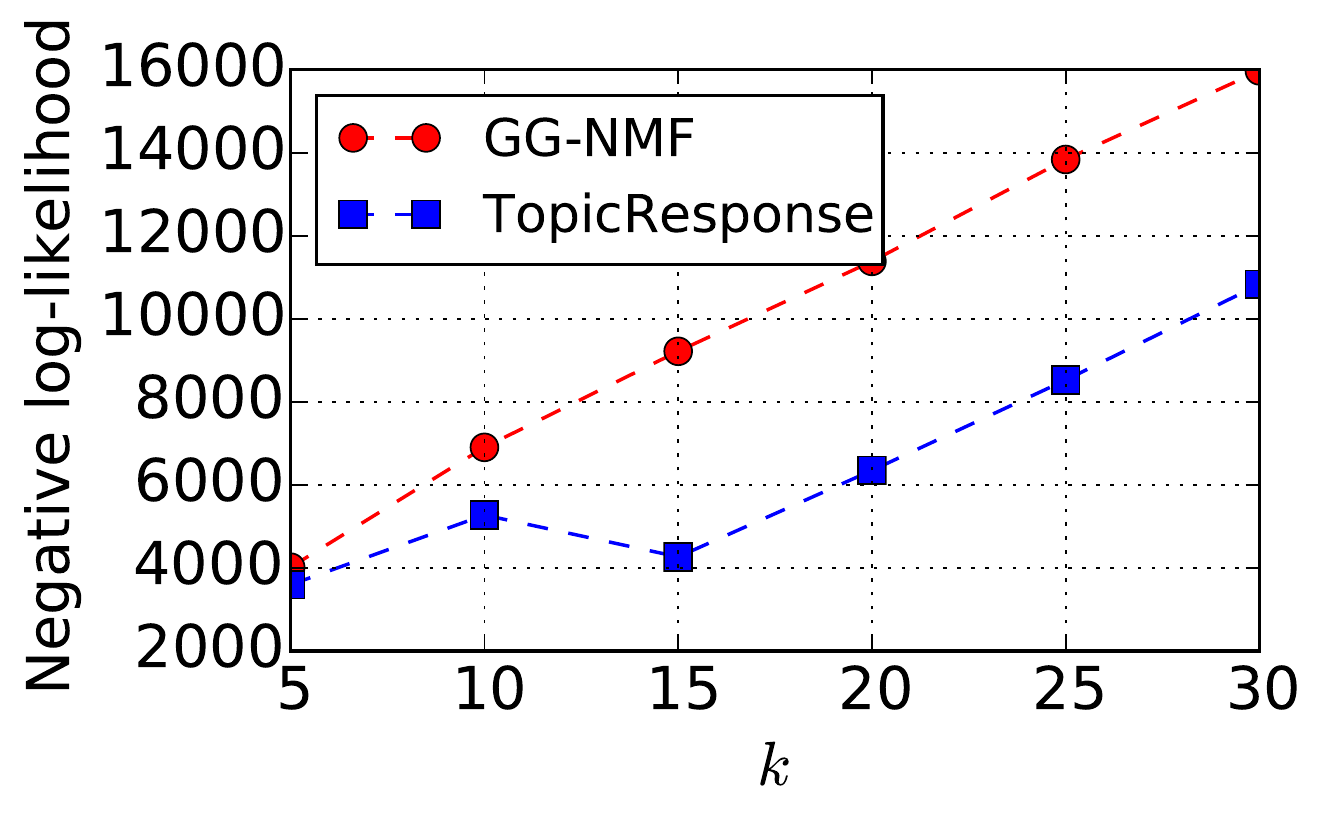}
            \caption{Negative log-ikelihood on ECON}
        \end{subfigure}
        ~
        \begin{subfigure}[t]{0.32\textwidth}
            \centering
            \includegraphics[scale=0.30]{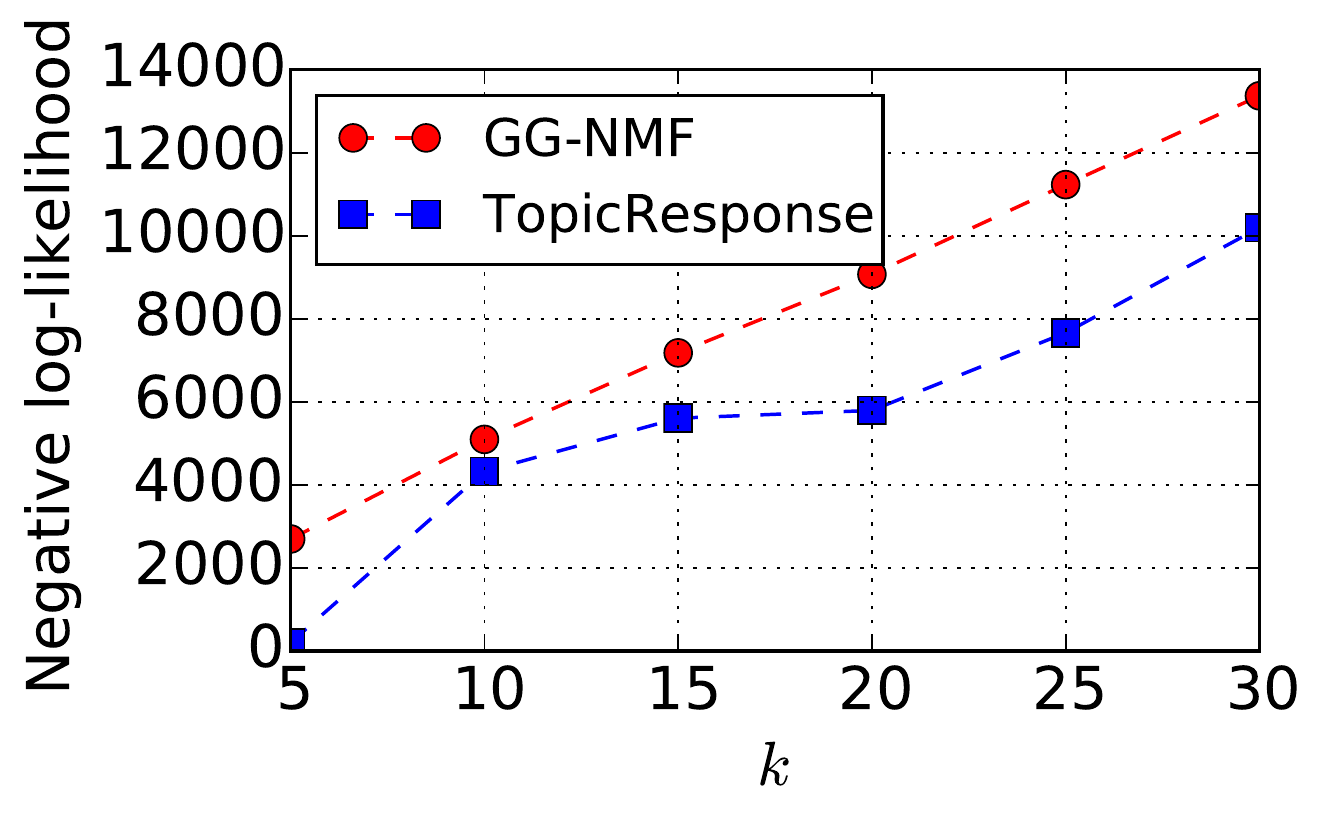}
            \caption{Negative log-ikelihood on OPT}
        \end{subfigure}
        ~
        \begin{subfigure}[t]{0.32\textwidth}
            \centering
            \includegraphics[scale=0.30]{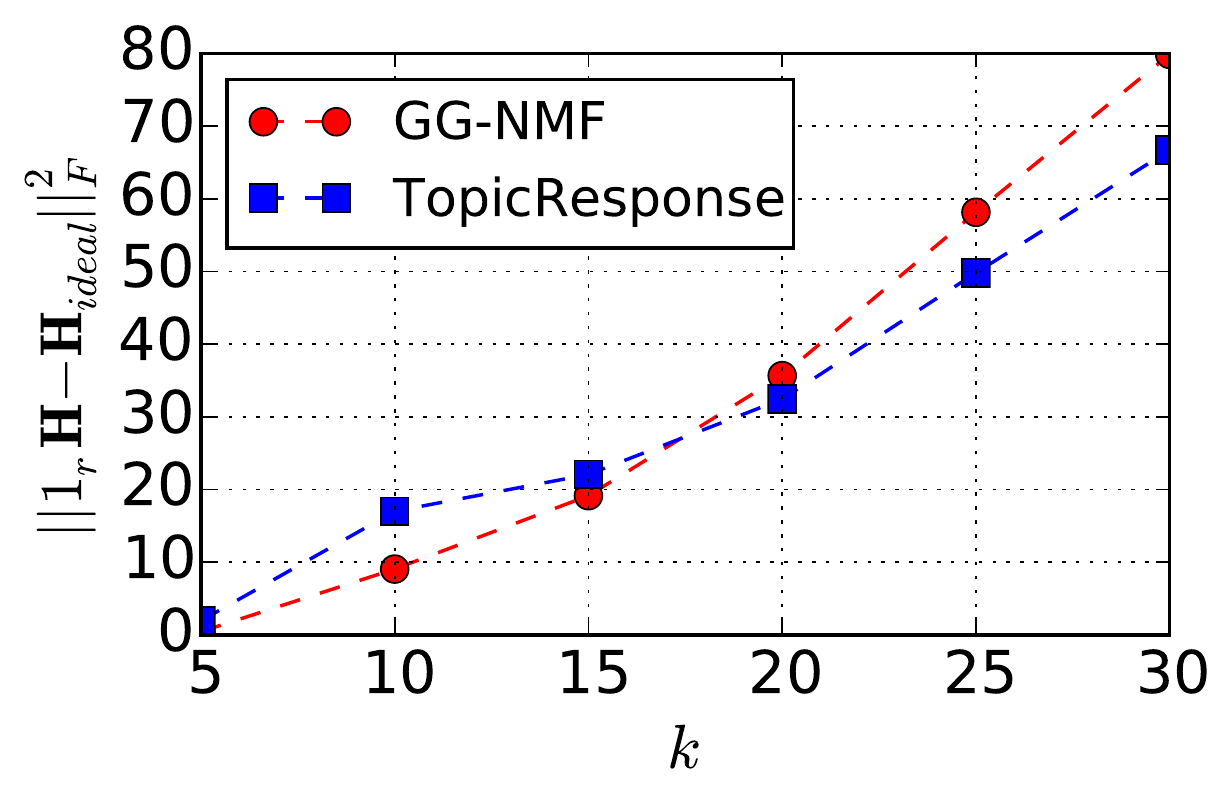}
            \caption{$\norm{\vbf{1}_r\vbf{H}-\vbf{H}_{ideal}}$ on EDU}
        \end{subfigure}%
        ~
        \begin{subfigure}[t]{0.32\textwidth}
            \centering
            \includegraphics[scale=0.30]{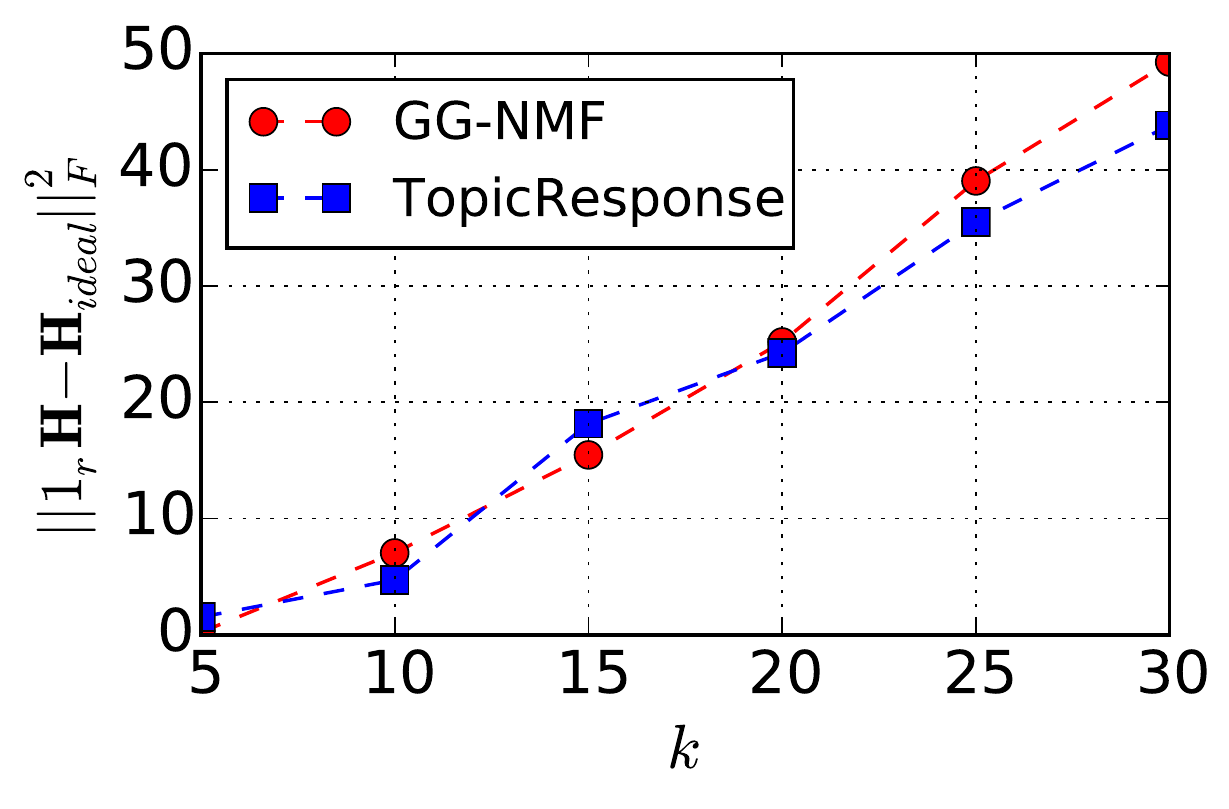}
            \caption{$\norm{\vbf{1}_r\vbf{H}-\vbf{H}_{ideal}}$ on ECON}
        \end{subfigure}
        ~
        \begin{subfigure}[t]{0.32\textwidth}
            \centering
            \includegraphics[scale=0.30]{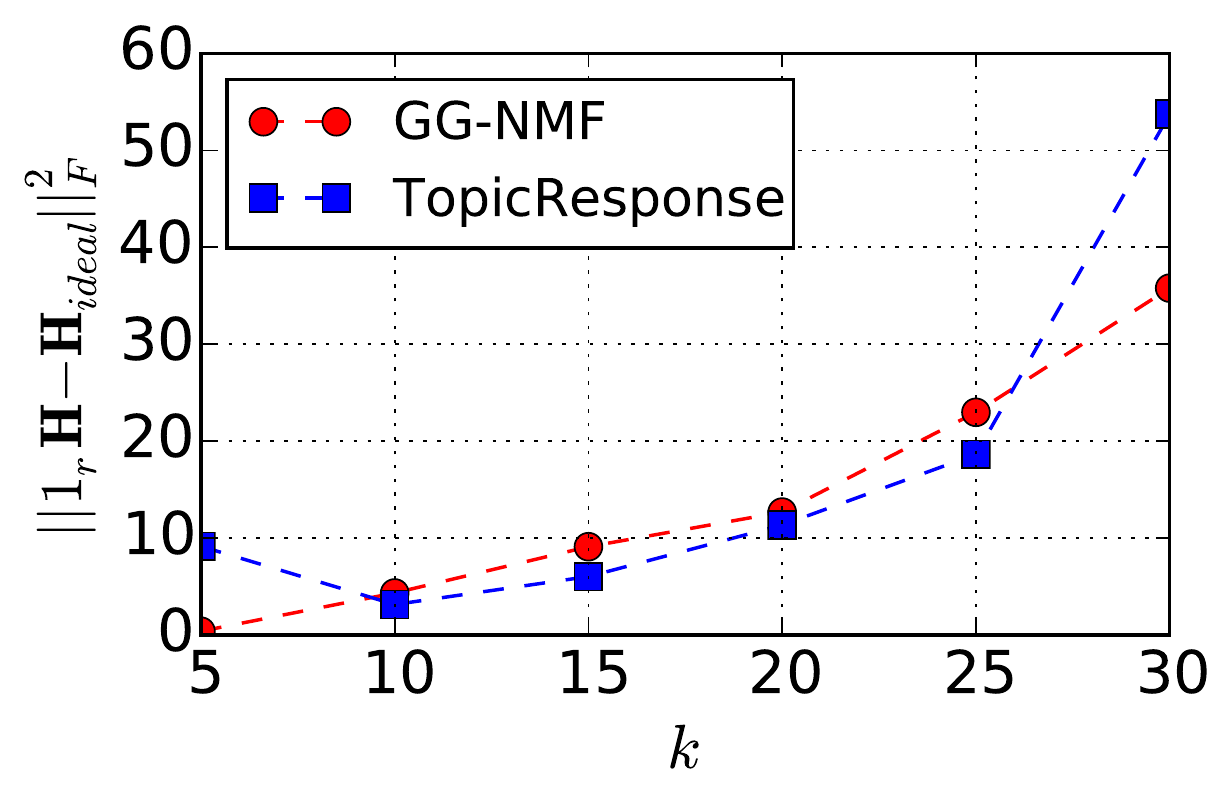}
            \caption{$\norm{\vbf{1}_r\vbf{H}-\vbf{H}_{ideal}}$ on OPT}
        \end{subfigure}
        ~
        \begin{subfigure}[t]{0.32\textwidth}
            \centering
            \includegraphics[scale=0.30]{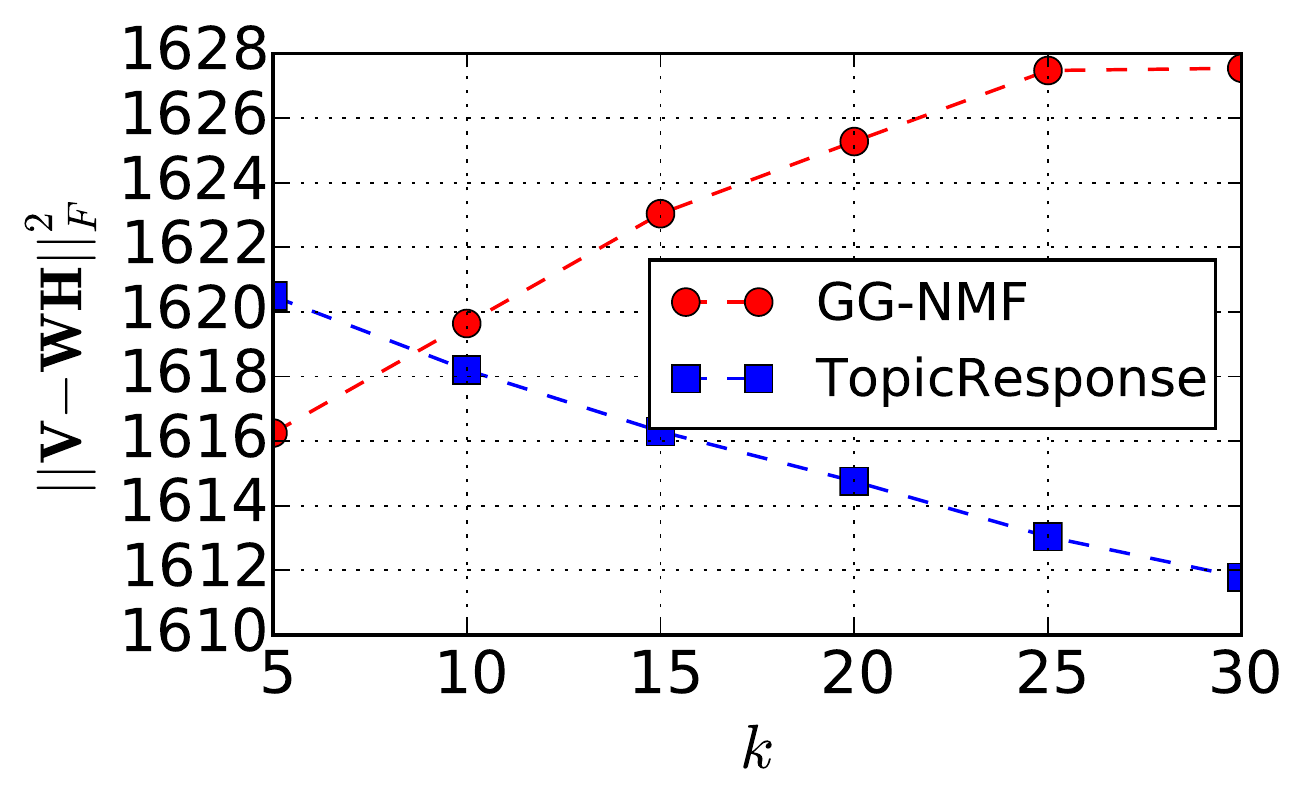}
            \caption{$\norm{\vbf{V}-\vbf{WH}}$ on EDU}
        \end{subfigure}%
        ~
        \begin{subfigure}[t]{0.32\textwidth}
            \centering
            \includegraphics[scale=0.30]{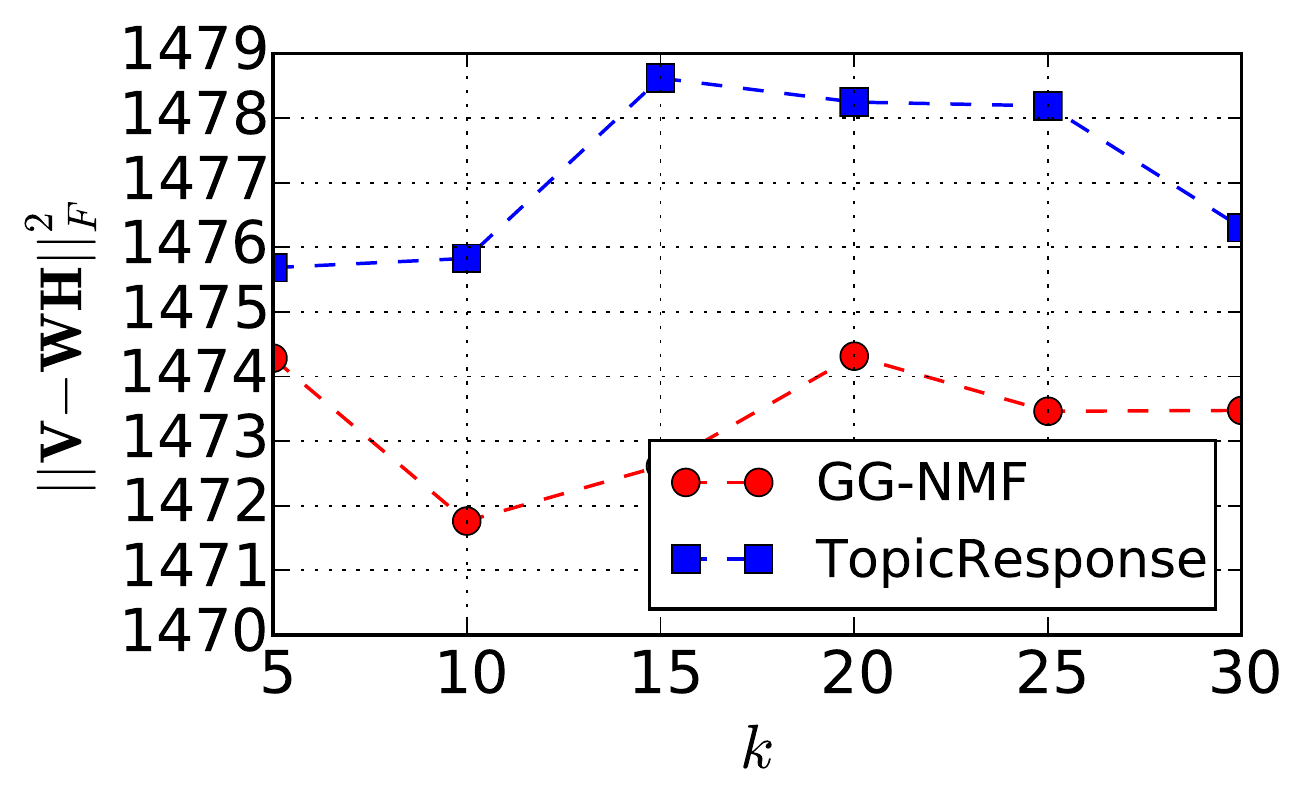}
            \caption{$\norm{\vbf{V}-\vbf{WH}}$ on ECON}
        \end{subfigure}
        ~
        \begin{subfigure}[t]{0.32\textwidth}
            \centering
            \includegraphics[scale=0.30]{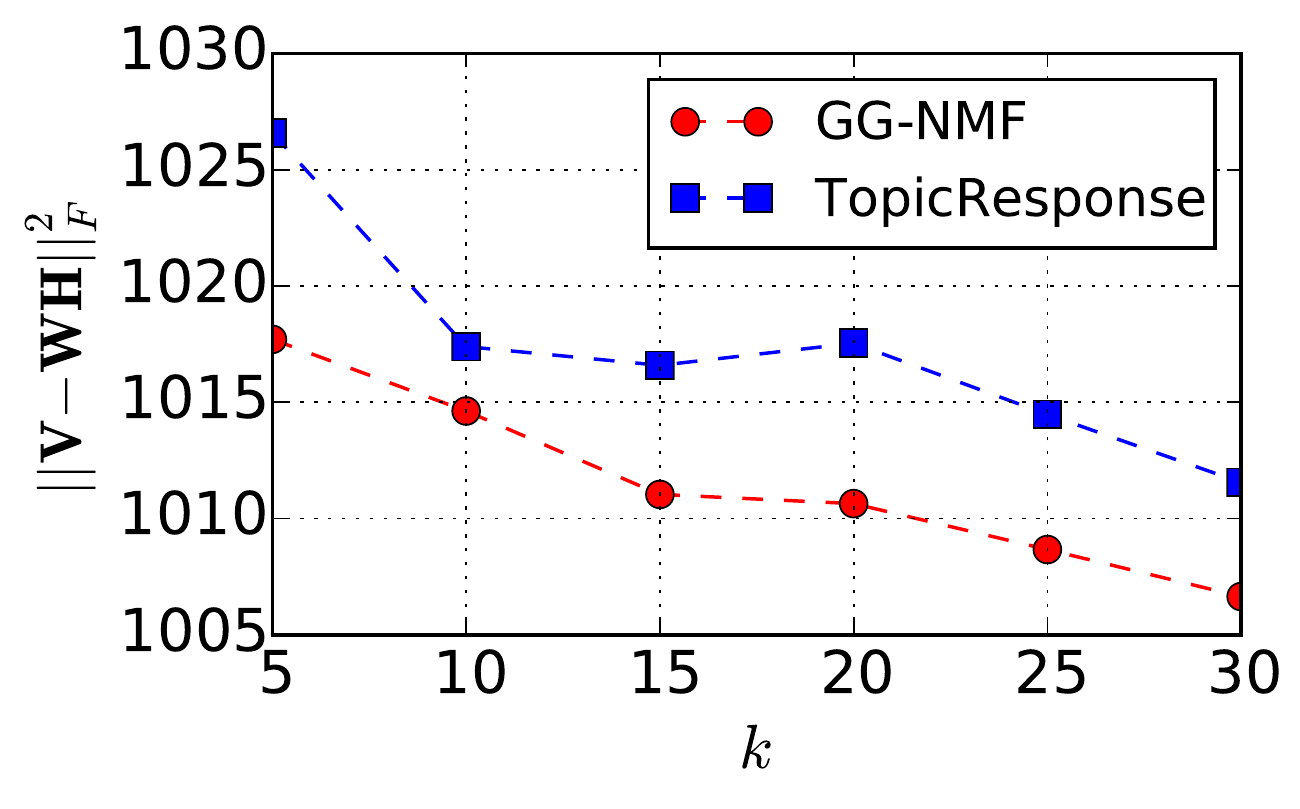}
            \caption{$\norm{\vbf{V}-\vbf{WH}}$ on OPT}
        \end{subfigure}
        ~
        \begin{subfigure}[t]{0.32\textwidth}
            \centering
            \includegraphics[scale=0.30]{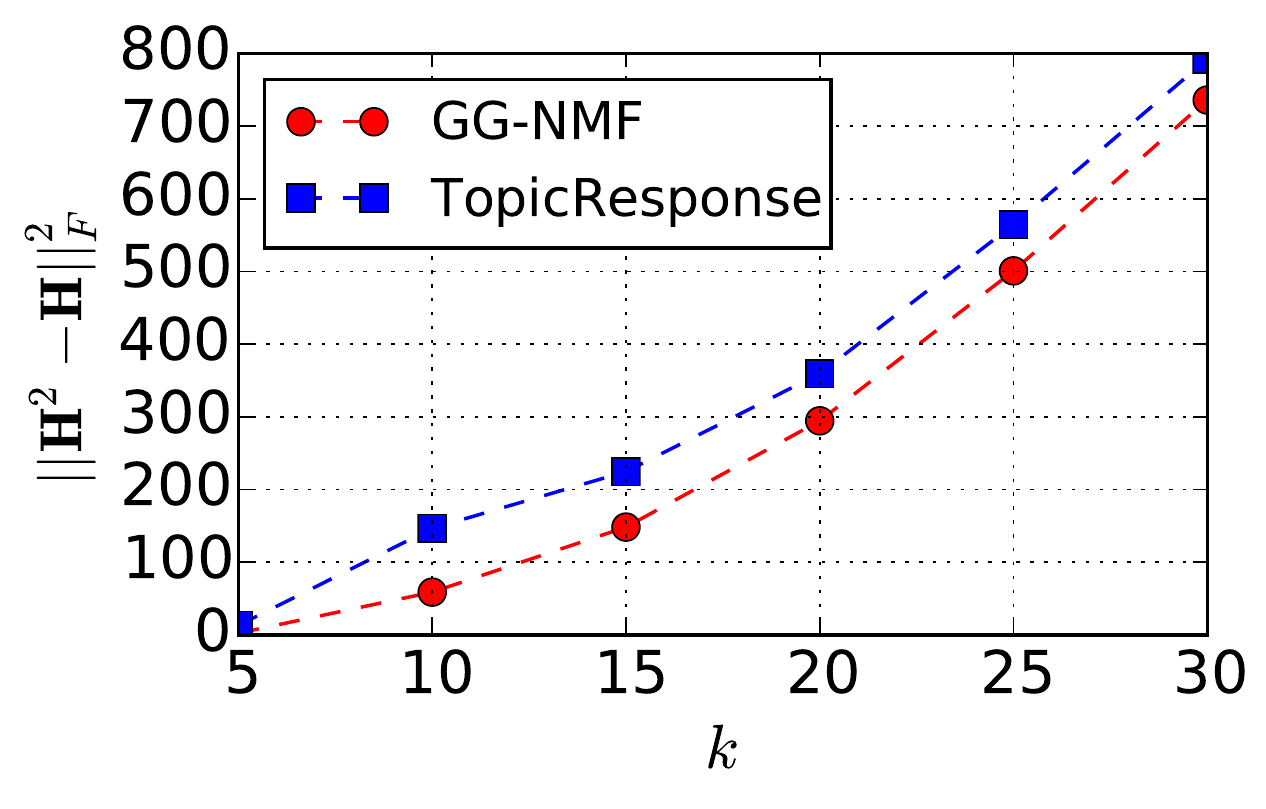}
            \caption{$\norm{\vbf{H}\circ\vbf{H}-\vbf{H}}$ on EDU}
        \end{subfigure}%
        ~
        \begin{subfigure}[t]{0.32\textwidth}
            \centering
            \includegraphics[scale=0.30]{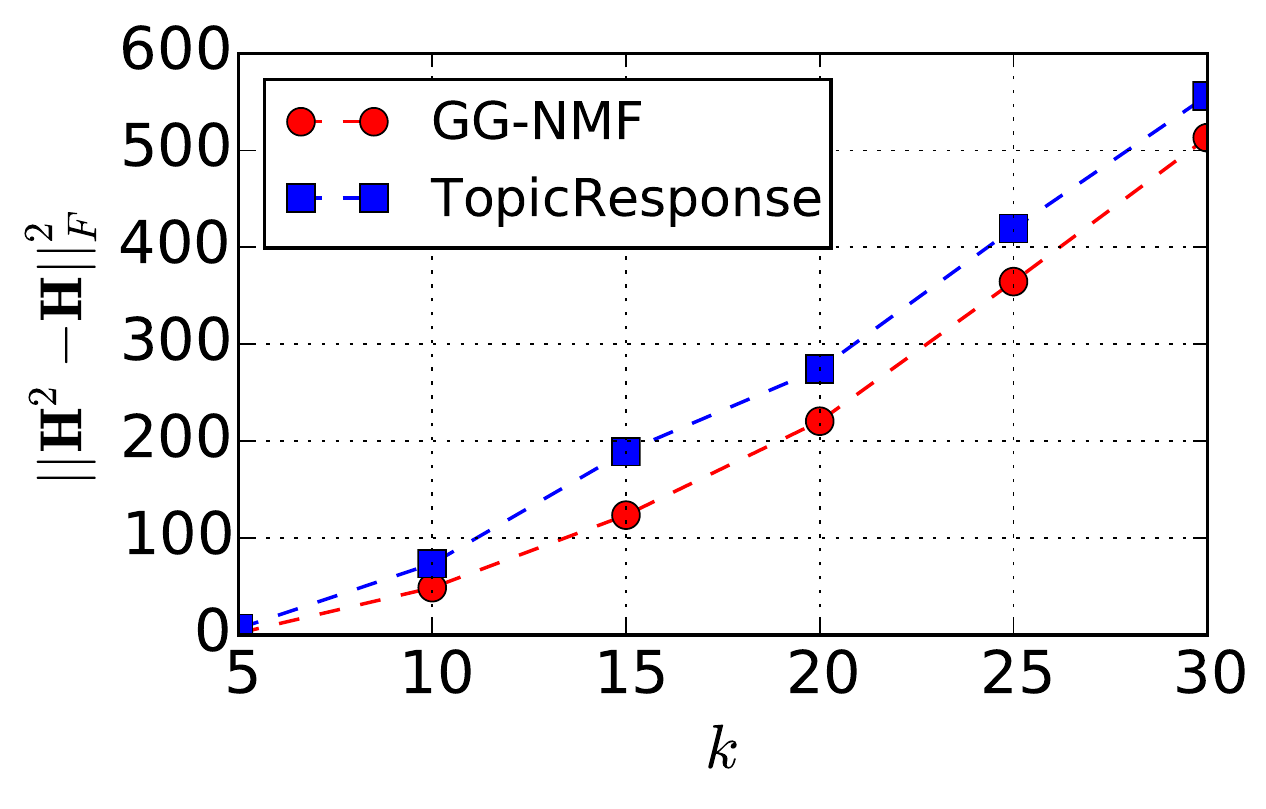}
            \caption{$\norm{\vbf{H}\circ\vbf{H}-\vbf{H}}$ on ECON}
        \end{subfigure}
        ~
        \begin{subfigure}[t]{0.32\textwidth}
            \centering
            \includegraphics[scale=0.30]{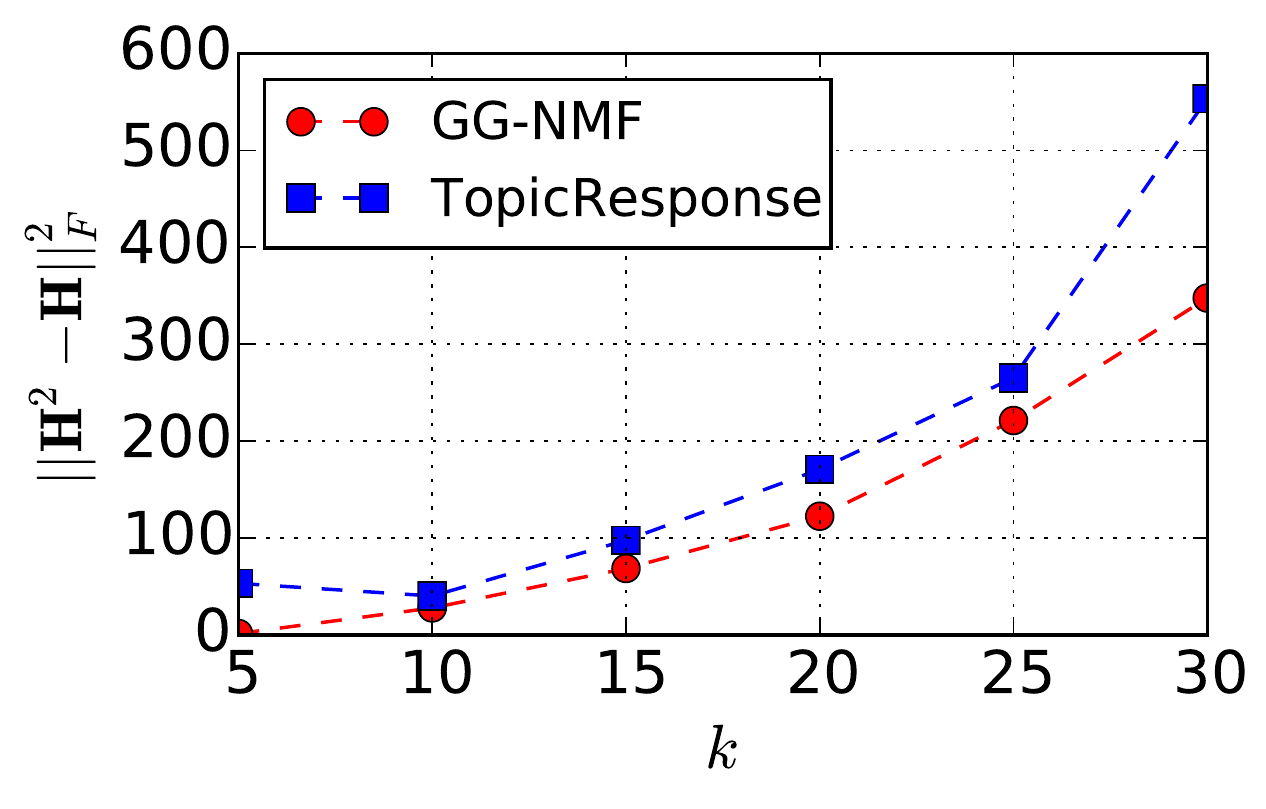}
            \caption{$\norm{\vbf{H}\circ\vbf{H}-\vbf{H}}$ on OPT}
        \end{subfigure}
        \caption{Performance of GG-NMF and TopicResponse with varying $k$.}
        \label{fig:lambdatopic}
    \end{figure}

\end{document}